\def\eqref#1{equation~\ref{#1}}
\def\1{\bm{1}}
\DeclareMathAlphabet{\mathsfit}{\encodingdefault}{\sfdefault}{m}{sl}
\SetMathAlphabet{\mathsfit}{bold}{\encodingdefault}{\sfdefault}{bx}{n}
\newtheorem{assumption}{\textbf{Assumption}}
\newtheorem{theorem}{\textbf{Theorem}}
\newtheorem{lemma}{\textbf{Lemma}}
\newtheorem{remark}{\textbf{Remark}}
\newcommand{\cmark}{\ding{51}}%
\newcommand{\xmark}{\ding{55}}%
\newcommand{\zhang}[1]{{\color{blue}{#1}}}
\title{Convergence Guarantees for RMSProp and Adam in Generalized-smooth Non-convex Optimization with Affine Noise Variance}
\author{\name Qi Zhang \email qzhan261@asu.edu \\
      \addr School of Electrical, Computer and Energy Engineering\\
      Arizona State University
      \AND
      \name Yi Zhou \email yi.zhou@tamu.edu \\
      \addr Department of Computer Science and Engineerin\\
       Texas {\normalfont A\&M} University
      \AND
      \name Shaofeng Zou \email zou@asu.edu\\
      \addr School of Electrical, Computer and Energy Engineering\\
      Arizona State University}
\begin{document}

\maketitle

\begin{abstract}
This paper provides the first tight convergence analyses for RMSProp and Adam for non-convex optimization under the most relaxed assumptions of coordinate-wise generalized smoothness and affine noise variance. 
RMSProp is firstly analyzed, which is a special case of Adam with adaptive learning rates but without first-order momentum. Specifically, to solve the challenges due to the dependence among adaptive update, unbounded gradient estimate and Lipschitz constant, we demonstrate that the first-order term in the descent lemma converges and its denominator is upper bounded by a function of gradient norm. Based on this result, we show that RMSProp with proper hyperparameters converges to an $\epsilon$-stationary point with an iteration complexity of $\mathcal O(\epsilon^{-4})$. We then generalize our analysis to Adam, where the additional challenge is due to a mismatch between the gradient and the first-order momentum. We develop a new upper bound on the first-order term in the descent lemma, which is also a function of the gradient norm. We show that Adam with proper hyperparameters converges to an $\epsilon$-stationary point with an iteration complexity of $\mathcal O(\epsilon^{-4})$. Our complexity results for both RMSProp and Adam match with the complexity lower bound established in \citet{arjevani2023lower}.
\end{abstract}

\section{Introduction}\label{sec:intro}
RMSProp \citep{hinton2012lecture} and Adam \citep{kingma2014adam} are among the most popular and powerful adaptive optimizers in training state-of-the-art machine learning models \citep{brock2018large,brown2020language,cutkosky2020momentum,dosovitskiy2020image}. 
RMSProp and Adam only
require first-order gradients with little memory requirement, and thus are efficient to use in practice. RMSProp is based on the idea of adaptive learning rates for each individual parameter, and  Adam
combines the benefits of RMSprop \citep{hinton2012lecture} and AdaGrad \citep{duchi2011adaptive}, which consists of two key components of adaptive learning rates and momentum. Despite their empirical success, theoretical understandings on the convergence and complexity, especially when optimizing non-convex loss functions, e.g., neural networks, still remain underdeveloped until very recently. 

Recently, there have been a series of works in examining the convergence and complexity of RMSProp and Adam for non-convex loss functions (see Table \ref{table:1} for a detailed review). However, these works do not completely explain the performance of RMSProp and Adam in training neural networks, as they rely on assumptions that may not necessarily hold. For example, \citet{zhang2019gradient} pointed out that neural networks are not $L$-smooth, and instead satisfy the generalized $(L_0,L_1)$-smoothness, where the gradient Lipschitz constant increases linearly with the gradient norm. Furthermore, many of these works assumed that the stochastic gradient has a bounded norm/variance, which however does not even hold for linear regression
\citep{wang2023convergence}, and instead a relaxed affine noise variance condition shall be used. 

In this paper, we derive the convergence guarantee and iteration complexity for RMSProp and Adam with coordinate-wise generalized $(L_0,L_1)$-smooth loss function and affine noise variance. To the best of our knowledge, this is one of the most relaxed assumption sets in the convergence analyses of RMSProp and Adam that best describe the training of some neural networks. 
We prove that RMSProp and Adam with proper hyperparameters converge to an $\epsilon$-stationary point with a complexity of $\mathcal O(\epsilon^{-4})$, which matches with the lower bound for first-order optimization in \citet{arjevani2023lower}.

\subsection{Related work}
\subsubsection{Relaxed Assumptions}
\textbf{Affine Noise Variance:}  In most of the studies on stochastic optimization, access to an unbiased estimate of the gradient with uniformly bounded variance is assumed \citep{nemirovskij1983problem,ghadimi2013stochastic,bubeck2015convex,foster2019complexity}. \citet{ghadimi2013stochastic} first showed that for $L$-smooth objectives, the SGD algorithm converges to a first-order $\epsilon$-stationary point with an iteration complexity of  $\mathcal O(\epsilon^{-4})$ if the stochastic gradient has uniformly bounded variance. Furthermore, \citet{arjevani2023lower} proved that for any first-order algorithm with uniformly bounded gradient variance, the iteration complexity of $\mathcal O(\epsilon^{-4})$ is optimal. For overparameterized neural networks, \citet{vaswani2019fast} considered another gradient noise assumption: the strong growth condition, where the upper bound on the second-order moment of  {the norm of gradient estimate} scales with the gradient square norm. Both the uniformly bounded variance and strong growth condition are special cases of the affine noise variance. It was demonstrated in \citet{bottou2018optimization} that for non-adaptive algorithms with affine noise variance, the optimal iteration complexity of $\mathcal O(\epsilon^{-4})$ can be achieved. The extension of affine noise variance assumption to adaptive algorithms is not straightforward and was studied in \citet{jin2021non,chen2023generalized,wang2022provable,wang2023convergence,shi2020rmsprop,faw2022power,faw2023beyond,hong2023high}. In this paper, we study two adaptive optimizers: RMSProp and Adam with affine noise variance.

\textbf{Generalized Smoothness:} In stochastic optimization, the $L$-smooth objectives are widely assumed \citep{ghadimi2013stochastic,ghadimi2016mini}. However, it was demonstrated in  \citet{zhang2019gradient} that the $L$-smoothness does not hold for some neural networks and polynomial functions with degree larger than $2$. Then, extensive experiments were conducted to verify that these functions satisfy the generalized ($L_0,L_1$)-smoothness condition, where the
gradient Lipschitz constant increases linearly with the gradient norm. Compared with $L$-smoothness, ($L_0,L_1$)-smoothness introduces extra second-order error terms, thus making the optimization problem hard to solve. The clipping algorithms for generalized smooth function were studied in \citet{zhang2019gradient,zhang2020improved}. However, they require the gradient norm to be bounded. A relaxed assumption on bounded gradient variance was studied in \citet{reisizadeh2023variance}, where the SPIDER algorithm was applied. Furthermore, \citet{chen2023generalized} showed that for generalized smooth objectives with affine noise variance, the SPIDER algorithm still finds a stationary point. Under the same assumption, \cite{jin2021non} provided the convergence rate for a normalized momentum algorithm. With extensive experiments, \citet{crawshaw2022robustness} showed that in the training of Transformer, the $(L_0,L_1)$-smoothness holds coordinate-wisely.  {This condition is widely used in coordinate-wise type
optimizers like generalized SignSGD and Adam. Note that for the original Adam, proving the expectation of gradient norm converges with $(L_0,L_1)$-smoothness remains an unresolved issue.}
In this paper, we consider functions that are coordinate-wise $(L_0,L_1)$-smooth. 

\subsubsection{Adaptive Optimizers}
Adaptive optimizers are widely used in deep learning due to their ability to adapt to changing data and conditions. Adagrad \citep{duchi2011adaptive} is the first adaptive algorithm, which calculates the accumulated sum of the past gradient norms and uses the reciprocal of its square root to scale the current gradient. Recently, \citet{wang2023convergence,faw2023beyond} studied Adagrad under generalized smoothness and affine noise variance conditions. However, the training of the above Adagrad algorithm may stop in advance since the accumulated sum does not shrink, and thus the learning rate can be extremely close to $0$. To overcome this problem, RMSProp \citep{hinton2012lecture} was proposed, where a momentum method is employed to replace the accumulated sum. Thus, the adaptive learning rate can increase or decrease. There is a rich literature on the convergence analyses of RMSProp \citep{zaheer2018adaptive,de2018convergence,shi2020rmsprop}. However, all of them focus on $L$-smooth objectives, and only \cite{shi2020rmsprop} considered the affine noise variance.
RMSProp is a special case of Adam, which only includes the second-order momentum, and is widely studied e.g., \citet{defossez2020simple,zou2019sufficient,chen2018convergence,zhang2022adam,wang2022provable,guo2021novel,hong2023high,li2023convergence,wang2023closing}. 
 {There are also two recent works \citep{hong2023high,wang2023closing} that studied Adam for $L$-smooth objectives with affine noise variance. However, their methods can not be generalized to $(L_0,L_1)$ smooth objectives due to the additional terms invalidating the key inequalities or requirements of bounded Lipchitz constant in their key Lemma.}
In \citet{li2023convergence},  {Adam for  $(L_0,L_1)$-smooth objectives with sub-Gaussian norm was studied, where the gradient estimate bias follows a sub-Gaussian distribution. Under this assumption, based on the gradient estimate, the real gradient belongs to a bounded set with high probability, which converts the unbounded Lipschitz constant to a bounded one. However, the bounded Lipschitz constant is quite large, which leads to small step sizes and slow practical convergence.}
 {Adam on $(L_0,L_1)$-smoothness with affine noise variance (for the special case of finite sum problems)  were in \citet{wang2022provable}. However, they only showed that Adam converges to the neighborhood of a stationary point with a constant learning rate.}
More details can be found in Table \ref{table:1}.
\begin{table}[h!]\small
\centering
\setlength{\tabcolsep}{1.0mm}{
\begin{tabular}{||c c c c c c c||} 
 \hline
 Method& Smoothness\footnotemark[1] &Algorithm&Convergence\footnotemark[2]&Assumption\footnotemark[3] &Batch size &Complexity \\ [0.3ex] 
 \hline\hline
 \citet{de2018convergence} & (LS) & RMSProp&\cmark & (BN)\footnotemark[4] & $\mathcal O(1)$&$\mathcal O({\epsilon}^{-4})$\\
 \citet{zaheer2018adaptive} & (LS)  & RMSProp & \cmark & (BN)&$\mathcal O({\epsilon}^{-2})$ &$\mathcal O({\epsilon}^{-4})$ \\ 
  \citet{shi2020rmsprop} & (LS)  & RMSProp & \cmark & (FSAN)&- &- \\ 
 \citet{defossez2020simple} & (LS)  & Adam&\cmark &(BN)&$\mathcal O(1)$ &$\mathcal O({\epsilon}^{-4})$\\
\citet{zou2019sufficient} & (LS)  & Adam&\cmark & (BSM)& $\mathcal O(1)$&$\mathcal O({\epsilon}^{-4})$\\
\citet{chen2018convergence} & (LS)  & Adam&\xmark &  (BN) &-&-\\
\citet{zhang2022adam}  & (LS) & Adam&\xmark & (FSAN) & -&-\\
\citet{wang2022provable} & (FSGS) & Adam&\xmark & (FSAN) & -&-\\
\citet{guo2021novel} & (LS) & Adam&\cmark & (AN)\footnotemark[5] & $\mathcal O(1)$&$\mathcal O({\epsilon}^{-4})$\\
\citet{hong2023high}  & (LS)  & Adam&\cmark & (CAN) & $\mathcal O(1)$&$ \tilde{\mathcal O}({\epsilon}^{-4})$\\
\citet{wang2023closing}  & (LS)  & Adam&\cmark & (CAN) & $\mathcal O(1)$&$\mathcal O({\epsilon}^{-4})$\\
\citet{li2023convergence}  & (GS) & Adam&\cmark & (SGN) & $\mathcal O(1)$&$\mathcal O({\epsilon}^{-4})\footnotemark[6]$\\
\citet{wang2024convergence}  & (GS) & Scalar Adam&\cmark & (AN) & $\mathcal O(1)$&$\mathcal O({\epsilon}^{-4})$\\
 Our method & (CWGS) & Adam &\cmark& (CAN)& $\mathcal O(1)$& $\mathcal O(\epsilon^{-4})$\\
 \hline
\end{tabular}}

\caption{{Comparison for existing} RMSProp and Adam analyses. 
For $\nabla f(\boldsymbol x)$ with its estimate $\boldsymbol g$, the bounded norm assumption is $\|\boldsymbol g\|\le G$ (almost surely), where $G$ is some positive constant. The bounded second-order moment assumption is that $\mathbb E[\|\boldsymbol g\|^2]\le G^2$. The bounded sub-Gaussian norm assumption  is that $\|\boldsymbol g-\nabla f(\boldsymbol 
 x)\|$ follows a sub-Gaussion distribution, which is weaker than the bounded norm assumption but stronger than the bounded variance assumption.  {The batch size refers to the number of samples necessary to compute the gradient estimate $\boldsymbol g$ and complexity denotes the total computational effort required to achieve an $\epsilon$-stationary point.} Explanation on the upper footmarks: $1:$ (LS) indicates the standard $L$-smoothness, (GS) denotes the generalized $(L_0, L_1)$-smoothness, (FSGS) denotes the finite sum $(L_0, L_1)$-smoothness and (CWGS) indicates the coordinate-wise $(L_0, L_1)$-smoothness. $2:$ \xmark \ indicates the algorithm only converges to the neighborhood of a stationary point, whose radius can not be made small. $3:$ (BN) indicates Bounded Norm, (FSAN) indicates Finite Sum Affine Noise, (BSM) indicates Bounded Second-order Moment, (AN) indicates Affine Noise, (CGN) indicates Sub-Gaussian Norm.
$4:$ \citet{de2018convergence} also requires the signs of the gradients to be the same across batches. $5:$ \citet{guo2021novel} also requires the upper bound on the gradient norm. $6:$ A variance-reduced method is also investigated in \citet{li2023convergence}, and the complexity is $\mathcal O(\epsilon^{-3})$.}
\label{table:1}
\end{table}

When preparing this work, we have observed a concurrent work by \citet{wang2024convergence}, which studies a scalar—or ”norm”—version of Adam. 
In this paper, we study the
per-coordinate version of Adam with the practical and challenging coordinate-wise $(L_0,L_1)$-smooth objectives (see Algorithm \ref{alg:1} for more details).

\section{Preliminaries}\label{sec:problemmodel}
Let  $f:\mathbb R^d \to \mathbb R$ be a differentiable non-convex loss function. For a positive integer $d$, let $[d]$ denote the set $\{1,2,..., d\}$. Let $\boldsymbol  x\in \mathbb R^d$ be an optimization variable.
Our goal is to minimize the objective function $f(x)$:
$$\min_{\boldsymbol x} f(\boldsymbol x).$$  For a differentiable function $f$, a point $\boldsymbol x\in \mathbb R^d$ is called a first-order $\epsilon$-stationary point if  $\|\nabla f(\boldsymbol x)\|\le \epsilon$. Denote by $\boldsymbol x_t\in \mathbb R^d$ the optimization variable at the $t$-th iteration and we have access to an estimate $\boldsymbol g_t$ of $\nabla f(\boldsymbol x_t)$. Define $\mathcal F_t:=\sigma(\boldsymbol g_1,...,\boldsymbol g_{t-1})$ is the sigma field of the stochastic gradients up to $t-1$. We focus on the Adam algorithm shown in Algorithm \ref{alg:1}, where $\odot$ denotes the Hadamard product. For any $i\in [d]$, $\partial_i f(\boldsymbol x_t), \boldsymbol g_{t,i}, \boldsymbol m_{t,i}$ and $\boldsymbol v_{t,i}$ are the $i$-th element of $\nabla f(\boldsymbol x_t), \boldsymbol g_{t}, \boldsymbol m_{t}$ and $\boldsymbol v_{t}$, respectively. 


The Adam algorithm is provided in Algorithm \ref{alg:1}. Compared with the original Adam, we make a minor change in the adaptive stepsize from  $\boldsymbol \eta_t=  \frac{\eta}{\sqrt{{\boldsymbol v_t} }+ \zeta }$ to  $ \frac{\eta}{\sqrt{{\boldsymbol v_t} + \zeta}}$.
This minor change does not influence the adaptivity of the algorithm but makes the analysis much easier. 

\subsection{Technical Assumptions}
Throughout this paper, we make the following assumptions.
\begin{assumption}\label{assump:lowerbound}
$f(\boldsymbol x)$ is bounded from below such that $\inf_{\boldsymbol x} f(\boldsymbol x)>-\infty$.
\end{assumption}
\begin{assumption}[Coordinate-wise affine noise variance \citep{wang2023closing,hong2023high}]\label{assump:variance}
 We have access to an unbiased gradient estimate $\boldsymbol g_t$ such that $\mathbb E[\boldsymbol  g_t|\mathcal F_t]=\nabla f(\boldsymbol x_t)$ and for any $i\in [d]$, $\mathbb E[\boldsymbol g_{t,i}^2|\mathcal F_t]\le D_0+D_1 (\partial_i f(\boldsymbol x_{t}))^2$, where $D_0,D_1\ge 0$ are some constants.
\end{assumption}

As discussed in \citet{hong2023high}, this assumption allows the magnitude of noise to scale with the corresponding gradient coordinate. Many widely used noise assumptions are special cases of this affine noise variance assumption. For example, when $D_1=0$, it is the bounded second-order moment assumption in \citet{zou2019sufficient} and when $D_1=1$, it is equivalent to coordinate-wise bounded gradient variance. However, as pointed out in \citet{wang2023convergence}, these two assumptions of bounded second-order moment and bounded gradient variance do not even hold for linear regression problems.  {For example, let $f(\omega)=\mathbb E_{z\sim \mathcal D}(\langle z,  \omega\rangle)^2= \omega^2$, where $z$ is a sample and $\mathcal D$ is a standard Gaussian distribution over $\mathbb R$. It can be shown that $g=2z^2\omega$ is an unbiased estimate of $\nabla f(\omega)$. However, both the variance and second-order moment of $g$ is in the order of $\mathcal O(\omega^2) $ which are unbounded when $\omega \to \infty$.  } When $D_0=0$, it is called the “strong growth condition” \citep{vaswani2019fast},  {which is shown to be reasonable for overparameterized neural networks that can interpolate all data points \citep{vaswani2019fast}.} Under Assumption \ref{assump:variance},  the norm of the gradient increases with the norm of the true gradient. This is important for model parameters that are multiplicatively perturbed by noise, e.g., multilayer network \citep{faw2022power}. In this paper, we study the coordinate-wise affine noise variance assumption, which was also used in \citet{hong2023high,wang2023closing}.

Though the $L$-smoothness assumption is widely used in optimization studies, recently it has been observed that in the training of neural networks,  {such as LSTMs \citep{zhang2019gradient}, ResNets\citep{zhang2019gradient} and Transformers \citep{crawshaw2022robustness},} this assumption does not hold. Instead, it is numerically verified that the following generalized smoothness assumption better models the training of neural networks \citep{zhang2019gradient}: $\|\nabla f(\boldsymbol x)-\nabla f(\boldsymbol y)\|\le (L_0+L_1\|\nabla f(\boldsymbol x)\|)\|\boldsymbol x-\boldsymbol y\|$ for some positive $L_0$ and $L_1$.  This assumption is widely studied in the literature, e.g., \citet{jin2021non,chen2023generalized,li2023convergence,wang2022provable,wang2023convergence,faw2023beyond}. Compared with $L$-smooth functions,  for the generalized smooth functions, the Lipschitz constant scales with the true gradient norm thus may not be bounded. For the training of Transformer models, \citet{crawshaw2022robustness} finds the following coordinate-wise $(L_0,L_1)$-smoothness, which provides a more accurate characterization of the objective. It is generalized from the coordinate-wise $L$-smoothness \citep{richtarik2014iteration,khaled2020better,bernstein2018signsgd}. In this paper, we focus on coordinate-wise $(L_0,L_1)$-smooth functions as defined below:
\begin{assumption}[Coordinate-wise $(L_0,L_1)$-smoothness]\label{assump:generalsmooth}
A function $f$ is coordinate-wise $(L_0,L_1)$-smooth if for any $\boldsymbol x,\boldsymbol y\in \mathbb R^d$ and $i\in[d]$,
\begin{flalign}
    |\partial_i f(\boldsymbol x)-\partial_i f(\boldsymbol y)|\le \left(\frac{L_0}{\sqrt{d}}+L_1|\partial_i f(\boldsymbol x)|\right)\|\boldsymbol x-\boldsymbol y\|.
\end{flalign}
\end{assumption}
The training of Adam enjoys adaptive learning rates for each parameter individually due to the coordinate-wise update process.  Moreover, extensive experiments in \citet{crawshaw2022robustness} show that the $L_0$ and $L_1$ for each coordinate vary a lot. Therefore, it is more accurate to leverage the coordinate-wise  $(L_0,L_1)$-smoothness.  In this paper, for the sake of simplicity and coherence with the coordinate-wise affine noise variance assumption, we assume the $L_0$ and $L_1$ to be identical for each coordinate. Our results can be easily adapted to the case with distinct $L_0$ and $L_1$ for different coordinates.

\subsection{Challenges and Insights }

Our theoretical analyses address several major challenges: (i) dependence between stepsize and gradient,  (ii) potential unbounded gradients, (iii) mismatch between gradient and first-order momentum, and (iv) additional bias terms due to affine variance and coordinate-wise $(L_0,L_1)$-smoothness. Prior research circumvented most of these challenges by introducing extra assumptions, whereas we provide several new insights and show that these assumptions may not be needed.

\begin{algorithm}
\caption{Adam}
\begin{algorithmic}
  \State Initialize parameters: $\boldsymbol x_1$, learning rates $\eta$, $\beta_1,\beta_2$, $\zeta$, Iteration $T$
  \State Initialize first and second moment estimates:  $\boldsymbol v_0 \in \mathbb R^{+},\boldsymbol m_0=0$
  \State Initialize time step: $t = 1$

  \While{$t\le T$}
    
    \State Generate stochastic gradient: $\boldsymbol g_t $
    \State Update first-order momentum estimate: $\boldsymbol m_t \gets \beta_1  \boldsymbol m_{t-1} + (1 - \beta_1)  \boldsymbol g_t$
    \State Update second-order momentum estimate: $\boldsymbol v_t \gets \beta_2  \boldsymbol v_{t-1} + (1 - \beta_2) \boldsymbol g_t \odot \boldsymbol g_t$
    \State Update parameters: $\boldsymbol x_{t+1} \gets\boldsymbol  x_{t} - \eta  \frac{1}{\sqrt{{\boldsymbol v_t} + \zeta }} \odot \boldsymbol m_t$
    \State $t \gets t + 1$
  \EndWhile
\end{algorithmic}\label{alg:1}
\end{algorithm}
In this paper, we have an access to an unbiased estimate $\boldsymbol g$ of $\nabla f(\boldsymbol x)$ such that $\mathbb E[\boldsymbol g| \boldsymbol x]=\nabla f(\boldsymbol x)$.
Consider the Adam algorithm in Algorithm \ref{alg:1}, which reduces to RMSProp if $\beta_1=0$.
For coordinate-wise $(L_0,L_1)$-smooth objective functions, we have the following descent inequality (Lemma 1 in \citet{crawshaw2022robustness}):
\begin{flalign}
       &\underbrace{\mathbb E\left[\left\langle \nabla f(\boldsymbol x_t), \boldsymbol x_t-\boldsymbol x_{t+1} \right\rangle|\mathcal F_t\right]}_{\text{first-order}}
    \le   f(\boldsymbol x_t)-\mathbb E[f(\boldsymbol x_{t+1})|\mathcal F_t]\nonumber\\
&+\underbrace{\sum_{i=1}^d\frac{L_0}{2\sqrt{d}}\mathbb E[\|\boldsymbol x_{t+1}-\boldsymbol x_t\||\boldsymbol x_{t+1,i}-\boldsymbol x_{t,i}||\mathcal F_t]}_{\text{second-order}}  \label{eq:generalsmooth}  +\underbrace{\sum_{i=1}^d\frac{L_1|\partial_i f(\boldsymbol x_t)|}{2}\mathbb E[\|\boldsymbol x_{t+1}-\boldsymbol x_t\||\boldsymbol x_{t+1,i}-\boldsymbol x_{t,i}||\mathcal F_t]}_{\text{additional term}},
\end{flalign}
where the last term is the additional term due to the $(L_0,L_1)$-smooth assumption.

\textbf{Challenge 1: dependence between stepsize and gradient.} 
We use the RMSProp optimizer to explain our technical novelty. The challenge is the same for Adam. For RMSProp the optimized parameter $\boldsymbol x$ is updated: $\boldsymbol x_{t+1}=\boldsymbol x_t-\frac{\eta \boldsymbol g_t}{\sqrt{\boldsymbol v_t}+\zeta}$, where the adaptive stepsize $\boldsymbol \eta_t=\frac{\eta}{\sqrt{\boldsymbol v_t}+\zeta}$ depends on the current gradient estimate $\boldsymbol g_t$, which makes it hard to bound the conditional expectation of the first-order term in \eqref{eq:generalsmooth}. To address this challenge, studies on Adagrad \citep{ward2020adagrad,defossez2020simple,faw2022power} and studies on RMSProp \citep{zaheer2018adaptive} propose a surrogate $\tilde{\boldsymbol v}_t$ of $\boldsymbol v_t$ which is independent of $\boldsymbol g_t$, and then the first-order term is divided into two parts: the first-order.a term $\mathbb E\left[\left\langle \nabla f(\boldsymbol x_t), \frac{-\eta \boldsymbol g_t}{\sqrt{\tilde{\boldsymbol v}_t}+\zeta} \right\rangle\Big |\mathcal F_t\right]$ and first-order.b term $\mathbb E\left[\left\langle \nabla f(\boldsymbol x_t), \frac{-\eta \boldsymbol g_t}{\sqrt{\boldsymbol v_t}+\zeta} +\frac{\eta \boldsymbol g_t}{\sqrt{\tilde{\boldsymbol v}_t}+\zeta}\right\rangle\Big |\mathcal F_t\right]$.
The main challenge lies in the first-order.b term (surrogate error). In \citet{zaheer2018adaptive}, this term is bounded based on the assumption of bounded gradient norm, which does not hold in this paper.

\textit{Insight 1: reduce surrogate error.} { We choose the same surrogate $\tilde{\boldsymbol v}_t=\beta_2 \boldsymbol v_{t-1}$ as the one in \citet{zaheer2018adaptive}, which requires that the gradient norm is bounded. To remove this assumption, we change the adaptive stepsize from $\frac{\eta}{\sqrt{\boldsymbol v_t}+\zeta}$ to $\frac{\eta}{\sqrt{\boldsymbol v_t+\zeta}}$ (details seen in Remark \ref{remark:1}). For Adagrag \citep{wang2023convergence}, due to the non-increasing stepsize, in the $t$-th iteration, the error term can be written as $\mathbb E [\phi(\boldsymbol v_{t-1})-\phi(\boldsymbol v_{t})|\mathcal F_t]$ where $\phi $ is some function. Thus, these terms cancel out with each other by taking the sum from $t=1$ to $T$. However, this method does not work for Adam due to the employment of second-order momentum. With our modified adaptive stepsize, we can also show the sum of error terms is bounded (details can be found in Lemma \ref{lemma:1}). We notice that a different surrogate is selected in \citet{wang2023closing}, which aims to bound the surrogate error for $L$-smooth objectives while our surrogate is tailored to the specific requirements of our generalized smooth objectives.}

\textbf{Challenge 2: unbounded gradient.}  
Previous works, e.g., \citet{de2018convergence,defossez2020simple,zaheer2018adaptive} 
assumed that the gradient norm is bounded, based on which, they proved the convergence. 
However, with affine gradient noise, the gradient may be infinite, and thus those approaches do not apply.

\textit{Insight 2: recursive bounding technique.} 
{For RMSProp, 
with bounded surrogate error in Lemma \ref{lemma:1}, we first show that $\mathbb E\left[\frac{1}{T}\sum_{t=1}^T\frac{\|\nabla f(\boldsymbol x_t)\|^2}{{\sqrt{\|\tilde{\boldsymbol v}_t\|+\zeta}}}\right]$ is of the order of $\mathcal{O}{(\epsilon^2)}$. 
If the gradient norm is upper bounded, then $\tilde{\boldsymbol v}_t$ is bounded, and the convergence result directly follows. However, under affine gradient noise, the gradient norm may not be bounded. For generalized smooth objectives, we develop a novel approach to bound $\mathbb E\left[\frac{1}{T}\sum_{t=1}^T \sqrt{\|\tilde{\boldsymbol v}_t\|+\zeta}\right]$ using $\mathbb E\left[{\sum_{t=1}^T\|\nabla f(\boldsymbol x_t)\|}\right]$  instead of a constant (see Lemma \ref{lemma:2} for details). 
Applying H\"older’s inequality \citep{hardy1952inequalities} we will obtain the convergence result.
The complexity result matches with the lower bound in \citet{arjevani2023lower}. A similar method is applied in \citet{wang2023closing}, which focuses on the $L$-smooth objectives and bound $\mathbb E\left[{\sum_{t=1}^T\|\nabla f(\boldsymbol x_t)\|}\right]$ by a constant.}

\textbf{Challenge 3: mismatch between gradient and first-order momentum.} 
Compared with RMSProp, Adam employs the first-order momentum $\boldsymbol m_t$. The momentum $\boldsymbol m_t$ is dependent on the surrogate stepsize $\frac{\eta}{\sqrt{\tilde{\boldsymbol v}_t+\zeta}}$. Moreover, the momentum $\boldsymbol m_t$ is a biased estimate of the current true gradient. Both the above challenges make it hard to theoretically characterize the convergence rate of Adam. {These mismatch challenges also accur in the analysis for SGDM \citep{liu2020improved} and Adam \citep{wang2023closing},  where a potential function of $f(\boldsymbol u_t)$ with $\boldsymbol u_t=\frac{\boldsymbol x_t-\boldsymbol x_{t-1}{\beta_1}/{\sqrt{\beta_2}}}{1-{\beta_1}/{\sqrt{\beta_2}}}$ is studied. It can be shown that $\boldsymbol u_{t+1}-\boldsymbol u_t$ is close to a function of $\frac{\boldsymbol g_t}{\sqrt{\tilde{\boldsymbol v}_t+\zeta}}$, which is much easier to analyze compared with $\frac{\boldsymbol m_t}{\sqrt{\tilde{\boldsymbol v}_t+\zeta}}$. However, both of them are limited to $L$-smooth objectives.}

\textit{Insight 3:  bounding first-order term using $\mathbb E\big[\frac{1}{T}\sum_{t=1}^T\|\nabla f(\boldsymbol x_t)\|\big]$.} {In this paper, we choose the same potential function but different surrogate in \citet{wang2023closing}. Using the descent lemma of $ f(\boldsymbol{u}_t)$, we show that the first-order term is also bounded by a $\epsilon^2$-level constant plus a function of $\mathbb E\left[\frac{1}{T}\sum_{t=1}^T\|\nabla f(\boldsymbol x_t)\|\right]$. Compared to RMSProp, this additional function is introduced due to the bias of $\boldsymbol m_t$. Then via  H\"older’s inequality, we show Adam converges as fast as RMSProp.  } 

\textbf{Challenge 4: additional error terms due to affine variance and $(L_0,L_1)$-smoothness.} Compared with $L$-smooth objectives, in the analysis for RMSProp with $(L_0,L_1)$-smooth objectives, there is an additional second-order error term: $\sum_{i=1}^d\frac{L_1|\partial_i f(\boldsymbol x_t)|}{2}[\|\boldsymbol x_{t+1}-\boldsymbol x_t\||\boldsymbol x_{t+1,i}-\boldsymbol x_{t,i}|]$, which is hard to bound since $|\partial_i f(\boldsymbol x_t)|$ may be unbounded. Moreover, for RMSProp, since $\mathbb E[|\boldsymbol x_{t+1,i}-\boldsymbol x_{t,i}||\mathcal F_t]\le\mathbb E\left[\frac{\eta |\boldsymbol g_{t,i}|}{\sqrt{\tilde{\boldsymbol v}_{t,i}+\zeta}}\Big |\mathcal F_t\right]$ and $\boldsymbol g_{t,i}$ is independent of $\tilde{\boldsymbol v}_{t,i}$ given $\mathcal F_t$, the affine noise variance assumption can be leveraged to bound the second-order error term directly. Nevertheless, for Adam due to the dependence between  $\boldsymbol m_{t,i}$ and $\tilde{\boldsymbol v}_{t,i}$, the above approach cannot be applied directly.

\textit{Insight 4: bounding additional term by function of first-order term.} 
For RMSProp, we can show that $\|\boldsymbol x_{t+1}-\boldsymbol x_t\|\le \frac{\eta\sqrt{d}}{\sqrt{1-\beta_2}}$ and $|\partial_i f(\boldsymbol x_t)||\boldsymbol x_{t+1,i}-\boldsymbol x_{t,i}|\le \frac{|\partial_i f(\boldsymbol x_{t})|^2+\eta^2\boldsymbol g_{t,i}^2}{2\sqrt{\tilde{\boldsymbol v}_{t,i}+\zeta}}$. According to the affine noise assumption, we have that $\mathbb E[\boldsymbol g_{t,i}^2|\mathcal F_t]$ is upper bounded by a linear function of $(\partial_i f(\boldsymbol x_t))^2$, thus we can bound the additional error term. However, we cannot directly apply this method to Adam since $\mathbb E[\boldsymbol m_{t,i}^2|\mathcal F_t]$ is hard to bound. Instead, we provide a new upper bound on $\sum_{t=1}^T\frac{\boldsymbol m_{t,i}^2}{\sqrt{\boldsymbol v_{t,i}}}$ using the gradient norm (details can be found in Lemma \ref{lemma:6}). 

\section{Convergence Analysis of RMSProp}\label{sec:lsmmoth}
To provide a fundamental understanding of Adam, in this section, we focus on RMSProp which consists of the design of adaptive learning rates for each individual parameter in Adam. 

For RMSProp, the main challenges come from the dependence between stepsize and gradient, potential unbounded gradients due to the affine noise variance, and additional error terms due to $(L_0,L_1)$-smoothness.
The analysis can be extended to general Adam, which requires additional efforts to handle the first-order momentum. Define $c=\sqrt{\zeta}+d\sqrt{D_0+\|\boldsymbol v_0\|}$. We then present our results of RMSProp in the following theorem.
\begin{theorem}[Informal]\label{theorem:1}
     Let Assumptions \ref{assump:lowerbound}, \ref{assump:variance} and \ref{assump:generalsmooth} hold. Let $1-\beta_2= \mathcal O(\epsilon^{2})$, $\eta = \mathcal O(\epsilon^{2})$,  and $T= \mathcal O (\epsilon^{-4})$. For $\epsilon$ such that $\epsilon\le \frac{\sqrt{5dD_0}}{\sqrt{D_1}\sqrt[4]{\zeta}}$, we have that
 \begin{flalign}
     \frac{1}{T} \sum_{t=1}^T  \mathbb E[\|\nabla f(\boldsymbol x_t)\|]\le \left(\frac{2d\sqrt{35D_0D_1}}{\sqrt[4]{\zeta}}+\sqrt{c}\right)\epsilon.
\end{flalign}
\end{theorem}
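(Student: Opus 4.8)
The plan is to execute the four insights of the introduction and collapse everything to a single self-referential inequality in $S:=\frac1T\sum_{t=1}^T\mathbb E[\|\nabla f(\boldsymbol x_t)\|]$. Work with an $\mathcal F_t$-measurable surrogate $\tilde{\boldsymbol v}_t$ of $\boldsymbol v_t$ (so $\tilde{\boldsymbol v}_t$ is independent of $\boldsymbol g_t$ given $\mathcal F_t$ and $\boldsymbol v_{t,i}\ge\tilde{\boldsymbol v}_{t,i}$), e.g.\ $\tilde{\boldsymbol v}_{t,i}=\beta_2\boldsymbol v_{t-1,i}$. Plugging the RMSProp update $\boldsymbol x_{t+1}=\boldsymbol x_t-\eta\,\boldsymbol g_t/\sqrt{\boldsymbol v_t+\zeta}$ into \eqref{eq:generalsmooth}, I would split the First Order term as
$$\eta\sum_{i=1}^d\frac{(\partial_i f(\boldsymbol x_t))^2}{\sqrt{\tilde{\boldsymbol v}_{t,i}+\zeta}}\;+\;\eta\,\mathbb E\!\left[\sum_{i=1}^d\partial_i f(\boldsymbol x_t)\,\boldsymbol g_{t,i}\!\left(\frac{1}{\sqrt{\boldsymbol v_{t,i}+\zeta}}-\frac{1}{\sqrt{\tilde{\boldsymbol v}_{t,i}+\zeta}}\right)\Big|\mathcal F_t\right].$$
The first piece (first-order.a) is nonnegative; the second (first-order.b, the surrogate error) I would bound, summed over $t$, by a constant plus a small fraction of $\sum_t$(first-order.a), the telescoping here being exactly what the modified stepsize $\eta/\sqrt{\boldsymbol v_t+\zeta}$ buys over the original $\eta/(\sqrt{\boldsymbol v_t}+\zeta)$ (this is the analogue of \Cref{lemma:1}), using only $\mathbb E[\boldsymbol g_{t,i}|\mathcal F_t]=\partial_i f(\boldsymbol x_t)$, AM--GM, and $\sum_t[\phi(\tilde{\boldsymbol v}_t)-\phi(\tilde{\boldsymbol v}_{t+1})]$ collapsing for a suitable coordinate-separable $\phi$.

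Next I would handle the Second Order and Additional terms of \eqref{eq:generalsmooth} via Insight 4. Deterministically $\|\boldsymbol x_{t+1}-\boldsymbol x_t\|\le\eta\sqrt d/\sqrt{1-\beta_2}$, and AM--GM gives $|\partial_i f(\boldsymbol x_t)||\boldsymbol x_{t+1,i}-\boldsymbol x_{t,i}|\le\frac{(\partial_i f(\boldsymbol x_t))^2+\eta^2\boldsymbol g_{t,i}^2}{2\sqrt{\tilde{\boldsymbol v}_{t,i}+\zeta}}$; taking $\mathbb E[\cdot|\mathcal F_t]$ and using \Cref{assump:variance}, i.e.\ $\mathbb E[\boldsymbol g_{t,i}^2|\mathcal F_t]\le D_0+D_1(\partial_i f(\boldsymbol x_t))^2$ and hence $\mathbb E[|\boldsymbol g_{t,i}|\,|\mathcal F_t]\le\sqrt{D_0}+\sqrt{D_1}|\partial_i f(\boldsymbol x_t)|$, bounds both terms by a small multiple of (first-order.a) plus lower-order constants plus multiples of $\|\nabla f(\boldsymbol x_t)\|$. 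Summing the rearranged descent inequality over $t=1,\dots,T$, taking total expectation, telescoping $f$ and $\phi$, and using $\sum_i\frac{(\partial_i f(\boldsymbol x_t))^2}{\sqrt{\tilde{\boldsymbol v}_{t,i}+\zeta}}\ge\frac{\|\nabla f(\boldsymbol x_t)\|^2}{\sqrt{\|\tilde{\boldsymbol v}_t\|+\zeta}}$ yields a bound of the form
$$\frac1T\sum_{t=1}^T\mathbb E\!\left[\frac{\|\nabla f(\boldsymbol x_t)\|^2}{\sqrt{\|\tilde{\boldsymbol v}_t\|+\zeta}}\right]\;\le\;\frac{C_1}{\eta T}\;+\;C_2\,\eta\;+\;C_3\,\frac{\eta}{\sqrt{1-\beta_2}}\,S ,$$
with $C_1,C_2,C_3$ depending only on $f(\boldsymbol x_1)-\inf_{\boldsymbol x}f(\boldsymbol x)$, $L_0,L_1,D_0,D_1,d,\zeta,\|\boldsymbol v_0\|$; for $\eta,1-\beta_2\sim\epsilon^2$ and $T\sim\epsilon^{-4}$ this is $\mathcal O(\epsilon^2)+\mathcal O(\epsilon)S$.

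The third step is the analogue of \Cref{lemma:2}: since $\tilde{\boldsymbol v}_{t,i}$ is a geometrically decayed running sum of the past $\boldsymbol g_{s,i}^2$, Jensen's inequality, $\sqrt{a+b}\le\sqrt a+\sqrt b$, summation of the geometric series, and $\mathbb E[|\boldsymbol g_{s,i}|]\le\sqrt{D_0}+\sqrt{D_1}\,\mathbb E[|\partial_i f(\boldsymbol x_s)|]$ (together with $\sum_i\mathbb E[|\partial_i f(\boldsymbol x_s)|]\le\sqrt d\,\mathbb E[\|\nabla f(\boldsymbol x_s)\|]$) bound $\frac1T\sum_t\mathbb E[\sqrt{\|\tilde{\boldsymbol v}_t\|+\zeta}]$ by $C_4+C_5\,S$ — a constant plus a multiple of the average gradient norm — rather than by a constant. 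Finally Cauchy--Schwarz (Hölder) gives
$$S\;\le\;\sqrt{\frac1T\sum_{t=1}^T\mathbb E\!\left[\frac{\|\nabla f(\boldsymbol x_t)\|^2}{\sqrt{\|\tilde{\boldsymbol v}_t\|+\zeta}}\right]}\;\cdot\;\sqrt{\frac1T\sum_{t=1}^T\mathbb E\!\left[\sqrt{\|\tilde{\boldsymbol v}_t\|+\zeta}\right]} ,$$
and substituting the two previous displays produces a quadratic-type inequality $S\le\sqrt{(\mathcal O(\epsilon^2)+\mathcal O(\epsilon)S)(C_4+C_5 S)}$. For $\epsilon$ small — in particular $\epsilon\le\frac{\sqrt{5dD_0}}{\sqrt{D_1}\sqrt[4]{\zeta}}$, equivalently $D_1\epsilon^2\sqrt{\zeta}\le 5dD_0$, which forces the affine-variance correction terms of order $(1-\beta_2)D_1\sqrt{\zeta}$ to be dominated by $dD_0$ and lets the constants collapse to the stated clean form — the coefficient of $S^2$ on the right is strictly below $1$ (after choosing the stepsize constant small relative to that of $1-\beta_2$), so one solves for $S\le\big(\tfrac{2d\sqrt{35D_0D_1}}{\sqrt[4]{\zeta}}+\sqrt c\big)\epsilon$.

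\textbf{Main obstacle.} The crux is the unbounded gradient. Since $\nabla f$ need not be bounded, $\sqrt{\|\tilde{\boldsymbol v}_t\|+\zeta}$ is not a constant, so the effective stepsize cannot be lower-bounded in the second step; the denominator must instead be carried and, in the third step, controlled by the very quantity $S=\frac1T\sum\mathbb E[\|\nabla f(\boldsymbol x_t)\|]$ it is meant to bound, which closes the loop only through the final quadratic inequality. Making this circle consistent requires tracking precisely how $\eta$, $1-\beta_2$, and $T$ enter the surrogate error, the second-order term, and — most delicately — the Additional term, whose coefficient carries the dangerous factor $\eta/\sqrt{1-\beta_2}\sim\epsilon$; the coordinate-wise AM--GM bound of Insight 4 combined with the affine-noise assumption is what keeps that term below the good first-order.a term, and choosing $\eta$'s constant small enough relative to that of $1-\beta_2$ is what makes the final $S^2$-coefficient sub-unit. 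A secondary point is that the surrogate-error telescoping genuinely needs the modified stepsize $\eta/\sqrt{\boldsymbol v_t+\zeta}$, since $\boldsymbol v_t$ is non-monotone for RMSProp.
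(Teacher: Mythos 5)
Your proposal follows essentially the same route as the paper's proof: the surrogate $\tilde{\boldsymbol v}_t=\beta_2\boldsymbol v_{t-1}$ with the first-order.a/first-order.b split whose telescoping is enabled by the modified stepsize (\Cref{lemma:1}), the AM--GM plus affine-noise treatment of the second-order and additional terms, the recursive-Jensen bound $\frac1T\sum_t\mathbb E[\sqrt{\beta_2\|\boldsymbol v_{t-1}\|+\zeta}]\le c+\frac{2\sqrt{dD_1}}{\sqrt{1-\beta_2}}S$ (\Cref{lemma:2}), and the final H\"older-plus-quadratic step in which the threshold on $\epsilon$ plays exactly the paper's role of selecting the branch $1-\beta_2=\frac{\sqrt{\zeta}\epsilon^2}{35dD_0}$ so that $\epsilon^2/\sqrt{1-\beta_2}=\mathcal O(\epsilon)$. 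The only cosmetic deviation is that you carry an extra $\mathcal O(\epsilon)S$ term in the Stage-I bound (the paper's \Cref{co:1} has none for RMSProp, since all gradient-dependent pieces are absorbed into the first-order.a term, and such a term appears only in its Adam analysis), and your closing of the resulting quadratic with a sub-unit coefficient on $S^2$ is the same mechanism the paper uses there.
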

To the best of our knowledge,  this paper provides the first convergence analysis of RMSProp for $(L_0,L_1)$-smooth functions with affine noise variances. Existing studies mostly assume bounded gradient norm or variance \citep{de2018convergence,zaheer2018adaptive} or only show the algorithm converges asymptotically \citep{shi2020rmsprop}. More importantly, our result matches the lower bound in \citet{arjevani2023lower}, and thus is optimal. 

The formal version of the theorem and the detailed proof can be found in Appendix \ref{proof:theorem1}.

Below, we provide a proof sketch to highlight our major technical novelties.
\begin{proof}[Proof sketch]   
Our proof can be divided into three stages: Stage \uppercase\expandafter{\romannumeral1}: develop an upper bound of $\mathbb E\left[\frac{ \|\nabla f(\boldsymbol x_t)\|^2}{\sqrt{{\beta_2 \|\boldsymbol v_{t-1}} \|+ \zeta}}\right]$;  Stage   \uppercase\expandafter{\romannumeral2}: develop an upper bound on $\mathbb E[\sqrt{{\beta_2 \|\boldsymbol v_{t-1}}\| + \zeta}]$; and Stage   \uppercase\expandafter{\romannumeral3}: 
show $\mathbb E\left[{ \|\nabla f(\boldsymbol x_t)\|}\right]$ converges using results from Stages \uppercase\expandafter{\romannumeral1}, \uppercase\expandafter{\romannumeral2} and H\"older's inequality.

\textbf{Stage I:} upper bound of $\mathbb E\left[\frac{ \|\nabla f(\boldsymbol x_t)\|^2}{\sqrt{{\beta_2 \|\boldsymbol v_{t-1}} \|+ \zeta}}\right]$. As discussed in Section \ref{sec:intro}, for coordinate-wise $(L_0,L_1)$-smooth functions,  following the descent inequality (Lemma 1 in \citet{crawshaw2022robustness}) we can get (\ref{eq:generalsmooth}). 
We first obtain 
$\frac{ \|\nabla f(\boldsymbol x_t)\|^2}{\sqrt{{\beta_2 \|\boldsymbol v_{t-1}} \|+ \zeta}}\le \sum_{i=1}^d\frac{\eta (\partial_i f(\boldsymbol x_{t}))^2}{\sqrt{{\beta_2 \boldsymbol v_{t-1,i}} + \zeta}}$. Therefore, in the following we will bound $\mathbb E\left[\sum_{i=1}^d\frac{\eta (\partial_i f(\boldsymbol x_{t}))^2}{\sqrt{{\beta_2 \boldsymbol v_{t-1,i}} + \zeta}}\right]$.
Towards this goal, in Step 1.1, we will show the LHS of \eqref{eq:generalsmooth} is lower bounded by a function of $\sum_{i=1}^d\frac{\eta (\partial_i f(\boldsymbol x_{t}))^2}{\sqrt{{\beta_2 \boldsymbol v_{t-1,i}} + \zeta}}$; and in Step 1.2 we will show the RHS of \eqref{eq:generalsmooth} is upper bounded by a function of $\sum_{i=1}^d\frac{\eta (\partial_i f(\boldsymbol x_{t}))^2}{\sqrt{{\beta_2 \boldsymbol v_{t-1,i}} + \zeta}}$. Combining the two steps, we will obtain an upper bound on $\sum_{i=1}^d\frac{\eta (\partial_i f(\boldsymbol x_{t}))^2}{\sqrt{{\beta_2 \boldsymbol v_{t-1,i}} + \zeta}}$.

\textit{Step 1.1: lower bound on the first-order term in \eqref{eq:generalsmooth}.}
Since the adaptive stepsize and the gradient estimate are dependent, we design a surrogate $\tilde{\boldsymbol v}_t=\beta_2 \boldsymbol v_{t-1}$ to decompose the first-order term in \eqref{eq:generalsmooth} into two parts:
\begin{flalign}\label{eq:descentproof}
    &\underbrace{\mathbb E[\langle \nabla f(\boldsymbol x_t), \boldsymbol x_{t}-\boldsymbol x_{t+1} \rangle|\mathcal F_t]}_{\text{first-order}}\nonumber\\
    &=\underbrace{\mathbb E\left[\left\langle \nabla f(\boldsymbol x_t), \frac{\eta \boldsymbol g_t}{\sqrt{\tilde{\boldsymbol v}_t+\zeta}} \right\rangle\Big |\mathcal F_t\right]}_{\text{first-order.a}} + \underbrace{\mathbb E\left[\left\langle \nabla f(\boldsymbol x_t), \frac{\eta \boldsymbol g_t}{\sqrt{\boldsymbol v_t+\zeta}} -\frac{\eta \boldsymbol g_t}{\sqrt{\tilde{\boldsymbol v}_t+\zeta}}\right\rangle\Big |\mathcal F_t\right]}_{\text{first-order.b}}.
\end{flalign}
For the first-order.a term in \eqref{eq:descentproof}, we can show that  $$\mathbb E\left[\left\langle \nabla f(\boldsymbol x_t), \frac{\eta \boldsymbol g_t}{\sqrt{\tilde{\boldsymbol v}_t+\zeta}} \right\rangle|\mathcal F_t\right]=\sum_{i=1}^d\frac{\eta (\partial_i f(\boldsymbol x_{t}))^2}{\sqrt{{\beta_2 \boldsymbol v_{t-1,i}} + \zeta}}.$$ 

We then bound the first-order.b term in \eqref{eq:descentproof}.
\begin{remark}[Importance of modified adaptive stepsize $\boldsymbol \eta$]\label{remark:1}
    For the original RMSProp, \citet{zaheer2018adaptive} chose the same surrogate $\tilde{\boldsymbol v}_t=\beta_2 \boldsymbol v_{ t-1}$ as in this paper. 
Then the first-order.b term was lower bounded by a function of $\sum_{i=1}^d \mathbb E\left[\frac{-\boldsymbol g_{t,i}^2}{\sqrt{{\beta_2 \boldsymbol v_{t-1,i}}}+\zeta}\Big |\mathcal F_t\right]\times\mathbb E\left[\frac{(1-\beta_2)\boldsymbol g_{t,i}^2}{(\sqrt{{ \boldsymbol v_{t,i}}}+\sqrt{{\beta_2 \boldsymbol v_{t-1,i}}})^2}\Big |\mathcal F_t\right] $. 
Then they developed an upper bound on the second term $\mathbb E\left[\frac{(1-\beta_2)\boldsymbol g_{t,i}^2}{(\sqrt{{ \boldsymbol v_{t,i}}}+\sqrt{{\beta_2 \boldsymbol v_{t-1,i}}})^2}\Big |\mathcal F_t\right]\le 1$ which is quite loose, thus they introduced an additional assumption on the upper bound of $|\boldsymbol g_{t,i}|$ \citep{zaheer2018adaptive}. 

In contrast, using our adaptive stepsize  $ \eta  \frac{\boldsymbol g_t}{\sqrt{{\boldsymbol v_t} + \zeta}}$, we can show that the  first-order.b term can be lower bounded by a function of $\sum_{i=1}^d \mathbb E\left[\frac{-\boldsymbol g_{t,i}^2}{\sqrt{{\beta_2 \boldsymbol v_{t-1,i}}+\zeta}}\Big |\mathcal F_t\right]\times\mathbb E\left[\frac{(1-\beta_2)\boldsymbol g_{t,i}^2}{(\sqrt{{ \boldsymbol v_{t,i}+\zeta}}+\sqrt{{\beta_2 \boldsymbol v_{t-1,i}}+\zeta})^2}\Big |\mathcal F_t\right]$, which can be further bounded by $\sum_{i=1}^d \mathbb E\big[{-\boldsymbol g_{t,i}^2}|\mathcal F_t\big]\mathbb E\left[\frac{1}{\sqrt{{\beta_2 \boldsymbol v_{t-1,i}} + \zeta}}-\frac{1}{\sqrt{{ \boldsymbol v_{t,i}} + \zeta}}\Big |\mathcal F_t\right]$. Applying the affine noise variance assumption in Assumption \ref{assump:variance}, we obtain a lower bound on the first-order.b term in the following lemma.
\end{remark}

\begin{lemma}[Informal]\label{lemma:1}
Under Assumptions \ref{assump:variance} and \ref{assump:generalsmooth}, we have that
\begin{flalign}
    &\mathbb E\left[\left\langle \nabla f(\boldsymbol x_t), \frac{\eta \boldsymbol g_t}{\sqrt{\boldsymbol v_t+\zeta}} -\frac{\eta \boldsymbol g_t}{\sqrt{\beta_2 \boldsymbol {v}_{t-1}+\zeta}}\right\rangle\Big |\mathcal F_t\right] \nonumber\\
    \ge &-\sum_{i=1}^d\Bigg(\mathcal O\Bigg( \frac{\eta(\partial_i f(\boldsymbol x_{t}))^2}{\sqrt{\beta_2 \boldsymbol v_{t-1,i}+\zeta}}\Bigg) +\mathcal O\Bigg( \frac{\eta^2}{\sqrt{1-\beta_2}
    }\frac{(\partial_i f(\boldsymbol x_{t}))^2}{\sqrt{\beta_2 \boldsymbol v_{t-1,i}+\zeta}}\Bigg)\nonumber\\
    &+\mathcal O\Bigg(\frac{\eta}{\sqrt{\beta_2 \boldsymbol v_{t-1,i}+\zeta}}-\mathbb E\Bigg[{\frac{\eta}{\sqrt{\boldsymbol v_{t,i}+\zeta}}}\Big |\mathcal F_t\Bigg]\Bigg) +\mathcal O\left(\frac{\eta(\partial_i f(\boldsymbol x_{t-1}))^2}{\sqrt{{\beta_2 \boldsymbol v_{t-1,i}} + \zeta}}-\mathbb E\left[\frac{\eta(\partial_i  f(\boldsymbol x_{t}))^2}{\sqrt{{ \boldsymbol v_{t,i}} + \zeta}}\Big |\mathcal F_t\right]\right)\Bigg)\nonumber\\
    &-\text{small error}.\label{eq:lemma32}
\end{flalign}
\end{lemma}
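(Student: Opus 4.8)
The plan is to bound the $i$-th coordinate of the First-Order.b term in \eqref{eq:descentproof} for a fixed $t$ and then sum over $i\in[d]$. With $\boldsymbol x_{t+1}-\boldsymbol x_t=-\eta\,\boldsymbol g_t/\sqrt{\boldsymbol v_t+\zeta}$ and $\tilde{\boldsymbol v}_t=\beta_2\boldsymbol v_{t-1}$, the $i$-th summand equals $\eta\,\mathbb E\big[\partial_i f(\boldsymbol x_t)\,\boldsymbol g_{t,i}\big(\tfrac{1}{\sqrt{\boldsymbol v_{t,i}+\zeta}}-\tfrac{1}{\sqrt{\beta_2\boldsymbol v_{t-1,i}+\zeta}}\big)\,\big|\,\mathcal F_t\big]$. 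Since $\boldsymbol v_{t,i}=\beta_2\boldsymbol v_{t-1,i}+(1-\beta_2)\boldsymbol g_{t,i}^2\ge\beta_2\boldsymbol v_{t-1,i}$, the parenthesised factor is non-positive, and rationalising it yields $\tfrac{1}{\sqrt{\beta_2\boldsymbol v_{t-1,i}+\zeta}}-\tfrac{1}{\sqrt{\boldsymbol v_{t,i}+\zeta}}=S_{t,i}$ with $S_{t,i}:=\tfrac{(1-\beta_2)\boldsymbol g_{t,i}^2}{\sqrt{\boldsymbol v_{t,i}+\zeta}\,\sqrt{\beta_2\boldsymbol v_{t-1,i}+\zeta}\,(\sqrt{\boldsymbol v_{t,i}+\zeta}+\sqrt{\beta_2\boldsymbol v_{t-1,i}+\zeta})}\ge0$. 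The point of the modified stepsize $\eta/\sqrt{\boldsymbol v_t+\zeta}$ is that $\mathbb E[S_{t,i}\,|\,\mathcal F_t]=\tfrac{1}{\sqrt{\beta_2\boldsymbol v_{t-1,i}+\zeta}}-\mathbb E[\tfrac{1}{\sqrt{\boldsymbol v_{t,i}+\zeta}}\,|\,\mathcal F_t]$ has a telescoping-type structure in $t$, and I want every term produced below to keep this form rather than the loose bound $S_{t,i}\le1/\sqrt{\beta_2\boldsymbol v_{t-1,i}+\zeta}$ used in \cite{zaheer2018adaptive}.

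Since $S_{t,i}\ge0$, I upper bound the integrand by $\partial_i f(\boldsymbol x_t)\boldsymbol g_{t,i}S_{t,i}\le\big(\tfrac12(\partial_i f(\boldsymbol x_t))^2+\tfrac12\boldsymbol g_{t,i}^2\big)S_{t,i}$ (Young's inequality; a weighted version with an $\mathcal F_t$-measurable weight can be used to tune the constant in front of the absorbed term), which splits the $i$-th summand into $-\tfrac{\eta}{2}\mathbb E[(\partial_i f(\boldsymbol x_t))^2S_{t,i}|\mathcal F_t]$ and $-\tfrac{\eta}{2}\mathbb E[\boldsymbol g_{t,i}^2S_{t,i}|\mathcal F_t]$. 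For the first, $\partial_i f(\boldsymbol x_t)$ is $\mathcal F_t$-measurable, so it equals $-\tfrac{\eta}{2}(\partial_i f(\boldsymbol x_t))^2\big(\tfrac{1}{\sqrt{\beta_2\boldsymbol v_{t-1,i}+\zeta}}-\mathbb E[\tfrac{1}{\sqrt{\boldsymbol v_{t,i}+\zeta}}|\mathcal F_t]\big)$; I then invoke Assumption \ref{assump:generalsmooth} together with the deterministic step bound $\|\boldsymbol x_t-\boldsymbol x_{t-1}\|\le\eta\sqrt d/\sqrt{1-\beta_2}$ (valid because $|\boldsymbol g_{t-1,j}|/\sqrt{\boldsymbol v_{t-1,j}+\zeta}\le1/\sqrt{1-\beta_2}$) to replace $(\partial_i f(\boldsymbol x_t))^2$ in the $1/\sqrt{\beta_2\boldsymbol v_{t-1,i}+\zeta}$-part by $(\partial_i f(\boldsymbol x_{t-1}))^2$ up to the correction $(L_0+L_1|\partial_i f(\boldsymbol x_t)|)^2\|\boldsymbol x_t-\boldsymbol x_{t-1}\|^2$. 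Pairing the resulting $(\partial_i f(\boldsymbol x_{t-1}))^2/\sqrt{\beta_2\boldsymbol v_{t-1,i}+\zeta}$ term with $\mathbb E[(\partial_i f(\boldsymbol x_t))^2/\sqrt{\boldsymbol v_{t,i}+\zeta}|\mathcal F_t]$ produces the last telescoping term in \eqref{eq:lemma32}, while expanding the correction via $(L_0+L_1a)^2\le2L_0^2+2L_1^2a^2$ and $\sqrt{\beta_2\boldsymbol v_{t-1,i}+\zeta}\ge\sqrt\zeta$ gives the higher-order multiple of $(\partial_i f(\boldsymbol x_t))^2/\sqrt{\beta_2\boldsymbol v_{t-1,i}+\zeta}$ and a constant remainder collected into ``small error''.

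For the second piece, $-\tfrac{\eta}{2}\mathbb E[\boldsymbol g_{t,i}^2S_{t,i}|\mathcal F_t]$, the difficulty is that $S_{t,i}$ is itself proportional to $(1-\beta_2)\boldsymbol g_{t,i}^2$ over a $\boldsymbol g_{t,i}$-dependent denominator, so $\boldsymbol g_{t,i}^2S_{t,i}$ has degree four in $\boldsymbol g_{t,i}$ and is not controlled by the affine-noise assumption (which only bounds a second moment). The remedy is to spend one factor $\boldsymbol g_{t,i}^2$ against the denominator using $\boldsymbol v_{t,i}+\zeta\ge(1-\beta_2)\boldsymbol g_{t,i}^2$ and $\sqrt{\boldsymbol v_{t,i}+\zeta}+\sqrt{\beta_2\boldsymbol v_{t-1,i}+\zeta}\ge\sqrt{\boldsymbol v_{t,i}+\zeta}$, which collapses $\boldsymbol g_{t,i}^2S_{t,i}$ to a constant times $\boldsymbol g_{t,i}^2$ divided by $\mathcal F_t$-measurable quantities while still exposing the telescoping difference $\tfrac{1}{\sqrt{\beta_2\boldsymbol v_{t-1,i}+\zeta}}-\tfrac{1}{\sqrt{\boldsymbol v_{t,i}+\zeta}}$; taking $\mathbb E[\cdot|\mathcal F_t]$, using that $\boldsymbol g_{t,i}$ is independent of $\boldsymbol v_{t-1,i}$ given $\mathcal F_t$, and applying Assumption \ref{assump:variance} ($\mathbb E[\boldsymbol g_{t,i}^2|\mathcal F_t]\le D_0+D_1(\partial_i f(\boldsymbol x_t))^2$), the $D_1(\partial_i f(\boldsymbol x_t))^2$ part becomes an $\mathcal O(1)$ multiple of $\eta(\partial_i f(\boldsymbol x_t))^2/\sqrt{\beta_2\boldsymbol v_{t-1,i}+\zeta}$, and the $D_0$ part is kept as $\mathcal O(1)$ times $\eta/\sqrt{\beta_2\boldsymbol v_{t-1,i}+\zeta}-\mathbb E[\eta/\sqrt{\boldsymbol v_{t,i}+\zeta}|\mathcal F_t]$ plus a leftover $\le\eta D_0/\sqrt\zeta$ absorbed into ``small error''. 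Summing over $i\in[d]$ yields \eqref{eq:lemma32}.

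The main obstacle is this second piece: rationalisation forces a high-degree dependence on $\boldsymbol g_{t,i}$, and one must discharge the extra power of $\boldsymbol g_{t,i}^2$ while (i) preserving the telescoping form so that the constant-noise ($D_0$) contribution does not accumulate a spurious $\Theta(1)$ error over $t$, and (ii) keeping the coefficient of the absorbed $(\partial_i f(\boldsymbol x_t))^2/\sqrt{\beta_2\boldsymbol v_{t-1,i}+\zeta}$ term a bounded constant, so that after \eqref{eq:lemma32} is combined with the First-Order.a term it can be absorbed with a coefficient strictly below one under the smallness condition on $\epsilon$. A crude treatment such as $S_{t,i}\le1/\sqrt{\beta_2\boldsymbol v_{t-1,i}+\zeta}$ breaks both, which is why prior work needed a bounded-gradient assumption.
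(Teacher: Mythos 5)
Your first piece essentially reproduces the paper's treatment (Appendix~\ref{proof:lemma1}) of the telescoping part: add and subtract to expose the pair $\tfrac{(\partial_i f(\boldsymbol x_{t-1}))^2}{\sqrt{\beta_2\boldsymbol v_{t-1,i}+\zeta}}-\mathbb E[\tfrac{(\partial_i f(\boldsymbol x_t))^2}{\sqrt{\boldsymbol v_{t,i}+\zeta}}|\mathcal F_t]$ and control $(\partial_i f(\boldsymbol x_t))^2-(\partial_i f(\boldsymbol x_{t-1}))^2$ via coordinate-wise $(L_0,L_1)$-smoothness and $\|\boldsymbol x_t-\boldsymbol x_{t-1}\|\le\eta\sqrt d/\sqrt{1-\beta_2}$. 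One caveat: your claimed replacement ``up to the correction $(L_0+L_1|\partial_i f(\boldsymbol x_t)|)^2\|\boldsymbol x_t-\boldsymbol x_{t-1}\|^2$'' is not correct as stated --- you either pick up a cross term or a factor $2$ in front of $(\partial_i f(\boldsymbol x_{t-1}))^2$, and an unequal constant on the positive and negative telescoped terms leaves a non-vanishing leftover of order $\sum_t(\partial_i f(\boldsymbol x_t))^2/\sqrt{\beta_2\boldsymbol v_{t-1,i}+\zeta}$ after summation; the paper avoids this by writing the identity exactly and bounding only the difference via $a^2-b^2\le 2a|a-b|$ followed by Young with a tunable $\alpha_1$.

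The genuine gap is your second piece. Applying Young pointwise inside the expectation leaves $\mathbb E[\boldsymbol g_{t,i}^2 S_{t,i}|\mathcal F_t]$, and the remedy you describe does not work: the inequalities $\boldsymbol v_{t,i}+\zeta\ge(1-\beta_2)\boldsymbol g_{t,i}^2$ and $\sqrt{\boldsymbol v_{t,i}+\zeta}+\sqrt{\beta_2\boldsymbol v_{t-1,i}+\zeta}\ge\sqrt{\boldsymbol v_{t,i}+\zeta}$ collapse $\boldsymbol g_{t,i}^2S_{t,i}$ to $\boldsymbol g_{t,i}^2/\sqrt{\beta_2\boldsymbol v_{t-1,i}+\zeta}$, which does \emph{not} ``still expose the telescoping difference.'' After affine noise the $D_0$-part then contributes a non-telescoping $\Theta(\eta D_0/\sqrt\zeta)$ per step (exactly the $\Theta(1)$ accumulation you say you are avoiding), and the $D_1$-part arrives with the fixed coefficient $\eta D_1/2$, undamped by $1-\beta_2$ or any tunable parameter, so it cannot be absorbed by the First-Order.a term when $D_1$ is large. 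You also cannot keep the telescoping by writing $\boldsymbol g_{t,i}^2S_{t,i}$ as $\boldsymbol g_{t,i}^2\bigl(\tfrac{1}{\sqrt{\beta_2\boldsymbol v_{t-1,i}+\zeta}}-\tfrac{1}{\sqrt{\boldsymbol v_{t,i}+\zeta}}\bigr)$ and hoping for $\mathbb E[\boldsymbol g_{t,i}^2S_{t,i}|\mathcal F_t]\le\mathbb E[\boldsymbol g_{t,i}^2|\mathcal F_t]\,\mathbb E[S_{t,i}|\mathcal F_t]$: $S_{t,i}$ is increasing in $\boldsymbol g_{t,i}^2$, so the factors are positively correlated and that inequality goes the wrong way; and re-weighting your Young step to shrink the $\boldsymbol g_{t,i}^2$-coefficient forces the weight down to order $\epsilon^2$, which blows up the $1/\alpha$-weighted first piece through its $L_0$-correction. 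The paper's proof is ordered differently precisely to dodge this: it first spends one power of $|\boldsymbol g_{t,i}|$ via $|\boldsymbol g_{t,i}|/\sqrt{\boldsymbol v_{t,i}+\zeta}\le1/\sqrt{1-\beta_2}$, then applies Young to $|\partial_i f(\boldsymbol x_t)|$ times the \emph{conditional expectation} $\mathbb E\bigl[\sqrt{1-\beta_2}\,\boldsymbol g_{t,i}^2/(\sqrt{\boldsymbol v_{t,i}+\zeta}+\sqrt{\beta_2\boldsymbol v_{t-1,i}+\zeta})\,\big|\,\mathcal F_t\bigr]$, and then applies H\"older/Cauchy--Schwarz to the \emph{square} of that conditional expectation, factoring it as $\mathbb E[\boldsymbol g_{t,i}^2|\mathcal F_t]\cdot\mathbb E[S_{t,i}|\mathcal F_t]$, so affine noise handles the first factor while the telescoping survives in the second for both the $D_0$ and $D_1$ parts. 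Without that decoupling step (or an equivalent), your proposal does not establish \eqref{eq:lemma32}.
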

The formal version and detailed proof of Lemma \ref{lemma:1} can be found in the Appendix \ref{proof:lemma1}, which is based on our modified update process on $\boldsymbol v_t$, H\"older's inequality and Assumption \ref{assump:generalsmooth}. 

In the RHS of \eqref{eq:lemma32}, consider the term $\mathbb E\left[\frac{(\partial_i f(\boldsymbol x_{t-1}))^2}{\sqrt{{\beta_2 \boldsymbol v_{t-1,i}} + \zeta}}-\frac{(\partial_i f(\boldsymbol x_{t}))^2}{\sqrt{{ \boldsymbol v_{t,i}} + \zeta}}\Big |\mathcal F_t\right].$
Taking sum from $t=1$ to $T$, 
the terms $\frac{(\partial_i f(\boldsymbol x_{t-1}))^2}{\sqrt{{\beta_2 \boldsymbol v_{t-1,i}} + \zeta}}$ and  $\frac{(\partial_i f(\boldsymbol x_{t-1}))^2}{\sqrt{{ \boldsymbol v_{t-1,i}} + \zeta}}$ shall be close to each other as $\beta_2 \to 1$. Similarly, $\frac{\eta}{\sqrt{\beta_2 \boldsymbol v_{t-1,i}+\zeta}}-{\frac{\eta}{\sqrt{\boldsymbol v_{t-1,i}+\zeta}}}$ can also be bounded.

\textit{Step 1.2: upper bound on second-order and additional terms in \eqref{eq:generalsmooth}.} We first focus on the second-order term. Based on the update process of $\boldsymbol v_t$ and Assumption \ref{assump:variance}, we get that 
\begin{flalign}\label{eq:second}
&{\sum_{i=1}^d\frac{L_0}{2\sqrt{d}}\mathbb E[\|\boldsymbol x_{t+1}-\boldsymbol x_t\||\boldsymbol x_{t+1,i}-\boldsymbol x_{t,i}||\mathcal F_t]}
    \le \sum_{i=1}^d \frac{L_0\eta^2}{2\sqrt{\zeta}} \frac{D_0+D_1(\partial_i f(\boldsymbol x_t))^2}{\sqrt{\beta_2 \boldsymbol v_{t-1,i}+\zeta}}.
\end{flalign}
We then focus on the additional term in \eqref{eq:generalsmooth} and we provide a new upper bound using $\sum_{i=1}^d\frac{ (\partial_i f(\boldsymbol x_t))^2}{\sqrt{{\beta_2 \boldsymbol v_{t-1,i}} + \zeta}} $ with some small errors: for any $\alpha_2>0$, we have that
\begin{flalign} \label{eq:addtional}
    &\frac{L_1|\partial_i f (\boldsymbol x_t)|}{2}\mathbb E[\|\boldsymbol x_{t+1}-\boldsymbol x_t\||\boldsymbol x_{t+1,i}-\boldsymbol x_{t,i}||\mathcal F_t]\nonumber\\
        & \le \frac{\eta^2\sqrt{d}}{2\sqrt{1-\beta_2}}\left(L_1\sqrt{D_1}+\frac{1}{\alpha_2\sqrt{1-\beta_2}}\right)\frac{(\partial_i f(\boldsymbol x_t))^2}{\sqrt{\beta_2 \boldsymbol v_{t-1,i}+\zeta}} +  \frac{\eta^2\sqrt{d}\alpha_2L_1^2D_0}{2{}\sqrt{\beta_2 \boldsymbol v_{t-1,i}+\zeta}}.
\end{flalign}
Plugging Lemma \ref{lemma:1}, (\ref{eq:second}) and (\ref{eq:addtional}) in (\ref{eq:generalsmooth}) we get the following lemma.
\begin{lemma} \label{co:1}
    Let Assumptions \ref{assump:lowerbound}, \ref{assump:variance} and \ref{assump:generalsmooth} hold. With the same parameters in Theorem \ref{theorem:1}, we have that
    \begin{flalign}
        \frac{1}{T} \sum_{t=1}^T \mathbb E\left[\frac{ \|\nabla f(\boldsymbol x_t)\|^2}{\sqrt{{\beta_2 \|\boldsymbol v_{t-1}}\| + \zeta}} \right]\le \epsilon^2.
    \end{flalign}
\end{lemma}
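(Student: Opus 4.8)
The plan is to start from the descent inequality \eqref{eq:generalsmooth} and substitute in the three ingredients already prepared: the decomposition \eqref{eq:descentproof} of the first-order term together with the lower bound from \Cref{lemma:1}, the second-order bound \eqref{eq:second}, and the additional-term bound \eqref{eq:addtional}. After this substitution, every term on both sides will be a (signed) multiple of $\sum_{i=1}^d \frac{\eta(\partial_i f(\boldsymbol x_t))^2}{\sqrt{\beta_2 \boldsymbol v_{t-1,i}+\zeta}}$, plus telescoping differences, plus genuinely small $\mathcal O$-errors. Concretely, I would first record the identity $\mathbb E\big[\langle \nabla f(\boldsymbol x_t), \eta \boldsymbol g_t/\sqrt{\tilde{\boldsymbol v}_t+\zeta}\rangle|\mathcal F_t\big] = \sum_i \eta(\partial_i f(\boldsymbol x_t))^2/\sqrt{\beta_2 \boldsymbol v_{t-1,i}+\zeta}$ for the first-order.a term, and also note that $\|\nabla f(\boldsymbol x_t)\|^2/\sqrt{\beta_2\|\boldsymbol v_{t-1}\|+\zeta} \le \sum_i \eta(\partial_i f(\boldsymbol x_t))^2/(\eta\sqrt{\beta_2 \boldsymbol v_{t-1,i}+\zeta})$ — i.e. that controlling the coordinate-wise sum suffices to control the quantity in the statement (up to the factor $\eta$, which is absorbed on the left).

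Next I would sum the resulting inequality over $t=1,\dots,T$, take total expectation, and divide by $T$. The point of the chosen surrogate $\tilde{\boldsymbol v}_t=\beta_2\boldsymbol v_{t-1}$ and the modified stepsize $\eta/\sqrt{\boldsymbol v_t+\zeta}$ is that the difference-type terms on the RHS of \eqref{eq:lemma32}, namely $\frac{\eta}{\sqrt{\beta_2\boldsymbol v_{t-1,i}+\zeta}} - \mathbb E[\frac{\eta}{\sqrt{\boldsymbol v_{t,i}+\zeta}}|\mathcal F_t]$ and $\frac{\eta(\partial_i f(\boldsymbol x_{t-1}))^2}{\sqrt{\beta_2\boldsymbol v_{t-1,i}+\zeta}} - \mathbb E[\frac{\eta(\partial_i f(\boldsymbol x_t))^2}{\sqrt{\boldsymbol v_{t,i}+\zeta}}|\mathcal F_t]$, telescope up to a $\beta_2\to 1$ mismatch between $\sqrt{\beta_2\boldsymbol v_{t-1,i}+\zeta}$ and $\sqrt{\boldsymbol v_{t-1,i}+\zeta}$; I would bound that mismatch by a factor $(1-\sqrt{\beta_2})$ times a quantity of the same form, which is harmless because $1-\beta_2\sim\mathcal O(\epsilon^2)$. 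The telescoped sum of $f(\boldsymbol x_t)-\mathbb E[f(\boldsymbol x_{t+1})]$ contributes $f(\boldsymbol x_1)-\inf f$, a constant, divided by $T$; the small-error terms (those carrying extra factors of $\eta$, $\eta^2$, $1-\beta_2$, or $D_0$) are each $\mathcal O(\epsilon^2)$ or smaller after division by $T$ and the parameter substitutions.

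After collecting terms, the inequality takes the schematic form
\begin{flalign*}
 \Big(1 - \mathcal O(\eta) - \mathcal O\big(\tfrac{\eta}{\sqrt{1-\beta_2}}\big) - \mathcal O\big(\tfrac{\eta^2}{1-\beta_2}\big) - \mathcal O\big(\tfrac{\eta^2 D_1}{\sqrt{1-\beta_2}}\big)\Big)\, \frac{1}{T}\sum_{t=1}^T \mathbb E\!\left[\sum_{i=1}^d \frac{\eta(\partial_i f(\boldsymbol x_t))^2}{\sqrt{\beta_2\boldsymbol v_{t-1,i}+\zeta}}\right] \le \frac{C_0}{T} + \mathcal O(\epsilon^2),
\end{flalign*}
and the crux is a careful choice of the constants hidden in $1-\beta_2\sim\mathcal O(\epsilon^2)$ and $\eta\sim\mathcal O(\epsilon^2)$ so that the bracketed coefficient on the left is bounded below by a positive constant (say $\tfrac12$), while simultaneously $C_0/T\le \tfrac14\epsilon^2$ (forcing the $T\sim\mathcal O(\epsilon^{-4})$ scaling, since $C_0$ itself contains a $1/\eta\sim\epsilon^{-2}$) and the residual $\mathcal O(\epsilon^2)$ terms sum to at most $\tfrac14\epsilon^2$. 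This bookkeeping — reconciling the competing demands that $\eta/\sqrt{1-\beta_2}$ be small (so the left coefficient stays positive) yet $\eta$ not be so small that $C_0/T = \Theta(1/(\eta T))$ blows up — is the main obstacle; it is where the precise exponents $1-\beta_2\sim\epsilon^2$, $\eta\sim\epsilon^2$, $T\sim\epsilon^{-4}$ are pinned down. Everything else is substitution and telescoping. Rearranging gives $\frac1T\sum_t \mathbb E[\|\nabla f(\boldsymbol x_t)\|^2/\sqrt{\beta_2\|\boldsymbol v_{t-1}\|+\zeta}] \le \epsilon^2$, as claimed.
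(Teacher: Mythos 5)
Your proposal follows essentially the same route as the paper's own proof: it plugs the first-order.a identity, the Lemma \ref{lemma:1} bound on the surrogate error, and the bounds \eqref{eq:second}--\eqref{eq:addtional} into the descent inequality, telescopes the difference terms (absorbing the $(1-\beta_2)$-residual of the $(\partial_i f)^2$ differences back into the left-hand coefficient and the $D_0$ residual into an $\mathcal O(T\epsilon^2)$ error), and then tunes $\eta$, $1-\beta_2$, and $T$ exactly as the paper does to keep the coefficient positive while making the constant and residual terms of order $\epsilon^2$. The bookkeeping you flag (the $\alpha$'s, and the tension between $\eta/\sqrt{1-\beta_2}$ and $1/(\eta T)$) is precisely what the paper resolves with its explicit choices $\alpha_0=1,\alpha_1=7,\alpha_2=1$, $1-\beta_2=\min(\frac{1}{7D_1},\frac{\sqrt\zeta\epsilon^2}{35dD_0})$, $T=\frac{70\Delta}{\eta\epsilon^2}$, so the proposal is correct in approach with only the constants left to pin down.
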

The details of the proof can be found in the Appendix \ref{proof:co1}.
The major idea in the proof of Lemma \ref{co:1} is to bound the first-order.b, the second-order term and the additional term using $\sum_{i=1}^d\mathbb E\left[\frac{ (\partial_i f(\boldsymbol x_t))^2}{\sqrt{{\beta_2 \boldsymbol v_{t-1,i}} + \zeta}} \right]$. 

\textbf{Stage II:} upper bound of $\mathbb E[\sqrt{{\beta_2 \|\boldsymbol v_{t-1}}\| + \zeta}]$. With the bounded gradient norm assumption, $\frac{1}{T} \sum_{t=1}^T \mathbb E[{ \|\nabla f(\boldsymbol x_t)\|^2}] = \mathcal O(\epsilon^2) $ follows directly from Lemma \ref{co:1}. However, under the affine gradient noise assumption in this paper, the gradient norm may not be bounded. 
Here, we establish a key observation  that $\frac{1}{T} \sum_{t=1}^T  \mathbb E[\sqrt{{\beta_2 \|\boldsymbol v_{t-1}}\| + \zeta}]$ can be bounded using $\frac{1}{T} \sum_{t=1}^T  \mathbb E[\|\nabla f(\boldsymbol x_t)\|]$. By H\"older's inequality, we have 
\begin{flalign}\label{eq:stage2}
    &\left(\frac{1}{T} \sum_{t=1}^T  \mathbb E[\|\nabla f(\boldsymbol x_t)\|]\right)^2
    \le \left(\frac{1}{T} \sum_{t=1}^T  \mathbb E[\sqrt{{\beta_2 \|\boldsymbol v_{t-1}}\| + \zeta}]\right) \times\left(\frac{1}{T} \sum_{t=1}^T \mathbb E\left[\frac{ \|\nabla f(\boldsymbol x_t)\|^2}{\sqrt{{\beta_2\| \boldsymbol v_{t-1}}\| + \zeta}} \right]\right)
    .
\end{flalign}
It is worth noting that in the RHS of (\ref{eq:stage2}), the second term is bound in  Lemma \ref{co:1}. If the first term is upper bounded using  $\frac{1}{T} \sum_{t=1}^T  \mathbb E[\|\nabla f(\boldsymbol x_t)\|]$, we then can prove an upper bound on  $\frac{1}{T} \sum_{t=1}^T  \mathbb E[\|\nabla f(\boldsymbol x_t)\|]$, and show the algorithm converges to a stationary point. In the following lemma, we show a novel bound on $\frac{1}{T} \sum_{t=1}^T  \mathbb E\left[\sqrt{{\beta_2 \|\boldsymbol v_{t-1}}\| + \zeta}\right]$.
\begin{lemma}\label{lemma:2}
    Let Assumption \ref{assump:variance} hold. Then, we have
    \begin{flalign}
        &\frac{1}{T} \sum_{t=1}^T  \mathbb E\left[\sqrt{{\beta_2 \|\boldsymbol v_{t-1}}\| + \zeta}\right]
        \le  c
   +\frac{2\sqrt{dD_1}}{\sqrt{(1-\beta_2)}} \frac{\sum_{t=1}^T\mathbb E[\|\nabla f(\boldsymbol x_t)\|]}{T}.
    \end{flalign}
\end{lemma}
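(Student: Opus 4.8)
The plan is to peel off the Euclidean norm coordinate by coordinate, unroll the exponential moving average defining $\boldsymbol v$, and bound the resulting weighted root‑mean‑square of the stochastic gradients by isolating the constant noise floor from the gradient‑dependent part. First, since $\|\boldsymbol v_{t-1}\|\le\|\boldsymbol v_{t-1}\|_1=\sum_{i=1}^d\boldsymbol v_{t-1,i}$ and $\sqrt{a+b}\le\sqrt a+\sqrt b$ for $a,b\ge0$, we have $\sqrt{\beta_2\|\boldsymbol v_{t-1}\|+\zeta}\le\sqrt\zeta+\sum_{i=1}^d\sqrt{\boldsymbol v_{t-1,i}}$, so it suffices to bound $\tfrac1T\sum_{t}\mathbb E[\sqrt{\boldsymbol v_{t-1,i}}]$ for each coordinate $i$. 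Unrolling $\boldsymbol v_{t,i}=\beta_2\boldsymbol v_{t-1,i}+(1-\beta_2)\boldsymbol g_{t,i}^2$ gives $\boldsymbol v_{t-1,i}=\beta_2^{t-1}\boldsymbol v_{0,i}+(1-\beta_2)\sum_{s=1}^{t-1}\beta_2^{t-1-s}\boldsymbol g_{s,i}^2$.

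Next I would split each $\boldsymbol g_{s,i}^2$ as $\mathbb E[\boldsymbol g_{s,i}^2\mid\mathcal F_s]+\delta_{s,i}$, with $\delta_{s,i}$ a martingale‑difference sequence, and use the affine‑variance bound $\mathbb E[\boldsymbol g_{s,i}^2\mid\mathcal F_s]\le D_0+D_1(\partial_i f(\boldsymbol x_s))^2$ of Assumption~\ref{assump:variance}. Since $\beta_2^{t-1}\le1$ and $(1-\beta_2)\sum_{s<t}\beta_2^{t-1-s}\le1$, this yields, writing $M_{t,i}:=(1-\beta_2)\sum_{s<t}\beta_2^{t-1-s}\delta_{s,i}$,
\[\boldsymbol v_{t-1,i}\le \boldsymbol v_{0,i}+D_0+D_1(1-\beta_2)\textstyle\sum_{s<t}\beta_2^{t-1-s}(\partial_i f(\boldsymbol x_s))^2+M_{t,i}.\]
Applying $\sqrt{a+b+c}\le\sqrt a+\sqrt b+\sqrt{(c)_+}$ (valid when $a,b\ge0$ and $a+b+c\ge0$), the contribution of $\sqrt{\boldsymbol v_{0,i}+D_0}$, summed over $i$ and averaged over $t$, is at most $d\sqrt{\|\boldsymbol v_0\|+D_0}$, which with $\sqrt\zeta$ is exactly the constant $c$. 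For the gradient piece I would distribute the root over the sum, $\sqrt{D_1(1-\beta_2)\sum_{s<t}\beta_2^{t-1-s}(\partial_i f(\boldsymbol x_s))^2}\le\sqrt{D_1(1-\beta_2)}\sum_{s<t}\beta_2^{(t-1-s)/2}|\partial_i f(\boldsymbol x_s)|$, then average over $t$, swap the order of the double sum, and use $\sum_{t>s}\beta_2^{(t-1-s)/2}\le\tfrac1{1-\sqrt{\beta_2}}$ together with $\sqrt{D_1(1-\beta_2)}/(1-\sqrt{\beta_2})\le 2\sqrt{D_1}/\sqrt{1-\beta_2}$; taking expectations and summing over $i$ with $\|\nabla f\|_1\le\sqrt d\,\|\nabla f\|$ produces exactly $\tfrac{2\sqrt{dD_1}}{\sqrt{1-\beta_2}}\cdot\tfrac1T\sum_{t}\mathbb E[\|\nabla f(\boldsymbol x_t)\|]$. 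Note that keeping $(\partial_i f(\boldsymbol x_s))^2$ intact until after taking the square root (rather than taking $\mathbb E$ first) is what makes the final bound linear in $\mathbb E[\|\nabla f\|]$ rather than in $\sqrt{\mathbb E[\|\nabla f\|^2]}$.

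The crux — and the reason neither ``everything under one square root'' nor ``everything through term‑by‑term square roots'' works on its own — is twofold. If the constant $D_0$ were carried through the term‑by‑term root it would produce $\sqrt{D_0}\,\sqrt{1-\beta_2}\sum_{s<t}\beta_2^{(t-1-s)/2}\sim\sqrt{D_0}/\sqrt{1-\beta_2}$, which diverges as $\beta_2\to1$ and would destroy the $\mathcal O(\epsilon^{-4})$ complexity; keeping $D_0$ under a single root whose geometric weights sum to at most $1$ is precisely what makes it the $\beta_2$‑free constant in $c$, while the gradient part must still be extracted term by term so that the bound is linear in $\|\nabla f\|$ (as the Stage III H\"older step requires). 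The delicate point is then the martingale remainder $\sqrt{(M_{t,i})_+}$, which must be handled using only the second‑moment control of Assumption~\ref{assump:variance} — no higher moments are assumed, so one cannot simply pass to $L^2$ bounds. Here I would use $\delta_{s,i}\ge-\mathbb E[\boldsymbol g_{s,i}^2\mid\mathcal F_s]$, a Young‑type inequality $\sqrt x\le\tfrac\lambda2+\tfrac x{2\lambda}$ with $\lambda$ calibrated to $D_0$, and $\mathbb E[|\delta_{s,i}|\mid\mathcal F_s]\le2\,\mathbb E[\boldsymbol g_{s,i}^2\mid\mathcal F_s]$, to show this piece adds only another $\beta_2$‑free constant (folded into $c$) plus a remainder dominated by the gradient term already obtained; carrying the constants carefully through the sums over $i$ and $t$ then gives the stated inequality.
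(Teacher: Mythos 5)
Your main decomposition (unroll $\boldsymbol v_{t-1,i}$, bound the conditional means by the affine-variance assumption, keep $D_0$ under a single root with geometric weights, and peel the gradient part term by term before taking expectations) reproduces the two terms of the lemma correctly, and your constants for those pieces match. The genuine gap is the martingale remainder $M_{t,i}=(1-\beta_2)\sum_{s<t}\beta_2^{t-1-s}\delta_{s,i}$ that your pointwise split $\boldsymbol g_{s,i}^2=\mathbb E[\boldsymbol g_{s,i}^2\mid\mathcal F_s]+\delta_{s,i}$ creates. Your proposed treatment — Young's inequality $\sqrt{x}\le\tfrac{\lambda}{2}+\tfrac{x}{2\lambda}$ with constant $\lambda$ together with $\mathbb E[|\delta_{s,i}|\mid\mathcal F_s]\le 2\,\mathbb E[\boldsymbol g_{s,i}^2\mid\mathcal F_s]\le 2D_0+2D_1(\partial_i f(\boldsymbol x_s))^2$ — leaves a term of order $\tfrac{D_1}{\lambda}(1-\beta_2)\sum_{s<t}\beta_2^{t-1-s}\mathbb E[(\partial_i f(\boldsymbol x_s))^2]$, i.e.\ \emph{quadratic} in the gradient. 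Since gradients are not assumed bounded here, $\mathbb E[(\partial_i f)^2]$ is not dominated by a constant plus a multiple of $\mathbb E[|\partial_i f|]$, so this remainder is neither "another $\beta_2$-free constant" nor "dominated by the gradient term already obtained." The alternative route $\mathbb E[\sqrt{(M_{t,i})_+}]\le\sqrt{\mathbb E[|M_{t,i}|]}$ fares no better: it produces $\sqrt{\mathbb E[(\partial_i f)^2]}$-type quantities, which exceed $\mathbb E[|\partial_i f|]$ and would break the Stage III self-bounding H\"older argument, which needs the right-hand side to be linear in $e=\tfrac1T\sum_t\mathbb E[\|\nabla f(\boldsymbol x_t)\|]$ itself. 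Controlling $M_{t,i}$ in the required form would essentially need fourth moments of $\boldsymbol g$, which Assumption \ref{assump:variance} does not provide.

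The paper sidesteps this entirely, and that is the fix you are missing: instead of unrolling first and centering afterwards, apply Jensen's inequality \emph{conditionally and recursively}, one step at a time. Writing $\mathbb E[\sqrt{\beta_2\boldsymbol v_{t-1,i}}]=\mathbb E_{\mathcal F_{t-1}}\bigl[\mathbb E[\sqrt{\beta_2^2\boldsymbol v_{t-2,i}+\beta_2(1-\beta_2)\boldsymbol g_{t-1,i}^2}\mid\mathcal F_{t-1}]\bigr]$, conditional Jensen replaces $\boldsymbol g_{t-1,i}^2$ by $\mathbb E[\boldsymbol g_{t-1,i}^2\mid\mathcal F_{t-1}]\le D_0+D_1(\partial_i f(\boldsymbol x_{t-1}))^2$ \emph{inside} the root with no martingale error at all; subadditivity then peels off $\sqrt{\beta_2(1-\beta_2)D_1}\,|\partial_i f(\boldsymbol x_{t-1})|$ (the gradient stays a random variable, giving linearity in $\mathbb E[|\partial_i f|]$), while the $D_0$ contribution accumulates under one root whose weights sum to at most $1$; recursing down to $\boldsymbol v_{0,i}$ and then averaging over $t$, summing over $i$, and using $\sum_j\beta_2^{j/2}\le 2/(1-\beta_2)^{1/2}\cdot(1-\beta_2)^{-1/2}\cdot(1-\beta_2)$-type bounds exactly as you did yields the stated inequality with the constant $c$ (your extra constant from the martingale piece would also make the constant strictly larger than $c$ even if that piece could be controlled). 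Without replacing your martingale step by this conditional recursion (or an equivalent argument), the proof as proposed does not go through.
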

The detailed proof can be found in the Appendix \ref{proof:lemma2}, where we recursively apply Jensen’s inequality \citep{10.1007/BF02418571}. The proof only depends on the affine noise variance and the update process on $v_t$. Thus, it works for both RMSProp and Adam with $(L_0,L_1)$-smooth objectives. 

\textbf{Stage \uppercase\expandafter{\romannumeral3}:} upper bound of $\mathbb E\left[{ \|\nabla f(\boldsymbol x_t)\|}\right]$.
Now we show that the algorithm converges to a stationary point. Define $e=\frac{1}{T} \sum_{t=1}^T  \mathbb E[\|\nabla f(\boldsymbol x_t)\|]$. By (\ref{eq:stage2}), Lemma \ref{co:1} and Lemma \ref{lemma:2} we have that
\begin{flalign}
    e^2\le \epsilon^2\left(c+\frac{2\sqrt{dD_1}}{\sqrt{1-\beta_2}}e\right).
\end{flalign}
Thus if $\epsilon\le \frac{\sqrt{5dD_0}}{\sqrt{D_1}\sqrt[4]{\zeta}}$, we have $$e\le\left(\frac{2d\sqrt{35D_0D_1}}{\sqrt[4]{\zeta}}+\sqrt{c}\right)\epsilon,$$ which indicates the algorithm converges to a stationary point.
\end{proof}
\section{Convergence Analysis of Adam}
In this section, we extend our convergence analysis of RMSProp to Adam. Such a result is attractive since empirical results on complicated tasks show Adam may perform better,  e.g., the mean reward is improved by $88\%$ via RMSProp and $110\%$ via Adam for Atari games \citep{agarwal2020optimistic}. 

We present the convergence result for Adam as follows.
\begin{theorem}[Informal]\label{theorem:2}
     Let Assumptions \ref{assump:lowerbound}, \ref{assump:variance} and \ref{assump:generalsmooth} hold. Let  $1-\beta_2= \mathcal O(\epsilon^{2})$,  $0<\beta_1\le \sqrt{\beta_2}<1$, $\eta = \mathcal O(\epsilon^{2})$,  and $T= \mathcal O(\epsilon^{-4})$. For small $\epsilon$ such that $\epsilon\le \frac{\sqrt{2C_2}}{\sqrt{7\alpha_0C_6D_1}}$, we have that
 \begin{flalign}
     \frac{1}{T} \sum_{t=1}^T  \mathbb E[\|\nabla f(\boldsymbol x_t)\|]\le \left(2c+\sqrt{2c}+\frac{4\sqrt{dD_1}}{\sqrt{C_6}}\right)\epsilon,
\end{flalign}
where $C_6$ is a positive constant defined in Appendix \ref{proof:theorem2}.
\end{theorem}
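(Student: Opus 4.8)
The plan is to mirror the three-stage RMSProp argument, but apply the descent inequality \eqref{eq:generalsmooth} to the auxiliary sequence $\boldsymbol u_t = \frac{\boldsymbol x_t - (\beta_1/\sqrt{\beta_2})\boldsymbol x_{t-1}}{1 - \beta_1/\sqrt{\beta_2}}$ rather than to $\boldsymbol x_t$, so that the first-order momentum is effectively unrolled. The first step is to compute $\boldsymbol u_{t+1} - \boldsymbol u_t$ and show it equals $-\eta\,\boldsymbol g_t / \sqrt{\tilde{\boldsymbol v}_t + \zeta}$ (with $\tilde{\boldsymbol v}_t = \beta_2 \boldsymbol v_{t-1}$) up to a correction that is a telescoping-type difference of $\boldsymbol m_{t}/\sqrt{\tilde{\boldsymbol v}_t+\zeta}$-style quantities; the condition $\beta_1 \le \sqrt{\beta_2}$ is what makes the relevant geometric weights (ratio $\beta_1/\sqrt{\beta_2} \le 1$) summable here. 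Applying the coordinate-wise $(L_0,L_1)$-smooth descent lemma to $f(\boldsymbol u_t)$, I would also split $\nabla f(\boldsymbol u_t) = \nabla f(\boldsymbol x_t) + (\nabla f(\boldsymbol u_t) - \nabla f(\boldsymbol x_t))$ and bound the second piece using Assumption \ref{assump:generalsmooth} together with $\|\boldsymbol u_t - \boldsymbol x_t\| = \frac{\beta_1/\sqrt{\beta_2}}{1-\beta_1/\sqrt{\beta_2}}\|\boldsymbol x_t - \boldsymbol x_{t-1}\| = \mathcal O(\eta)$.

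\textbf{Stage I.} Decompose the first-order term as a main term, which by $\mathbb E[\boldsymbol g_t|\mathcal F_t] = \nabla f(\boldsymbol x_t)$ equals $\sum_{i=1}^d \eta (\partial_i f(\boldsymbol x_t))^2 / \sqrt{\beta_2 \boldsymbol v_{t-1,i} + \zeta}$ exactly as in RMSProp, plus the surrogate error bounded by Lemma \ref{lemma:1}, plus the momentum-mismatch error from $\boldsymbol u_{t+1} - \boldsymbol u_t + \eta \boldsymbol g_t/\sqrt{\tilde{\boldsymbol v}_t+\zeta}$, plus the $\nabla f(\boldsymbol u_t) - \nabla f(\boldsymbol x_t)$ error. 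For the second-order and additional terms in \eqref{eq:generalsmooth} one still has $\|\boldsymbol x_{t+1} - \boldsymbol x_t\| \le \eta\sqrt{d}/\sqrt{1-\beta_2}$ and, for the $L_1$ term, $|\partial_i f(\boldsymbol x_t)|\,|\boldsymbol x_{t+1,i} - \boldsymbol x_{t,i}| \le \frac{(\partial_i f(\boldsymbol x_t))^2 + \eta^2 \boldsymbol m_{t,i}^2}{2\sqrt{\boldsymbol v_{t,i} + \zeta}}$; the genuinely new ingredient is a lemma of the form $\sum_{t=1}^T \frac{\boldsymbol m_{t,i}^2}{\sqrt{\boldsymbol v_{t,i} + \zeta}} \le \mathcal O\big(\sum_{t=1}^T \frac{(\partial_i f(\boldsymbol x_t))^2}{\sqrt{\beta_2 \boldsymbol v_{t-1,i} + \zeta}}\big)$ plus small errors, proved by writing $\boldsymbol m_{t,i}$ as a weighted sum of past $\boldsymbol g_{s,i}$, using $\boldsymbol v_{t,i} \ge (1-\beta_2)\beta_2^{t-s} \boldsymbol g_{s,i}^2$, Cauchy--Schwarz over geometric weights with ratio $\beta_1^2/\beta_2 \le 1$, and then Assumption \ref{assump:variance} to convert $\boldsymbol g_{s,i}^2$ into $D_0 + D_1 (\partial_i f(\boldsymbol x_s))^2$. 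Summing \eqref{eq:generalsmooth} over $t$, telescoping the difference terms as $\beta_2 \to 1$, absorbing the $(\partial_i f(\boldsymbol x_t))^2/\sqrt{\cdot}$ contributions into the positive left-hand term for $\eta$ and $1-\beta_2$ of order $\epsilon^2$, and dividing by $T$ gives the Adam analogue of Lemma \ref{co:1}: $\frac{1}{T}\sum_{t=1}^T \mathbb E[\|\nabla f(\boldsymbol x_t)\|^2 / \sqrt{\beta_2\|\boldsymbol v_{t-1}\| + \zeta}] \le \mathcal O(\epsilon^2)$.

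\textbf{Stages II and III.} Stage II is immediate, since Lemma \ref{lemma:2} was already noted to depend only on Assumption \ref{assump:variance} and the recursion for $\boldsymbol v_t$, hence holds verbatim for Adam. Stage III combines Hölder's inequality \eqref{eq:stage2} with the Stage I and Stage II bounds to obtain $e^2 \le \mathcal O(\epsilon^2)\big(c' + \frac{2\sqrt{dD_1}}{\sqrt{C_6}} e\big)$ for $e = \frac{1}{T}\sum_{t=1}^T \mathbb E[\|\nabla f(\boldsymbol x_t)\|]$, and solving this quadratic under the stated smallness condition on $\epsilon$ (so the linear-in-$e$ term is dominated) yields $e \le \big(2c + \sqrt{2c} + \frac{4\sqrt{dD_1}}{\sqrt{C_6}}\big)\epsilon$.

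I expect the main obstacle to be Stage I: one must simultaneously control the momentum-mismatch error in the first-order term and the new $\sum_{t} \boldsymbol m_{t,i}^2/\sqrt{\boldsymbol v_{t,i}+\zeta}$ bound, and the difficulty is precisely that $\boldsymbol m_{t,i}$ and $\boldsymbol v_{t,i}$ are statistically dependent, so the coordinate-wise conditional-expectation manipulations available for RMSProp (Challenge 4) do not apply; every one of these error terms must moreover be re-expressed through the single quantity $\sum_i \eta (\partial_i f(\boldsymbol x_t))^2 / \sqrt{\beta_2 \boldsymbol v_{t-1,i} + \zeta}$ with constants small enough to be absorbed on the left, and keeping these constants explicit (the $C_2, C_6, \alpha_0$ of the formal statement) is where most of the bookkeeping lies.
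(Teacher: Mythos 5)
Your proposal is correct in outline and follows the paper's architecture — the potential function $\boldsymbol u_t$, the split $\nabla f(\boldsymbol u_t)=\nabla f(\boldsymbol x_t)+(\nabla f(\boldsymbol u_t)-\nabla f(\boldsymbol x_t))$, the three stages, the verbatim reuse of Lemma \ref{lemma:2}, and the H\"older/quadratic finish are exactly what the paper does (Lemma \ref{formal:lemma3}, Lemma \ref{co:2}, Appendix \ref{proof:theorem2}) — but your treatment of the momentum-squared error sums is a genuinely different key lemma. The paper never converts $\boldsymbol m_{t,i}^2$ back into $\boldsymbol g_{t,i}^2$: it bounds $\sum_t \boldsymbol m_{t,i}^2/(\boldsymbol v_{t,i}+\zeta)$ by a logarithm (Lemma \ref{lemma:5}) and $\sum_t \boldsymbol m_{t,i}^2/\sqrt{\boldsymbol v_{t,i}+\zeta}$ by $\frac{2(\sqrt{\boldsymbol v_{T,i}}-\sqrt{\boldsymbol v_{0,i}})}{1-\beta_2}+2\sum_t\sqrt{\boldsymbol v_{t-1,i}}$ (Lemma \ref{lemma:6}), and then turns $\sqrt{\boldsymbol v_{t,i}}$ into \emph{first} powers of gradient norms via the recursive Jensen argument; this is why the paper's Stage I bound (Lemma \ref{co:2}) carries the extra term $\frac{\epsilon}{T}\sum_t\mathbb E[\|\nabla f(\boldsymbol x_t)\|]$ and why its Stage III quadratic has the cross terms $ce\epsilon+\frac{e^2}{2}$. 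Your route instead reduces $\sum_t \boldsymbol m_{t,i}^2/\sqrt{\boldsymbol v_{t,i}+\zeta}\le \frac{(1-\beta_1)^2}{(1-\beta_1\beta_2^{-1/4})^2}\sum_t \boldsymbol g_{t,i}^2/\sqrt{\boldsymbol v_{t,i}+\zeta}$ pathwise (geometric-weight Cauchy--Schwarz, using $\boldsymbol v_{t,i}+\zeta\ge\beta_2^{t-s}(\boldsymbol v_{s,i}+\zeta)$) and then applies Assumption \ref{assump:variance} after lower-bounding the denominator by the $\mathcal F_s$-measurable $\sqrt{\beta_2\boldsymbol v_{s-1,i}+\zeta}$; this is legitimate and, since the prefactor of these terms is $\mathcal O(\eta^3/(1-\beta_2))$ with $\eta\lesssim 1-\beta_2\sim\epsilon^2$, the accumulated $TD_0/\sqrt{\zeta}$ piece contributes $\mathcal O(\epsilon^2)$ per iteration and the $D_1(\partial_i f(\boldsymbol x_t))^2/\sqrt{\beta_2\boldsymbol v_{t-1,i}+\zeta}$ piece is absorbed on the left, so you get a cleaner Stage I bound of pure $\mathcal O(\epsilon^2)$ and a simpler Stage III than the paper's. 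Two small corrections to your sketch: the correct geometric ratio for the $1/\sqrt{\boldsymbol v}$ version is $\beta_1/\sqrt[4]{\beta_2}$ (requiring $\beta_1^4<\beta_2$, which $\beta_1\le\sqrt{\beta_2}$ guarantees), not $\beta_1^2/\beta_2$, which is the condition for the $1/\boldsymbol v$ version in Lemma \ref{lemma:5}; and $\|\boldsymbol u_t-\boldsymbol x_t\|=\mathcal O(\eta/\sqrt{1-\beta_2})=\mathcal O(\epsilon)$ rather than $\mathcal O(\eta)$, by Lemma \ref{lemma:4} — still small enough for your argument, but it matters when tracking constants. Also note that your "surrogate error bounded by Lemma \ref{lemma:1}" step in fact needs the Adam version: the error carries $\boldsymbol m_t$, not $\boldsymbol g_t$, so one first invokes $|\boldsymbol m_{t,i}|/\sqrt{\boldsymbol v_{t,i}+\zeta}\le(1-\beta_1)/(\sqrt{1-\beta_2}\,C_2)$ before repeating the Lemma \ref{lemma:1} computation, as in the paper's Error-1 bound.
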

To the best of our knowledge, Theorem \ref{theorem:2} is the first convergence result of Adam to a stationary point under some of the most relaxed assumptions of  $(L_0,L_1)$-smoothness and affine gradient noise. In \citet{li2023convergence,wang2023closing}, the authors show that the Adam converges to a stationary point with affine noise variance. However, their methods only work for $L$-smooth objectives, while in this paper we focus on the $(L_0,L_1)$-smooth functions. Moreover, we show that for Adam, the overall computational complexity matches with the lower bound in \citet{arjevani2023lower}, while there is an additional 
logarithmic term in \citet{li2023convergence}. The normalized momentum algorithm \citep{jin2021non} can also be viewed as a special case of the Adam family, which applies the momentum on the first-order gradient. However, in their algorithm, a mini-batch of samples is required in each training iteration, while we do not require such a mini-batch. Thus, in the distributed setting with heterogeneous data, where the data distributions under each computational node are different, Algorithm \ref{alg:1} can be used directly. However, the normalized momentum in \citet{jin2021non} may require gradient information from many computational nodes, making the problem degrade to the centralized setting.  

The formal version of the theorem and detailed proof can be found in Appendix \ref{proof:theorem2}. Below, we provide a proof sketch to underscore our key technical novelties. 
\begin{proof}[Proof Sketch]
Similar to the proof of RMSProp, we divided our proof into three stages. The key difference lies in  Stage I, which is because of the dependence between $\boldsymbol m_t$ and $\tilde{\boldsymbol v}_t$.

\textbf{Stage I:} upper bound of $\mathbb E\left[\frac{ \|\nabla f(\boldsymbol x_t)\|^2}{\sqrt{{\beta_2 \|\boldsymbol v_{t-1}} \|+ \zeta}}\right]$. 
For the Adam optimizer, the first-order.a term  $\mathbb E\left[\left\langle \nabla f(\boldsymbol x_t), \frac{-\eta \boldsymbol m_t}{\sqrt{\tilde{\boldsymbol v}_t+\zeta}} \right\rangle\Big |\mathcal F_t\right]$ is challenging to bound due to the dependence between $\boldsymbol m_t$ and $\tilde{\boldsymbol v}_t$. Following the recent analyses of SGDM \citep{liu2020improved} and Adam \citep{wang2023closing}, we study a potential function $f(\boldsymbol u_t)$ with $\boldsymbol u_t=\frac{\boldsymbol x_t-\frac{\beta_1}{\sqrt{\beta_2}}\boldsymbol x_{t-1}}{1-\frac{\beta_1}{\sqrt{\beta_2}}}$. The benefit of analyzing $\boldsymbol u_t$ is that we have $\mathbb E\left[\left\langle \nabla f(\boldsymbol x_t), \boldsymbol u_t-\boldsymbol u_{t+1} \right\rangle|\mathcal F_t\right]\approx \frac{(1-\beta_1)}{1-\frac{\beta_1}{\sqrt{\beta_2}}}\mathbb E\left[\left\langle \nabla f(\boldsymbol x_t),\frac{\eta \boldsymbol g_t}{\sqrt{\tilde{\boldsymbol v}_t+\zeta}}\right\rangle\Big |\mathcal F_t\right]$, where the numerator and denominator in the last term are independent.

In Step 1.1, we will show that the LHS of the descent lemma for $f(\boldsymbol u_t)$ (shown in \eqref{eq:7eq1}) is lower bounded by a function of $\sum_{i=1}^d\frac{\eta (\partial_i f(\boldsymbol x_{t}))^2}{\sqrt{{\beta_2 \boldsymbol v_{t-1,i}} + \zeta}}$; and in Step 1.2 we will show the RHS of the descent lemma of $f(\boldsymbol u_t)$ is upper bounded by a function of $\sum_{i=1}^d\frac{\eta (\partial_i f(\boldsymbol x_{t}))^2}{\sqrt{{\beta_2 \boldsymbol v_{t-1,i}} + \zeta}}$. Combining the two steps, we will obtain an upper bound on $\sum_{i=1}^d\frac{\eta (\partial_i f(\boldsymbol x_{t}))^2}{\sqrt{{\beta_2 \boldsymbol v_{t-1,i}} + \zeta}}$.

\textit{Step $1.1$: lower bound of the first-order term ${\mathbb E[\langle \nabla f(\boldsymbol u_t), \boldsymbol u_{t}-\boldsymbol u_{t+1} \rangle|\mathcal F_t]}$.} We first divide the first-order term into two parts: 
\begin{flalign}
    &{\mathbb E[\langle \nabla f(\boldsymbol u_t), \boldsymbol u_{t}-\boldsymbol u_{t+1} \rangle|\mathcal F_t]} ={\mathbb E[\langle \nabla f(\boldsymbol x_t), \boldsymbol u_{t}-\boldsymbol u_{t+1} \rangle|\mathcal F_t]} +{\mathbb E\left[\left\langle \nabla f(\boldsymbol u_t)-\nabla f(x_t), \boldsymbol u_{t}-\boldsymbol u_{t+1} \right\rangle|\mathcal F_t\right]}.\nonumber
\end{flalign}
 The first part mimics the first-order term in the proof of RMSProp and the second one is due to a mismatch between $\boldsymbol u_t$ and $\boldsymbol x_t$. 
By bounding these two parts separately, we have a lemma similar to Lemma \ref{lemma:1}
but with two additional terms: $\mathbb E\left[\frac{\boldsymbol m_{t,i}^2}{{\boldsymbol v_{t,i}+\zeta}}\Big |\mathcal F_t\right]$ and $\mathbb E\left[\frac{\boldsymbol m_{t,i}^2}{\sqrt{\boldsymbol v_{t,i}+\zeta}}\Big |\mathcal F_t\right]$ due to the employment of the first-order momentum (see details in Lemma \ref{formal:lemma3}).
For the additional terms, \citet{wang2023closing} showed that $\sum_{t=1}^T\frac{\boldsymbol m_{t,i}^2}{{\boldsymbol v_{t,i}}}$
can be bounded by a function of $\boldsymbol  v_{T,i}$ (shown in Lemma \ref{lemma:5}). It is worth noting that due to the $(L_0,L_1)$-smoothness our objective is harder to bound and only Lemma \ref{lemma:5} is not enough. Here, we bound the $\sum_{t=1}^T\frac{\boldsymbol m_{t,i}^2}{\sqrt{\boldsymbol v_{t,i}}}$ term by a function of $\frac{\sqrt{\boldsymbol v_{T,i}}-\sqrt{\boldsymbol v_{0,i}}}{1-\beta_2}+\sum_{t=1}^T \sqrt{\boldsymbol v_{t-1,i}}$ (the details can be found in Lemma \ref{lemma:6}).

\textit{Step 1.2: upper bound on the second-order and additional terms.} 
Based on the update process of $\boldsymbol u_t$ and $\boldsymbol x_t$, we bound the second-order and additional terms similarly to those in \eqref{eq:second} and \eqref{eq:addtional}, but with $\boldsymbol g_t$ replaced by $\boldsymbol m_t$ (see details in \eqref{eq:7eq2} and \eqref{eq:7eq3}). Unlike in the proof of RMSProp, where $\mathbb E\left[\frac{\boldsymbol g_{t,i}^2}{\sqrt{\beta_2\boldsymbol v_{t,i}+\zeta}}\Big|\mathcal F_t\right]$ can be bounded by $\frac{D_0}{\sqrt{\zeta}}+\frac{D_1 (\partial_i f(\boldsymbol x_t))^2}{\sqrt{\boldsymbol v_{t,i}+\zeta}}$, Assumption \ref{assump:variance} does not hold for $\boldsymbol m_t$ and this is the reason why two terms $\mathbb E\left[\frac{\boldsymbol m_{t,i}^2}{{\boldsymbol v_{t,i}+\zeta}}\Big |\mathcal F_t\right]$ and $\mathbb E\left[\frac{\boldsymbol m_{t,i}^2}{\sqrt{\boldsymbol v_{t,i}+\zeta}}\Big |\mathcal F_t\right]$ are kept in the upper bound of the second-order and additional terms.
Then based on the descent lemma of $f(\boldsymbol u_t)$, we can show $\mathbb E\left[\sum_{i=1}^d\frac{\eta (\partial_i f(\boldsymbol x_{t}))^2}{\sqrt{{\beta_2 \boldsymbol v_{t-1,i}} + \zeta}}\right]$ can be upper bounded by a function of $\sum_{t=1}^T\frac{\boldsymbol m_{t,i}^2}{{\boldsymbol v_{t,i}}}$ and $\sum_{t=1}^T\frac{\boldsymbol m_{t,i}^2}{\sqrt{\boldsymbol v_{t,i}}}$, which further can be bounded by a function of $\mathbb E[\nabla f(\boldsymbol x_t)]$. We then get the following lemma
\begin{lemma} \label{co:2}
    Let Assumptions \ref{assump:lowerbound}, \ref{assump:variance} and \ref{assump:generalsmooth} hold. With the same parameters as in Theorem \ref{theorem:2}, we have that
    \begin{flalign}
        \frac{1}{T} \sum_{t=1}^T \mathbb E[\frac{ \|\nabla f(\boldsymbol x_t)\|^2}{\sqrt{{\beta_2 \|\boldsymbol v_{t-1}} \|+ \zeta}} ]\le \epsilon^2+\frac{\epsilon}{T}\sum_{t=1}^T\mathbb E[\|\nabla f(\boldsymbol x_t)\|].\nonumber
    \end{flalign}
\end{lemma}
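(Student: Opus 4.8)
The plan is to reuse the three-stage architecture of the RMSProp proof: Stage II is Lemma \ref{lemma:2} itself and Stage III is an elementary H\"older/quadratic-formula manipulation, both of which transfer with only minor changes (they rely solely on Assumption \ref{assump:variance} and the update rule for $\boldsymbol v_t$), so all the new content is the Stage-I bound, Lemma \ref{co:2}, which is what we now establish. The key device, following \cite{liu2020improved,wang2023closing}, is to apply the $(L_0,L_1)$-smooth descent inequality \eqref{eq:generalsmooth} not at $\boldsymbol x_t$ but at the auxiliary iterate $\boldsymbol u_t=\frac{\boldsymbol x_t-(\beta_1/\sqrt{\beta_2})\boldsymbol x_{t-1}}{1-\beta_1/\sqrt{\beta_2}}$, which is well defined since $\beta_1/\sqrt{\beta_2}<1$. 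A short computation with the update rule (together with $\sqrt{\beta_2\boldsymbol v_{t-1,i}+\zeta}\ge\sqrt{\beta_2}\sqrt{\boldsymbol v_{t-1,i}+\zeta}$, which is why the weight $\beta_1/\sqrt{\beta_2}$ appears) shows $\boldsymbol u_{t+1}-\boldsymbol u_t=-\frac{1-\beta_1}{1-\beta_1/\sqrt{\beta_2}}\,\eta\,\frac{\boldsymbol g_t}{\sqrt{\tilde{\boldsymbol v}_t+\zeta}}$ up to correction terms of order $1-\beta_2$ and $\sqrt{\boldsymbol v_{t-1,i}+\zeta}-\sqrt{\boldsymbol v_{t,i}+\zeta}$, where $\tilde{\boldsymbol v}_t=\beta_2\boldsymbol v_{t-1}$. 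Thus the leading part of the first-order term of the $f(\boldsymbol u_t)$ descent inequality has numerator and denominator that are conditionally independent given $\mathcal F_t$, exactly the structure exploited for RMSProp.

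First I would write $\mathbb E[\langle\nabla f(\boldsymbol u_t),\boldsymbol u_t-\boldsymbol u_{t+1}\rangle|\mathcal F_t]=\mathbb E[\langle\nabla f(\boldsymbol x_t),\boldsymbol u_t-\boldsymbol u_{t+1}\rangle|\mathcal F_t]+\mathbb E[\langle\nabla f(\boldsymbol u_t)-\nabla f(\boldsymbol x_t),\boldsymbol u_t-\boldsymbol u_{t+1}\rangle|\mathcal F_t]$. The first summand is treated verbatim as in Step 1.1 of the RMSProp proof: split off the first-order.a part, which equals $\frac{1-\beta_1}{1-\beta_1/\sqrt{\beta_2}}\sum_i\frac{\eta(\partial_i f(\boldsymbol x_t))^2}{\sqrt{\beta_2\boldsymbol v_{t-1,i}+\zeta}}$, and lower-bound the surrogate-error (first-order.b) part by the Lemma \ref{lemma:1}-type estimate. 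The second, mismatch summand is controlled by coordinate-wise $(L_0,L_1)$-smoothness together with $\|\boldsymbol u_t-\boldsymbol x_t\|\le\frac{\beta_1/\sqrt{\beta_2}}{1-\beta_1/\sqrt{\beta_2}}\|\boldsymbol x_t-\boldsymbol x_{t-1}\|$ and Young's inequality. Both of these, as well as the bound on the second-order and additional terms of \eqref{eq:generalsmooth} (carried out as in \eqref{eq:second}--\eqref{eq:addtional} but with $\boldsymbol g_t$ replaced throughout by $\boldsymbol m_t$, since Assumption \ref{assump:variance} no longer applies to $\boldsymbol m_t$), leave behind the two genuinely new quantities $\sum_i\mathbb E[\frac{\boldsymbol m_{t,i}^2}{\boldsymbol v_{t,i}+\zeta}|\mathcal F_t]$ and $\sum_i\mathbb E[\frac{\boldsymbol m_{t,i}^2}{\sqrt{\boldsymbol v_{t,i}+\zeta}}|\mathcal F_t]$ in the per-step inequality (this is the content of Lemma \ref{formal:lemma3}).

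Next I would sum the per-step inequality over $t=1,\dots,T$ and take total expectations. The $f(\boldsymbol u_t)$ values telescope and are bounded below by Assumption \ref{assump:lowerbound}; the surrogate-error brackets $\frac{(\partial_i f(\boldsymbol x_{t-1}))^2}{\sqrt{\beta_2\boldsymbol v_{t-1,i}+\zeta}}-\mathbb E[\frac{(\partial_i f(\boldsymbol x_t))^2}{\sqrt{\boldsymbol v_{t,i}+\zeta}}|\mathcal F_t]$ and the analogous $\frac{\eta}{\sqrt{\beta_2\boldsymbol v_{t-1,i}+\zeta}}$ difference collapse up to an $\mathcal O(1-\beta_2)$ residue, exactly as for RMSProp. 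For the two new momentum terms I would invoke Lemma \ref{lemma:5}, which bounds $\sum_t\frac{\boldsymbol m_{t,i}^2}{\boldsymbol v_{t,i}}$ by a function of $\boldsymbol v_{T,i}$, and Lemma \ref{lemma:6}, which bounds $\sum_t\frac{\boldsymbol m_{t,i}^2}{\sqrt{\boldsymbol v_{t,i}}}$ by a function of $\frac{\sqrt{\boldsymbol v_{T,i}}-\sqrt{\boldsymbol v_{0,i}}}{1-\beta_2}+\sum_t\sqrt{\boldsymbol v_{t-1,i}}$. Each such term is carried by an $\mathcal O(\eta^2/\sqrt{1-\beta_2})$ or $\mathcal O(\eta^2/(1-\beta_2))$ prefactor; under $\eta\sim1-\beta_2\sim\mathcal O(\epsilon^2)$ and after dividing by $T\sim\epsilon^{-4}$, the pieces depending only on $\boldsymbol v_{T,i}$ become lower order, while the $\sum_t\sqrt{\boldsymbol v_{t-1,i}}$ piece becomes $\mathcal O(\epsilon^2)\cdot\frac1T\sum_t\mathbb E[\sqrt{\beta_2\|\boldsymbol v_{t-1}\|+\zeta}]$, which by Lemma \ref{lemma:2} equals $\mathcal O(\epsilon^2)(c+\frac{\sqrt{dD_1}}{\sqrt{1-\beta_2}}e)$ with $e:=\frac1T\sum_t\mathbb E[\|\nabla f(\boldsymbol x_t)\|]$; since $1/\sqrt{1-\beta_2}\sim\epsilon^{-1}$, this produces an $\mathcal O(\epsilon)\cdot e$ term, which is precisely the $\frac{\epsilon}{T}\sum_t\mathbb E[\|\nabla f(\boldsymbol x_t)\|]$ appearing in the statement. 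Choosing $1-\beta_2$ and $\eta$ small keeps the coefficient of $\frac1T\sum_t\mathbb E[\sum_i\frac{\eta(\partial_i f(\boldsymbol x_t))^2}{\sqrt{\beta_2\boldsymbol v_{t-1,i}+\zeta}}]$ bounded away from zero, and dividing through — together with the elementary bound $\frac{\|\nabla f(\boldsymbol x_t)\|^2}{\sqrt{\beta_2\|\boldsymbol v_{t-1}\|+\zeta}}\le\sum_i\frac{(\partial_i f(\boldsymbol x_t))^2}{\sqrt{\beta_2\boldsymbol v_{t-1,i}+\zeta}}$ — yields Lemma \ref{co:2}.

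The step I expect to be the main obstacle is the final accounting. One must simultaneously control several competing coefficients — the $\mathcal O(\eta)$, $\mathcal O(\eta^2/\sqrt{1-\beta_2})$ and $\mathcal O(\eta^2/(1-\beta_2))$ contributions coming from the $(L_0,L_1)$ additional term, the surrogate error, and the two momentum terms — and verify that the net coefficient multiplying $\sum_i\frac{\eta(\partial_i f(\boldsymbol x_t))^2}{\sqrt{\beta_2\boldsymbol v_{t-1,i}+\zeta}}$ stays positive and $\Theta(1)$. The most delicate point is the $\sum_t\boldsymbol m_{t,i}^2/\sqrt{\boldsymbol v_{t,i}}$ term, which Lemma \ref{lemma:5} cannot handle and for which Lemma \ref{lemma:6} supplies a $1/(1-\beta_2)$ factor: one has to show that, despite $\sqrt{\boldsymbol v_{T,i}}$ being potentially unbounded, this term contributes no worse than $\mathcal O(\epsilon^2)+\mathcal O(\epsilon)e$ — and it is getting this bound exactly tight that forces the $\epsilon\cdot\frac1T\sum_t\mathbb E[\|\nabla f(\boldsymbol x_t)\|]$ term into the statement of Lemma \ref{co:2} rather than something larger.
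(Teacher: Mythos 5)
Your proposal is correct and follows essentially the same route as the paper: the descent lemma at the auxiliary iterate $\boldsymbol u_t$, the split into a conditionally independent main term, a Lemma~\ref{lemma:1}-type surrogate error (telescoped), and a $(L_0,L_1)$-controlled mismatch term, with the residual momentum quantities handled by Lemmas~\ref{lemma:5} and~\ref{lemma:6} and the $\sum_t\sqrt{\boldsymbol v_{t-1,i}}$ piece converted via the Lemma~\ref{lemma:2} machinery into the $\epsilon\cdot\frac1T\sum_t\mathbb E[\|\nabla f(\boldsymbol x_t)\|]$ term. This matches the paper's proof (Lemma~\ref{formal:lemma3} plus Appendix~\ref{proof:co2}), including the correction terms you note for $\boldsymbol u_{t+1}-\boldsymbol u_t$ and the final coefficient accounting under $\eta\lesssim 1-\beta_2\sim\epsilon^2$.
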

The details of the proof can be found in the Appendix \ref{proof:co2}.

\textbf{Stage \uppercase\expandafter{\romannumeral2}} is the same as the proof of RMSProp and thus is omitted here.

\textbf{Stage \uppercase\expandafter{\romannumeral3}:} upper bound of $\mathbb E\left[{ \|\nabla f(\boldsymbol x_t)\|}\right]$. As we mentioned before, Lemma \ref{lemma:2} and \eqref{eq:stage2} hold for Adam. It is worth noting that in the RHS of (\ref{eq:stage2}), the first term is bounded in Lemma \ref{co:2}, which is more complicated than Lemma \ref{co:1} and has an additional term. Let $e=\frac{1}{T} \sum_{t=1}^T  \mathbb E[\|\nabla f(\boldsymbol x_t)\|]$. By (\ref{eq:stage2}), Lemma \ref{co:2} and Lemma \ref{lemma:2} we have that
\begin{flalign}
    e^2\le c\epsilon^2+\frac{2\sqrt{dD_1}}{\sqrt{C_6}\epsilon}e\epsilon^2+ce\epsilon +\frac{e^2}{2}.
\end{flalign}
Thus, $e\sim \mathcal O(\epsilon)$, which shows the algorithm converges to a stationary point.
\end{proof}

\section{Comparison with Existing Works}
We observe that there are two recent works \citep{li2023convergence,wang2024convergence} on Adam with $(L_0,L_1)$-smooth objectives. In this section, we provide detailed comparisons with them.

\citet{li2023convergence} studies the original Adam where the adaptive learning rate is $\frac{\eta}{\sqrt{\boldsymbol v_t}+\lambda}$ and in this paper we study the modified one where the adaptive learning rate is $\frac{\eta}{\sqrt{\boldsymbol v_t+\zeta}}$. Fig. \ref{fig:1} {and Fig. \ref{fig:2}} demonstrates that this modification has little influence to the model performance. Moreover, 
though computational complexities of the method in \citet{li2023convergence} and our method are dependent on $\zeta$ (e.g.  $\mathcal O(\zeta^{-2})$ for our paper and $\mathcal O(\zeta^{-4})$ for \citet{li2023convergence}), in practice, the selection of $\zeta$ makes minor differences.  {The analysis of \citet{li2023convergence} is fundamentally different from this paper, which relies on a stopping time. Thus the authors in \citet{li2023convergence} only show Adam converges with high probability. Their proof relies on the fact that there exists a large constant $G$ such that $\|\nabla f(\boldsymbol x_t)\|\le G$ for any $t$ before their stopping time, which requires a stronger assumption on gradient noise. Thus the generalized smooth problem is converted to a standard $L$-smooth problem with a large smoothness constant, leading to very small step sizes which make the convergence slow in practice.} In this paper, we do not need this stopping time and we show that the expectation of gradient norm converges, which is stronger than the convergence with high probability.

\begin{figure}[htbp]\includegraphics[width=0.8\linewidth]{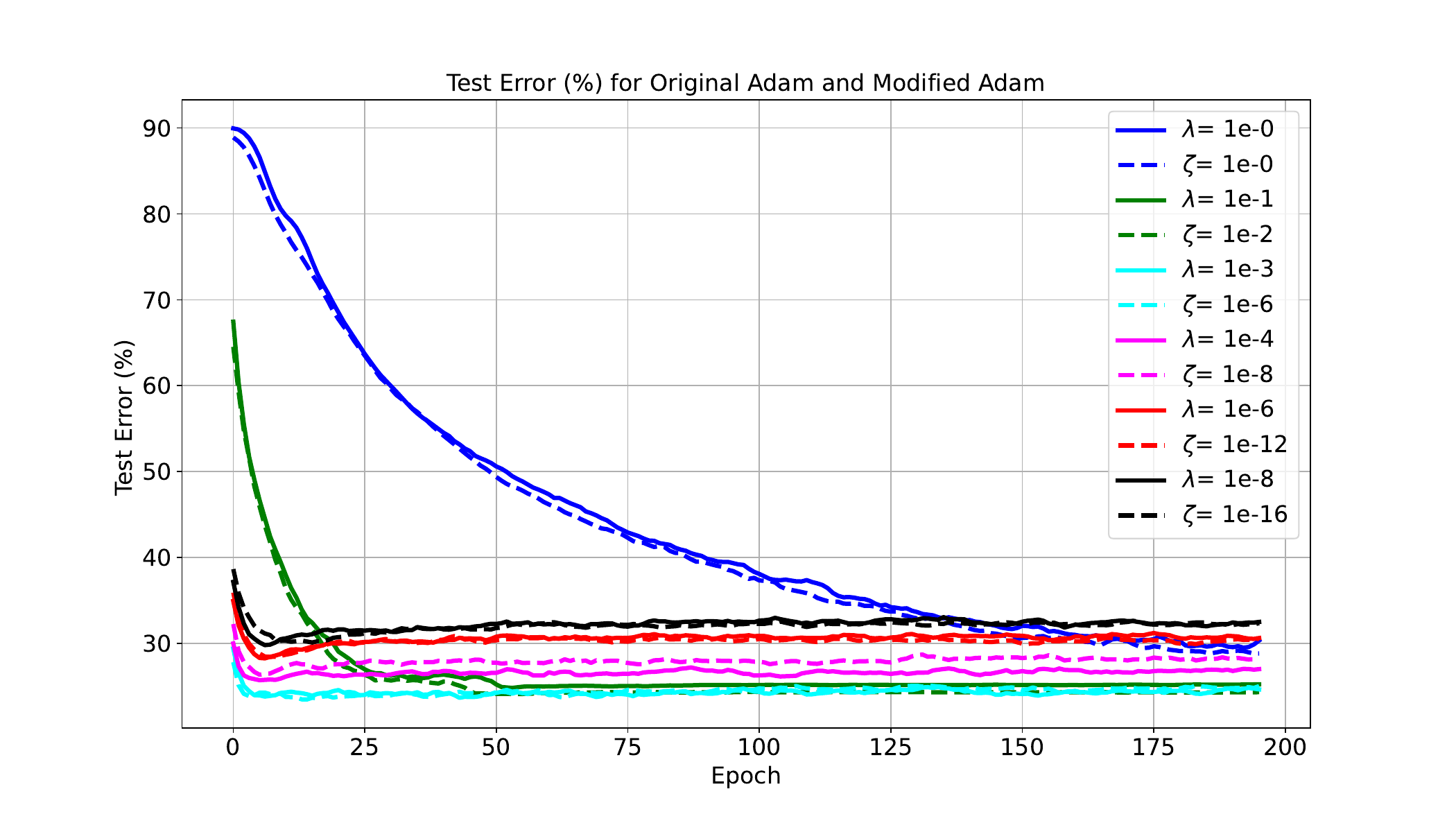}\centering \caption{Test Error for Original Adam and Modified Adam. {The stepsize in the original Adam is set to $\frac{\eta}{\sqrt{\boldsymbol v_t}+\lambda}$ and our stepsize is set to $\frac{\eta}{\sqrt{\boldsymbol v_t+\zeta}}$.} The parameters are the same as CNN task in Fig. 1 of \cite{li2023convergence}, where $\eta=0.001,\beta_1=0.9,\beta_2=0.999$ and we build a six layers CNN for CIFAR 10. }\label{fig:1}
\vspace{-0.3cm}
\end{figure}

\begin{figure}[htbp!]\includegraphics[width=0.6\linewidth]{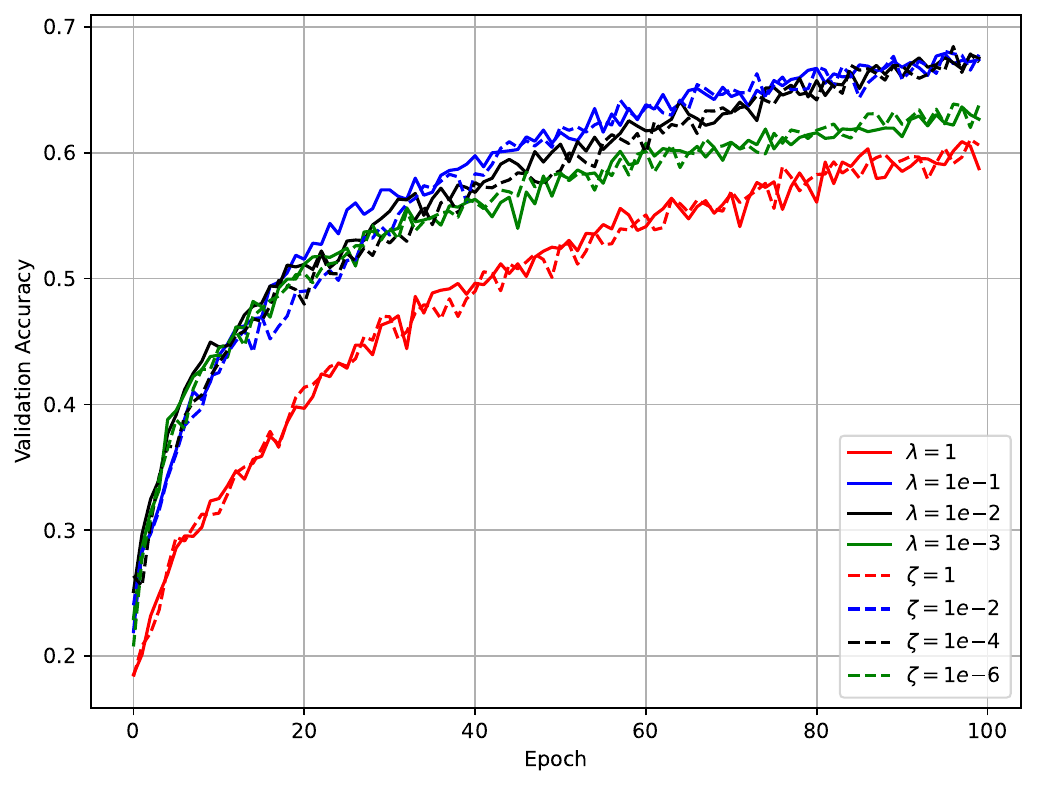}\centering \caption{{Test accuracy for Original Adam and Modified Adam. The stepsize in the original Adam is set to $\frac{\eta}{\sqrt{\boldsymbol v_t}+\lambda}$ and our stepsize is set to $\frac{\eta}{\sqrt{\boldsymbol v_t+\zeta}}$. We follow the setting in \cite{yoshioka2024visiontransformers} to build a vision-transformers for CIFAR 10. The stepsize is set to $\eta=0.001,\beta_1=0.9,\beta_2=0.999$.  }}\label{fig:2}
\vspace{-0.3cm}
\end{figure}

Both this paper and \citet{wang2023closing} extend the work of \citet{wang2023convergence}, which focused on the Adagrad algorithm. Our RMSProp analysis was developed concurrently with \citet{wang2023closing}, but there are significant differences between their proofs and ours. Specifically, we consider the generalized smoothness objectives while \citet{wang2023closing} only considers the $L$-smooth objectives. Thus we choose a different surrogate, modify the Adam and bound the first-order term and its denominator by different functions. When we extend our findings from RMSProp to Adam, we study a potential function $f(\boldsymbol u_t)$ with $\boldsymbol u_t=\frac{\boldsymbol x_t-\beta\boldsymbol x_{t-1}}{1-\beta}$, which was introduced by \cite{liu2020improved} and has been extensively employed in the analysis of momentum-based algorithms. The only thing inspired by \citet{wang2023closing} in this paper {is} to set this $\beta=\frac{\beta_1}{\sqrt{\beta_2}}$. \citet{wang2024convergence} extends the work of \citet{wang2023closing} and is a concurrent work of this paper, which focuses on the scalar version of Adam with $(L_0,L_1)$-smooth objectives. This paper focuses on the per-coordinate version of Adam and our proof is different from \citet{wang2023closing,wang2024convergence}.

\section{Conclusion}
In this paper, we provide tight convergence analyses for RMSProp and Adam for non-convex objectives under some of the most relaxed assumptions of generalized smoothness and affine gradient noise.  The complexity to achieve an $\epsilon$-stationary point for both algorithms is shown to be $\mathcal O(\epsilon^{-4})$, which matches with the lower bound for first-order algorithms established in \citet{arjevani2023lower}.  {In the future, we will explore the convergence of the original Adam with the challenging $(L_0,L_1)$-smoothness condition and affine gradient noise variance. }

\section{Acknowledgment}
The work of Q. Zhang and S. Zou was partially supported by the National Science Foundation under Grants ECCS-2438392 and ECCS-2501649. Y. Zhou’s work was supported by the National Science Foundation under grants DMS-2134223, ECCS-2237830.

\bibliography{main}
\bibliographystyle{tmlr}

\appendix

\section{Formal Version of Lemma \ref{lemma:1} and Its Proof} \label{proof:lemma1}
\begin{lemma}
Under Assumptions \ref{assump:variance} and \ref{assump:generalsmooth}, for any $\alpha_0,\alpha_1>0, \boldsymbol x_0=\boldsymbol x_1$ and $t>0$, we have that
\begin{flalign}\label{eq:lemma33}
    &\mathbb E\left[\left\langle \nabla f(\boldsymbol x_t), \frac{\eta \boldsymbol g_t}{\sqrt{\boldsymbol v_t+\zeta}} -\frac{\eta \boldsymbol g_t}{\sqrt{\beta_2 \boldsymbol {v}_{t-1}+\zeta}}\right\rangle\Big |\mathcal F_t\right] \nonumber\\
    \ge &-\Bigg( \sum_{i=1}^d \frac{\eta (\partial_i f(\boldsymbol x_t))^2}{2\alpha_0\sqrt{{\beta_2 \boldsymbol v_{t-1,i}} + \zeta}} +\sum_{i=1}^d\frac{\eta \alpha_0D_0}{2}\mathbb E\left[\frac{1}{\sqrt{{\beta_2 \boldsymbol v_{t-1,i}} + \zeta}}-\frac{1}{\sqrt{{ \boldsymbol v_{t,i}} + \zeta}}\Big |\mathcal F_t\right]\nonumber\\
    &+\sum_{i=1}^d\frac{\eta \alpha_0D_1}{2}\mathbb E\left[\frac{(\partial_i f(\boldsymbol x_{t-1}))^2}{\sqrt{{\beta_2 \boldsymbol v_{t-1,i}} + \zeta}}-\frac{(\partial_i f(\boldsymbol x_{t}))^2}{\sqrt{{ \boldsymbol v_{t,i}} + \zeta}}\Big |\mathcal F_t\right]\nonumber\\
    &+\sum_{i=1}^d \frac{\eta \alpha_0D_1}{2\sqrt{{\beta_2 \boldsymbol v_{t-1,i}} + \zeta}}\left(\frac{(\partial_i f(\boldsymbol x_t))^2}{\alpha_1 D_1}+\frac{\alpha_1d D_1L_0^2\eta^2}{1-\beta_2}+\frac{2\sqrt{d}\eta L_1(\partial_i f(\boldsymbol x_t))^2}{\sqrt{1-\beta_2}}\right)\Bigg)
\end{flalign}
\end{lemma}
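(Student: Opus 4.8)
The plan is to view the left‑hand side of \eqref{eq:lemma33} as the surrogate error (the ``first‑order.b'' term) and to lower‑bound it by pushing every $\boldsymbol g_t$‑dependent factor through the affine‑noise bound of \Cref{assump:variance}, leaving the $\mathcal F_t$‑measurable pieces intact. First I would introduce the surrogate $\tilde{\boldsymbol v}_t=\beta_2\boldsymbol v_{t-1}$, which is $\mathcal F_t$‑measurable, and set $A_{t,i}:=\frac{1}{\sqrt{\beta_2\boldsymbol v_{t-1,i}+\zeta}}-\frac{1}{\sqrt{\boldsymbol v_{t,i}+\zeta}}$. Since the $\boldsymbol v_t$‑update gives $\boldsymbol v_{t,i}=\beta_2\boldsymbol v_{t-1,i}+(1-\beta_2)\boldsymbol g_{t,i}^2\ge\beta_2\boldsymbol v_{t-1,i}$, we have $A_{t,i}\ge 0$, and rationalizing the difference of the two reciprocal square roots gives $A_{t,i}=\frac{(1-\beta_2)\boldsymbol g_{t,i}^2}{\sqrt{\beta_2\boldsymbol v_{t-1,i}+\zeta}\,\sqrt{\boldsymbol v_{t,i}+\zeta}\,(\sqrt{\boldsymbol v_{t,i}+\zeta}+\sqrt{\beta_2\boldsymbol v_{t-1,i}+\zeta})}$; consequently the left‑hand side of \eqref{eq:lemma33} equals $-\eta\sum_{i=1}^d\partial_i f(\boldsymbol x_t)\,\mathbb E[\boldsymbol g_{t,i}A_{t,i}|\mathcal F_t]$. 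The whole argument relies on $\zeta$ sitting \emph{inside} the square roots (our modified stepsize): this keeps every denominator bounded below by a strictly positive $\mathcal F_t$‑measurable quantity, and it makes $\boldsymbol v_{t,i}+\zeta$ a convex combination of $\boldsymbol v_{t-1,i}+\zeta$ and $\boldsymbol g_{t,i}^2+\zeta$, which is precisely what lets the residual terms telescope after summing over $t$.

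Next I would apply Young's inequality with the free parameter $\alpha_0$, namely $|\partial_i f(\boldsymbol x_t)\boldsymbol g_{t,i}|\le\frac{(\partial_i f(\boldsymbol x_t))^2}{2\alpha_0}+\frac{\alpha_0}{2}\boldsymbol g_{t,i}^2$, to split the sum into a gradient part and a stochastic part. For the gradient part the pointwise bound $A_{t,i}\le\frac{1}{\sqrt{\beta_2\boldsymbol v_{t-1,i}+\zeta}}$ gives at once $-\frac{\eta}{2\alpha_0}\sum_i\frac{(\partial_i f(\boldsymbol x_t))^2}{\sqrt{\beta_2\boldsymbol v_{t-1,i}+\zeta}}$, the first term on the right‑hand side of \eqref{eq:lemma33}. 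The stochastic part is $-\frac{\eta\alpha_0}{2}\sum_i\mathbb E[\boldsymbol g_{t,i}^2A_{t,i}|\mathcal F_t]$, and here is the crux. Using the rationalized expression together with $\sqrt{\boldsymbol v_{t,i}+\zeta}\ge\sqrt{\beta_2\boldsymbol v_{t-1,i}+\zeta}$ I would peel off the bounded factor $\frac{(1-\beta_2)\boldsymbol g_{t,i}^2}{(\sqrt{\boldsymbol v_{t,i}+\zeta}+\sqrt{\beta_2\boldsymbol v_{t-1,i}+\zeta})^2}\in[0,1]$ and, after taking conditional expectations (using that $\sqrt{\beta_2\boldsymbol v_{t-1,i}+\zeta}$ is $\mathcal F_t$‑measurable and hence independent of $\boldsymbol g_{t,i}$ under the expectation), reduce the estimate of $\mathbb E[\boldsymbol g_{t,i}^2A_{t,i}|\mathcal F_t]$ to one involving the product $\mathbb E[\boldsymbol g_{t,i}^2|\mathcal F_t]\,\mathbb E[A_{t,i}|\mathcal F_t]$. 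Plugging in $\mathbb E[\boldsymbol g_{t,i}^2|\mathcal F_t]\le D_0+D_1(\partial_i f(\boldsymbol x_t))^2$ from \Cref{assump:variance} produces $-D_0\,\mathbb E[A_{t,i}|\mathcal F_t]-D_1(\partial_i f(\boldsymbol x_t))^2\,\mathbb E[A_{t,i}|\mathcal F_t]$: the first is the stated $D_0$‑telescoping term, and for the second I would write $(\partial_i f(\boldsymbol x_t))^2\mathbb E[A_{t,i}|\mathcal F_t]=\frac{(\partial_i f(\boldsymbol x_t))^2}{\sqrt{\beta_2\boldsymbol v_{t-1,i}+\zeta}}-\mathbb E\big[\tfrac{(\partial_i f(\boldsymbol x_t))^2}{\sqrt{\boldsymbol v_{t,i}+\zeta}}\big|\mathcal F_t\big]$.

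To match the telescoping $D_1$‑term in \eqref{eq:lemma33} I still have to replace $(\partial_i f(\boldsymbol x_t))^2$ by $(\partial_i f(\boldsymbol x_{t-1}))^2$ in the first of these two summands. The difference $(\partial_i f(\boldsymbol x_t))^2-(\partial_i f(\boldsymbol x_{t-1}))^2=(\partial_i f(\boldsymbol x_t)-\partial_i f(\boldsymbol x_{t-1}))(\partial_i f(\boldsymbol x_t)+\partial_i f(\boldsymbol x_{t-1}))$ is controlled by \Cref{assump:generalsmooth} together with the deterministic one‑step bound $\|\boldsymbol x_t-\boldsymbol x_{t-1}\|\le\frac{\eta\sqrt d}{\sqrt{1-\beta_2}}$ (which follows from $\boldsymbol v_{t-1,i}\ge(1-\beta_2)\boldsymbol g_{t-1,i}^2$); the cross term $L_0|\partial_i f(\boldsymbol x_t)|$ that appears is then split by one more Young inequality with the second free parameter $\alpha_1$. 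After dividing by $\sqrt{\beta_2\boldsymbol v_{t-1,i}+\zeta}$ this yields exactly the last bracketed term of \eqref{eq:lemma33}, and the remaining $\mathcal O(\eta^2)$ and $\mathcal O(1-\beta_2)$ pieces are absorbed into its constants. The convention $\boldsymbol x_0=\boldsymbol x_1$ makes the gradient‑change estimate at $t=1$ trivial. Collecting the four contributions gives \eqref{eq:lemma33}.

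I expect the stochastic part in the second paragraph to be the genuine obstacle. Upper‑bounding $\mathbb E[\boldsymbol g_{t,i}^2A_{t,i}|\mathcal F_t]$ is delicate because $\boldsymbol v_{t,i}$ depends on $\boldsymbol g_{t,i}$, so one cannot simply take the conditional expectation of a product, and because only a second moment of $\boldsymbol g_{t,i}$ is available under \Cref{assump:variance}, so any estimate that leaves behind $\mathbb E[\boldsymbol g_{t,i}^4|\mathcal F_t]$ is useless. The point is to lose \emph{exactly} the bounded $[0,1]$ factor above---enough to fall back on $\mathbb E[\boldsymbol g_{t,i}^2|\mathcal F_t]$ and the $\mathcal F_t$‑measurable surrogate, and no more---so that the leftover is the telescoping $\frac{1}{\sqrt{\beta_2\boldsymbol v_{t-1,i}+\zeta}}-\mathbb E[\tfrac{1}{\sqrt{\boldsymbol v_{t,i}+\zeta}}|\mathcal F_t]$ rather than the non‑summable $\frac{1}{\sqrt{\beta_2\boldsymbol v_{t-1,i}+\zeta}}$. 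This is exactly the step that breaks for the unmodified $\sqrt{\boldsymbol v_t}+\zeta$ denominator and that \cite{zaheer2018adaptive} avoided by assuming a bounded gradient.
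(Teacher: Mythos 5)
Your skeleton matches the paper's in most places (the rationalization identity for $\frac{1}{\sqrt{\beta_2\boldsymbol v_{t-1,i}+\zeta}}-\frac{1}{\sqrt{\boldsymbol v_{t,i}+\zeta}}$, the split into a $D_0$ telescoping term and a $D_1$ telescoping term, the replacement of $(\partial_i f(\boldsymbol x_t))^2$ by $(\partial_i f(\boldsymbol x_{t-1}))^2$ controlled via Assumption \ref{assump:generalsmooth}, the bound $\|\boldsymbol x_t-\boldsymbol x_{t-1}\|\le \eta\sqrt d/\sqrt{1-\beta_2}$, the Young step with $\alpha_1$, and the convention $\boldsymbol x_0=\boldsymbol x_1$). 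However, the step you yourself flag as the crux has a genuine gap. After applying Young's inequality pointwise to $|\partial_i f(\boldsymbol x_t)\,\boldsymbol g_{t,i}|$ you are left with $\mathbb E[\boldsymbol g_{t,i}^2 A_{t,i}\,|\,\mathcal F_t]$, and you propose to ``reduce'' this to $\mathbb E[\boldsymbol g_{t,i}^2|\mathcal F_t]\,\mathbb E[A_{t,i}|\mathcal F_t]$ by peeling off a bounded factor and invoking $\mathcal F_t$-measurability of $\sqrt{\beta_2\boldsymbol v_{t-1,i}+\zeta}$. That factorization is not valid: $A_{t,i}$ depends on $\boldsymbol g_{t,i}$ through $\boldsymbol v_{t,i}=\beta_2\boldsymbol v_{t-1,i}+(1-\beta_2)\boldsymbol g_{t,i}^2$, and since both $\boldsymbol g_{t,i}^2$ and $A_{t,i}$ are nondecreasing functions of $\boldsymbol g_{t,i}^2$, the association (Chebyshev/FKG) inequality gives $\mathbb E[\boldsymbol g_{t,i}^2A_{t,i}|\mathcal F_t]\ \ge\ \mathbb E[\boldsymbol g_{t,i}^2|\mathcal F_t]\,\mathbb E[A_{t,i}|\mathcal F_t]$, i.e.\ the inequality runs in the wrong direction; the measurability of $\boldsymbol v_{t-1,i}$ is irrelevant to the problematic dependence. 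The only pointwise escapes are $A_{t,i}\le 1/\sqrt{\beta_2\boldsymbol v_{t-1,i}+\zeta}$ (which destroys the telescoping structure you correctly identify as essential) or bounds that leave behind third/fourth moments of $\boldsymbol g_{t,i}$, which Assumption \ref{assump:variance} does not provide.

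The paper closes exactly this hole by ordering the operations differently: it first uses $\sqrt{1-\beta_2}\,|\boldsymbol g_{t,i}|\le\sqrt{\boldsymbol v_{t,i}+\zeta}$ to reduce the surrogate error to $\frac{\eta|\partial_i f(\boldsymbol x_t)|}{\sqrt{\beta_2\boldsymbol v_{t-1,i}+\zeta}}\,\mathbb E\big[\tfrac{\sqrt{1-\beta_2}\,\boldsymbol g_{t,i}^2}{\sqrt{\boldsymbol v_{t,i}+\zeta}+\sqrt{\beta_2\boldsymbol v_{t-1,i}+\zeta}}\,\big|\,\mathcal F_t\big]$, then applies Young with $\alpha_0$ so that the \emph{square of this conditional expectation} appears, and only then applies Cauchy--Schwarz/H\"older to that squared conditional expectation, which legitimately yields the product $\mathbb E[\boldsymbol g_{t,i}^2|\mathcal F_t]\cdot\mathbb E\big[\tfrac{(1-\beta_2)\boldsymbol g_{t,i}^2}{(\sqrt{\boldsymbol v_{t,i}+\zeta}+\sqrt{\beta_2\boldsymbol v_{t-1,i}+\zeta})^2}\big|\mathcal F_t\big]$; the second factor is then bounded by $\mathbb E[A_{t,i}|\mathcal F_t]$, giving the telescoping terms with only second moments used. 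Your proposal, as written, does not supply this (or any other valid) mechanism at the decisive step, so the argument does not go through without adopting essentially the paper's Young-then-Cauchy--Schwarz ordering.
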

\begin{proof}
Since $\boldsymbol v_t=\beta_2 \boldsymbol v_{t-1}+(1-\beta_2)\boldsymbol g_t \odot \boldsymbol  g_t$, for any $i\in [d]$, we have that 
\begin{flalign}\label{eq:lemma1eq1}
&-\frac{1}{\sqrt{{ \boldsymbol v_{t,i}} + \zeta}}+\frac{1}{\sqrt{{\beta_2 \boldsymbol v_{t-1,i}} + \zeta}}\nonumber\\
=& \frac{\boldsymbol v_{t,i}-\beta_2 \boldsymbol v_{t-1,i}}{\sqrt{{ \boldsymbol v_{t,i}} + \zeta}\sqrt{{\beta_2 \boldsymbol v_{t-1,i}} + \zeta}(\sqrt{{ \boldsymbol v_{t,i}}+\zeta}+\sqrt{{\beta_2 \boldsymbol v_{t-1,i}}+\zeta})}\nonumber\\
=& \frac{(1-\beta_2)\boldsymbol g_{t,i}^2}{\sqrt{{ \boldsymbol v_{t,i}} + \zeta}\sqrt{{\beta_2 \boldsymbol v_{t-1,i}} + \zeta}(\sqrt{{ \boldsymbol v_{t,i}}+\zeta}+\sqrt{{\beta_2 \boldsymbol v_{t-1,i}}+\zeta})}.
\end{flalign}
Thus, the first-order.b term can be bounded as 
\begin{flalign}\label{proof:eq1}
     &\mathbb E\left[\left\langle \nabla f(\boldsymbol x_t), \left[\frac{-\eta }{\sqrt{\boldsymbol v_t+\zeta  }} +\frac{\eta}{\sqrt{\beta_2 \boldsymbol{v}_{t-1}+\zeta }}\right]\odot \boldsymbol g_t\right\rangle\Bigg|\mathcal F_t\right] \nonumber\\
     \le &\sum_{i=1}^d \mathbb E\left[\eta |\partial_i f(\boldsymbol x_t)||\boldsymbol g_{t,i}|\left| \frac{-1}{\sqrt{\boldsymbol v_{t,i}+\zeta}} +\frac{1}{\sqrt{\beta_2 \boldsymbol {v}_{t-1,i}+\zeta}}\right|\Bigg|\mathcal F_t\right]\nonumber\\
     = &\sum_{i=1}^d \mathbb E\left[\eta |\partial_i f(\boldsymbol x_t)||\boldsymbol g_{t,i}| \frac{(1-\beta_2)\boldsymbol g_{t,i}^2 }{\sqrt{{\boldsymbol  v_{t,i}} + \zeta}\sqrt{{\beta_2 \boldsymbol v_{t-1,i}} + \zeta}(\sqrt{{ \boldsymbol v_{t,i}} + \zeta}+\sqrt{{\beta_2 \boldsymbol v_{t-1,i}} + \zeta})}\Bigg|\mathcal F_t\right]\nonumber\\
     \le & \sum_{i=1}^d \frac{\eta|\partial_i f(\boldsymbol x_t)|}{\sqrt{{\beta_2 \boldsymbol v_{t-1,i}} + \zeta}}\mathbb E\left[ \frac{\sqrt{1-\beta_2}\boldsymbol g_{t,i}^2}{(\sqrt{{ \boldsymbol v_{t,i}} + \zeta}+\sqrt{{\beta_2 \boldsymbol v_{t-1,i}} + \zeta})}\Bigg|\mathcal F_t\right],
\end{flalign}
where the second equality is due to \eqref{eq:lemma1eq1} and the last inequality is because $\frac{|\boldsymbol g_{t,i}|}{\sqrt{\boldsymbol v_{t,i}+\zeta}}\le \frac{1}{\sqrt{1-\beta_2}}$. For any $a,b \in \mathbb R$, we have $ab\le \frac{a^2+b^2}{2}$. Thus for any $\alpha_0>0$, the RHS of \eqref{proof:eq1} can be further bounded as
\begin{flalign}\label{proof:eq2}
     & \frac{\eta |\partial_i f(\boldsymbol x_{t})|}{\sqrt{{\beta_2 \boldsymbol v_{t-1,i}} + \zeta}}\mathbb E\left[ \frac{\sqrt{1-\beta_2}\boldsymbol g_{t,i}^2}{(\sqrt{{ \boldsymbol v_{t,i}} + \zeta}+\sqrt{{\beta_2 \boldsymbol v_{t-1,i}} + \zeta})}\Bigg|\mathcal F_t\right]\nonumber\\
     \le &\frac{\eta (\partial_i f(\boldsymbol x_t))^2}{2\alpha_0\sqrt{{\beta_2 \boldsymbol v_{t-1,i}} + \zeta}}+\frac{\eta\alpha_0 }{2\sqrt{{\beta_2 \boldsymbol v_{t-1,i}} + \zeta}}\left(\mathbb E\left[\frac{\sqrt{1-\beta_2}\boldsymbol g_{t,i}^2}{\sqrt{{ \boldsymbol v_{t,i}} + \zeta}+\sqrt{{\beta_2 \boldsymbol v_{t-1,i}} + \zeta}}\Bigg|\mathcal F_t\right]\right)^2.
\end{flalign}
For the last term of \eqref{proof:eq2}, due to H\"older's inequality, we have that 
\begin{flalign}\label{proof:eq119}
    &\frac{\eta }{2\sqrt{{\beta_2 \boldsymbol v_{t-1,i}} + \zeta}}\left(\mathbb E\left[\frac{\sqrt{1-\beta_2}\boldsymbol g_{t,i}^2}{\sqrt{{ \boldsymbol v_{t,i}} + \zeta}+\sqrt{{\beta_2 \boldsymbol v_{t-1,i}} + \zeta}}\Bigg|\mathcal F_t\right]\right)^2\nonumber\\
    \le& \frac{(1-\beta_2)\eta}{2\sqrt{{\beta_2 \boldsymbol v_{t-1,i}} + \zeta}}\mathbb E[\boldsymbol g_{t,i}^2|\mathcal F_t]\mathbb E\left[\frac{\boldsymbol g_{t,i}^2}{(\sqrt{{ \boldsymbol v_{t,i}} + \zeta}+\sqrt{{\beta_2 \boldsymbol v_{t-1,i}} + \zeta})^2}\Bigg|\mathcal F_t\right]\nonumber\\
    \le & \frac{\eta}{2}(D_0+D_1 (\partial_i f(\boldsymbol x_t))^2) 
    \mathbb E\left[\frac{(1-\beta_2)\boldsymbol g_{t,i}^2}{\sqrt{{ \boldsymbol v_{t,i}} + \zeta}\sqrt{{\beta_2 \boldsymbol v_{t-1,i}} + \zeta}(\sqrt{{ \boldsymbol v_{t,i}} + \zeta}+\sqrt{{\beta_2 \boldsymbol v_{t-1,i}} + \zeta})}\Bigg|\mathcal F_t\right]\nonumber\\
     = & \frac{\eta}{2}(D_0+D_1 (\partial_i f(\boldsymbol x_t))^2)
    \mathbb E\left[\frac{\boldsymbol v_{t,i}+\zeta-(\beta_2\boldsymbol v_{t-1,i}+\zeta)}{\sqrt{{ \boldsymbol v_{t,i}} + \zeta}\sqrt{{\beta_2 \boldsymbol v_{t-1,i}} + \zeta}(\sqrt{{ \boldsymbol v_{t,i}} + \zeta}+\sqrt{{\beta_2 \boldsymbol v_{t-1,i}} + \zeta})}\Bigg|\mathcal F_t\right]\nonumber\\
    = &\frac{\eta}{2}(D_0+D_1 (\partial_i f(\boldsymbol x_t))^2)\mathbb E\left[\frac{1}{\sqrt{{\beta_2 \boldsymbol v_{t-1,i}} + \zeta}}-\frac{1}{\sqrt{{ \boldsymbol v_{t,i}} + \zeta}}\Bigg|\mathcal F_t\right],
\end{flalign}
where the second inequality is due to Assumption \ref{assump:variance}, and the fact that $\sqrt{v_{t,i}+\zeta}+\sqrt{\beta_2v_{t-1,i}+\zeta}>\sqrt{v_{t,i}+\zeta}$.
Thus for any $t\ge 1$, combining \eqref{proof:eq1}, \eqref{proof:eq2} and \eqref{proof:eq119},  we have that
\begin{flalign}\label{proof:t1}
  &\mathbb E\left[\left\langle \nabla f(\boldsymbol x_t), \left[\frac{-\eta }{\sqrt{\boldsymbol v_t+\zeta  }} +\frac{\eta}{\sqrt{\beta_2 \boldsymbol {v}_{t-1}+\zeta }}\right]\odot \boldsymbol g_t\right\rangle\Bigg|\mathcal F_t\right] \nonumber\\
     \le & \sum_{i=1}^d\frac{\eta (\partial_i f(\boldsymbol x_t))^2}{2\alpha_0\sqrt{{\beta_2 \boldsymbol v_{t-1,i}} + \zeta}}+\sum_{i=1}^d\frac{\eta\alpha_0}{2}(D_0+D_1 (\partial_i f(\boldsymbol x_t))^2)\mathbb E\left[\frac{1}{\sqrt{{\beta_2 \boldsymbol v_{t-1,i}} + \zeta}}-\frac{1}{\sqrt{{ \boldsymbol v_{t,i}} + \zeta}}\Bigg|\mathcal F_t\right].
\end{flalign}
For $t>1$, \eqref{proof:eq119} can be rewritten as 
\begin{flalign}\label{proof:eq4}
& \frac{(1-\beta_2)\eta}{2\sqrt{{\beta_2 \boldsymbol v_{t-1,i}} + \zeta}}\mathbb E[\boldsymbol g_{t,i}^2|\mathcal F_t]\mathbb E\left[\frac{\boldsymbol g_{t,i}^2}{(\sqrt{{ \boldsymbol v_{t,i}} + \zeta}+\sqrt{{\beta_2 \boldsymbol v_{t-1,i}} + \zeta})^2}\Bigg|\mathcal F_t\right]\nonumber\\
    \le &\frac{\eta D_0}{2}\mathbb E\left[\frac{1}{\sqrt{{\beta_2 \boldsymbol v_{t-1,i}} + \zeta}}-\frac{1}{\sqrt{{ \boldsymbol v_{t,i}} + \zeta}}\Bigg|\mathcal F_t\right]+\frac{\eta D_1}{2}\mathbb E\left[\frac{(\partial_i f(\boldsymbol x_{t-1}))^2}{\sqrt{{\beta_2 \boldsymbol v_{t-1,i}} + \zeta}}-\frac{(\partial_i f(\boldsymbol x_{t}))^2}{\sqrt{{ \boldsymbol v_{t,i}} + \zeta}}\Bigg|\mathcal F_t\right]\nonumber\\
    &+ \frac{\eta D_1}{2}\frac{(\partial_i f(\boldsymbol x_t))^2-(\partial_i f(\boldsymbol x_{t-1}))^2}{\sqrt{{\beta_2 \boldsymbol v_{t-1,i}} + \zeta}}.
\end{flalign}
For the last term in (\ref{proof:eq4}), by the fact that $f$ is $(L_0,L_1)$-smooth and for any $\alpha_1>0$, we have that
\begin{flalign}\label{proof:eq5}
    &\frac{\eta D_1}{2}\frac{(\partial_i f(\boldsymbol x_t))^2-(\partial_i f(\boldsymbol x_{t-1}))^2}{\sqrt{{\beta_2 \boldsymbol v_{t-1,i}} + \zeta}}\nonumber\\
    \le& \frac{\eta D_1}{2}\frac{2|\partial_i f(\boldsymbol x_t)|\big|\partial_i f(\boldsymbol x_t)-\partial_i f(\boldsymbol x_{t-1})\big|}{\sqrt{{\beta_2 \boldsymbol v_{t-1,i}} + \zeta}} \nonumber\\
    \le &\eta D_1\frac{|\partial_i f(\boldsymbol x_t)|(L_0+L_1|\partial_i f(\boldsymbol x_t)|)\eta\left\|\frac{1}{\sqrt{\boldsymbol v_{t-1}+\zeta}}\odot \boldsymbol g_{t-1}\right\|}{\sqrt{{\beta_2 \boldsymbol v_{t-1,i}} + \zeta}}\nonumber\\
    \le &\frac{\eta D_1|\partial_i f(\boldsymbol x_t)|}{\sqrt{{\beta_2 \boldsymbol v_{t-1,i}} + \zeta}}\left(L_0\eta\left\|\frac{1}{\sqrt{\boldsymbol v_{t-1}+\zeta}}\odot \boldsymbol g_{t-1}\right\|+L_1|\partial_i f(\boldsymbol x_t)|\eta\left\|\frac{1}{\sqrt{\boldsymbol v_{t-1}+\zeta}}\odot \boldsymbol g_{t-1}\right\|\right)\nonumber\\
    \le & \frac{\eta D_1}{2\sqrt{{\beta_2 \boldsymbol v_{t-1,i}} + \zeta}}\left(\frac{(\partial_i f(\boldsymbol x_t))^2}{\alpha_1 D_1}+\alpha_1 D_1L_0^2\eta^2\sum_{i=1}^d\frac{(\boldsymbol g_{t-1,i})^2}{\boldsymbol v_{t-1,i}+\zeta}+\frac{2\sqrt{d}\eta L_1(\partial_i f(\boldsymbol x_t))^2}{\sqrt{1-\beta_2}}\right)\nonumber\\
    \le  & \frac{\eta D_1}{2\sqrt{{\beta_2 \boldsymbol v_{t-1,i}} + \zeta}}\left(\frac{(\partial_i f(\boldsymbol x_t))^2}{\alpha_1 D_1}+\frac{\alpha_1 dD_1L_0^2\eta^2}{1-\beta_2}+\frac{2\sqrt{d}\eta L_1(\partial_i f(\boldsymbol x_t))^2}{\sqrt{1-\beta_2}}\right).
\end{flalign} 
where the first inequality is due to the fact that for any $a,b$, we have $a^2-b^2\le 2|a||a-b|$, the second inequality is by the $(L_0,L_1)$-smoothness and the third one is because that $ab\le \frac{a^2+b^2}{2}$ for any $a,b$ and {$\frac{|\boldsymbol g_{t,i}|}{\sqrt{\boldsymbol v_{t,i}+\zeta}}\le \frac{1}{\sqrt{1-\beta_2}}$}. Based on (\ref{proof:eq1}), (\ref{proof:eq2}), (\ref{proof:eq4}) and (\ref{proof:eq5}), for $t>1$ we can get that
\begin{flalign}\label{eq:lemma33result}
    &\mathbb E\left[\left\langle \nabla f(\boldsymbol x_t), \left[\frac{-\eta \boldsymbol g_t}{\sqrt{\boldsymbol v_t+\zeta}} +\frac{\eta \boldsymbol g_t}{\sqrt{\beta_2 \boldsymbol {v}_{t-1}+\zeta}}\right] \right\rangle\Bigg|\mathcal F_t\right] \nonumber\\
    \le & \sum_{i=1}^d \frac{\eta (\partial_i f(\boldsymbol x_t))^2}{2\alpha_0\sqrt{{\beta_2 \boldsymbol v_{t-1,i}} + \zeta}} +\sum_{i=1}^d\frac{\eta \alpha_0D_0}{2}\mathbb E\left[\frac{1}{\sqrt{{\beta_2 \boldsymbol v_{t-1,i}} + \zeta}}-\frac{1}{\sqrt{{ \boldsymbol v_{t,i}} + \zeta}}\Bigg|\mathcal F_t\right]\nonumber\\
    &+\sum_{i=1}^d\frac{\eta \alpha_0D_1}{2}\mathbb E\left[\frac{(\partial_i f(\boldsymbol x_{t-1}))^2}{\sqrt{{\beta_2 \boldsymbol v_{t-1,i}} + \zeta}}-\frac{(\partial_i f(\boldsymbol x_{t}))^2}{\sqrt{{ \boldsymbol v_{t,i}} + \zeta}}\Bigg|\mathcal F_t\right]\nonumber\\
    &+\sum_{i=1}^d \frac{\eta \alpha_0D_1}{2\sqrt{{\beta_2 \boldsymbol v_{t-1,i}} + \zeta}}\left(\frac{(\partial_i f(\boldsymbol x_t))^2}{\alpha_1 D_1}+\frac{\alpha_1d D_1L_0^2\eta^2}{1-\beta_2}+\frac{2\sqrt{d}\eta L_1(\partial_i f(\boldsymbol x_t))^2}{\sqrt{1-\beta_2}}\right).
\end{flalign}
Since we set $\boldsymbol x_0=\boldsymbol x_1$, the RHS of \eqref{eq:lemma33result} is an upper bound on the RHS of \eqref{proof:t1}. As a result,  \eqref{eq:lemma33result}  also holds for $t=1$. We then complete the proof.
\end{proof}

\section{Proof of Lemma \ref{co:1}}\label{proof:co1}
For $(L_0,L_1)$-smooth objective functions, we have the following descent inequality (Lemma 1 in \citet{crawshaw2022robustness}):
\begin{flalign}\label{eq:l0l1descent}
   &\underbrace{\mathbb E\left[\left\langle \nabla f(\boldsymbol x_t), \frac{\eta}{\sqrt{\tilde{\boldsymbol v}_t+\zeta}} \odot \boldsymbol g_t \right\rangle\Bigg|\mathcal F_t\right]}_{\text{first-order.a}}\nonumber\\
    \le &  f(\boldsymbol x_t)-\mathbb E[f(\boldsymbol x_{t+1})|\mathcal F_t]+\underbrace{\sum_{i=1}^d\frac{L_0}{2\sqrt{d}}\mathbb E[\|\boldsymbol x_{t+1}-\boldsymbol x_t\||\boldsymbol x_{t+1,i}-\boldsymbol x_{t,i}||\mathcal F_t]}_{\text{second-order}}\nonumber\\
    &+\underbrace{\sum_{i=1}^d\frac{L_1|\partial_i f(\boldsymbol x_t)|}{2}\mathbb E[\|\boldsymbol x_{t+1}-\boldsymbol x_t\||\boldsymbol x_{t+1,i}-\boldsymbol x_{t,i}||\mathcal F_t]}_{\text{additional term}}\nonumber\\
   &\zhang{-} \underbrace{\mathbb E\left[\left\langle \nabla f(\boldsymbol x_t), \left[\frac{\eta}{\sqrt{\boldsymbol v_t+\zeta}} +\frac{-\eta }{\sqrt{\tilde{\boldsymbol v}_t+\zeta}}\right]\odot \boldsymbol g_t\right\rangle|\mathcal F_t\right]}_{\text{first-order.b}}.
\end{flalign}
For the first-order.a item, given $\mathcal F_t$, we have $\boldsymbol g_t$ independent of  $\tilde{\boldsymbol v}_t$ and $\boldsymbol x_t$. It then follows that $$\mathbb E\left[\left\langle \nabla f(\boldsymbol x_t), \frac{\eta}{\sqrt{\tilde{\boldsymbol v}_t+\zeta}} \odot \boldsymbol  g_t\right\rangle\Big|\mathcal F_t\right]=\sum_{i=1}^d\frac{\eta (\partial_i f(\boldsymbol x_t))^2}{\sqrt{{\beta_2 \boldsymbol v_{t-1,i}} + \zeta}}.$$ Based on Lemma \ref{lemma:1}, we provide a lower bound on the first-order.b term.
Plugging Lemma \ref{lemma:1} to \eqref{eq:l0l1descent}, we have the following inequality for any $\alpha_0,\alpha_1>0$ and $t>1$:
\begin{flalign}\label{proof:2eq1}
    &\sum_{i=1}^d\frac{\eta (\partial_i f(\boldsymbol x_t))^2}{\sqrt{{\beta_2 \boldsymbol v_{t-1,i}} + \zeta}}\nonumber\\
      \le &  f(\boldsymbol x_t)-\mathbb E[f(\boldsymbol x_{t+1})|\mathcal F_t]+\underbrace{\sum_{i=1}^d\frac{L_0}{2\sqrt{d}}\mathbb E[\|\boldsymbol x_{t+1}-\boldsymbol x_t\||\boldsymbol x_{t+1,i}-\boldsymbol x_{t,i}||\mathcal F_t]}_{\text{second-order}}\nonumber\\
    &+\underbrace{\sum_{i=1}^d\frac{L_1|\partial_i f(\boldsymbol x_t)|}{2}\mathbb E[\|\boldsymbol x_{t+1}-\boldsymbol x_t\||\boldsymbol x_{t+1,i}-\boldsymbol x_{t,i}||\mathcal F_t]}_{\text{additional term}}\nonumber\\
     &+\sum_{i=1}^d \frac{\eta (\partial_i f(\boldsymbol x_t))^2}{2\alpha_0\sqrt{{\beta_2 \boldsymbol v_{t-1,i}} + \zeta}} +\sum_{i=1}^d\frac{\eta \alpha_0D_0}{2}\mathbb E\left[\frac{1}{\sqrt{{\beta_2 \boldsymbol v_{t-1,i}} + \zeta}}-\frac{1}{\sqrt{{ \boldsymbol v_{t,i}} + \zeta}}\Bigg|\mathcal F_t\right]\nonumber\\
    &+\sum_{i=1}^d\frac{\eta \alpha_0D_1}{2}\mathbb E\left[\frac{(\partial_i f(\boldsymbol x_{t-1}))^2}{\sqrt{{\beta_2 \boldsymbol v_{t-1,i}} + \zeta}}-\frac{(\partial_i f(\boldsymbol x_{t}))^2}{\sqrt{{ \boldsymbol v_{t,i}} + \zeta}}\Bigg|\mathcal F_t\right]\nonumber\\
    &+\sum_{i=1}^d \frac{\eta \alpha_0D_1}{2\sqrt{{\beta_2 \boldsymbol v_{t-1,i}} + \zeta}}\left(\frac{(\partial_i f(\boldsymbol x_t))^2}{\alpha_1 D_1}+\frac{\alpha_1d D_1L_0^2\eta^2}{1-\beta_2}+\frac{2\sqrt{d}\eta L_1(\partial_i f(\boldsymbol x_t))^2}{\sqrt{1-\beta_2}}\right).
\end{flalign}
Now we focus on the second-order term, which can be bounded as follows:
\begin{flalign}\label{proof:2eq2}
    &{\sum_{i=1}^d\frac{L_0}{2\sqrt{d}}\mathbb E\big[\|\boldsymbol x_{t+1}-\boldsymbol x_t\||\boldsymbol x_{t+1,i}-\boldsymbol x_{t,i}|\big|\mathcal F_t\big]}\nonumber\\
    \le& \sum_{i=1}^d \frac{L_0}{2\sqrt{d}}\mathbb E\left[\frac{\|\boldsymbol x_{t+1}-\boldsymbol x_t\|^2}{2\sqrt{d}}+\frac{\sqrt{d}}{2}(\boldsymbol x_{t+1,i}-\boldsymbol x_{t,i})^2\Bigg|\mathcal F_t\right]\nonumber\\
    =& \sum_{i=1}^d \frac{L_0}{2}\mathbb E\left[\eta^2 \frac{\boldsymbol g_{t,i}^2}{\boldsymbol v_{t,i}+\zeta}\Bigg|\mathcal F_t\right]\nonumber\\
    \le& \sum_{i=1}^d \frac{L_0}{2} \mathbb E\left[\eta^2 \frac{\boldsymbol g_{t,i}^2}{\beta_2 \boldsymbol v_{t-1,i}+\zeta}\Bigg|\mathcal F_t\right]\nonumber\\
    \le &\sum_{i=1}^d \frac{L_0\eta^2}{2} \frac{D_0+D_1(\partial_i f(\boldsymbol x_t))^2}{\beta_2 \boldsymbol v_{t-1,i}+\zeta}\nonumber\\
    \le &\sum_{i=1}^d \frac{L_0\eta^2}{2\sqrt{\zeta}} \frac{D_0+D_1(\partial_i f(\boldsymbol x_t))^2}{\sqrt{\beta_2 \boldsymbol v_{t-1,i}+\zeta}},
\end{flalign}
where the fourth inequality is due to Assumption \ref{assump:variance}.
Based on Assumption \ref{assump:variance}, for any $\alpha_2>0$, the addition term can be bounded as follows
\begin{flalign}\label{proof:2eq3}
    &\frac{L_1|\partial_i f (\boldsymbol x_t)|}{2}\mathbb E[\|\boldsymbol x_{t+1}-\boldsymbol x_t\||\boldsymbol x_{t+1,i}-\boldsymbol x_{t,i}||\mathcal F_t]\nonumber\\
    \le& \frac{\sqrt{d}L_1|\partial_i f(\boldsymbol x_t)|}{2\sqrt{1-\beta_2}}\mathbb E\left[\eta^2 \frac{|\boldsymbol g_{t,i}|}{\sqrt{\beta_2 \boldsymbol v_{t-1,i}+\zeta}}\Bigg|\mathcal F_t\right]\nonumber\\
    = & \frac{\eta^2\sqrt{d}L_1|\partial_i f(\boldsymbol x_t)|}{2\sqrt{1-\beta_2}\sqrt{\beta_2 \boldsymbol v_{t-1,i}+\zeta}}\sqrt{\mathbb E\left[ {\boldsymbol g_{t,i}^2}\big|\mathcal F_t\right]}\nonumber\\
    \le &\frac{\eta^2\sqrt{d}L_1|\partial_i f(\boldsymbol x_t)|(\sqrt{D_0}+\sqrt{D_1}|\partial_i f(\boldsymbol x_t)|)}{2\sqrt{1-\beta_2}\sqrt{\beta_2 \boldsymbol v_{t-1,i}+\zeta}}\nonumber\\
    \le & \frac{\eta^2\sqrt{d}L_1\sqrt{D_1}(\partial_i f(\boldsymbol x_t))^2}{2\sqrt{1-\beta_2}\sqrt{\beta_2 \boldsymbol v_{t-1,i}+\zeta}}+  \frac{\eta^2\sqrt{d}(\partial_i f(\boldsymbol x_t))^2}{2({1-\beta_2})\alpha_2\sqrt{\beta_2 \boldsymbol v_{t-1,i}+\zeta}}+  \frac{\eta^2\sqrt{d}\alpha_2L_1^2D_0}{2{}\sqrt{\beta_2 \boldsymbol v_{t-1,i}+\zeta}},
\end{flalign}
where the first inequality is due to the fact that $\|\boldsymbol x_{t+1}-\boldsymbol x_t\|\le \frac{\sqrt{d}\eta}{\sqrt{1-\beta_2}}$, the second inequality is because that $\mathbb E\left[ {\boldsymbol g_{t,i}}|\mathcal F_t\right]\le \sqrt{\mathbb E\left[ {\boldsymbol g_{t,i}^2}|\mathcal F_t\right]}$, and the third inequality is due to Assumption \ref{assump:generalsmooth} and the fact that for any $a,b\ge 0$, we have $\sqrt{a+b}\le \sqrt{a}+\sqrt{b}$.
Plug (\ref{proof:2eq2}) and (\ref{proof:2eq3}) into (\ref{proof:2eq1}), and it follows that
\begin{flalign}
    &\sum_{i=1}^d\frac{\eta (\partial_i f(\boldsymbol x_t))^2}{\sqrt{{\beta_2 \boldsymbol v_{t-1,i}} + \zeta}}\nonumber\\
      \le &  f(\boldsymbol x_t)-\mathbb E[f(\boldsymbol x_{t+1})|\mathcal F_t]+\sum_{i=1}^d \frac{L_0\eta^2}{2\sqrt{\zeta}} \frac{D_0+D_1(\partial_i f(\boldsymbol x_t))^2}{\sqrt{\beta_2 \boldsymbol v_{t-1,i}+\zeta}}\nonumber\\
    &+\sum_{i=1}^d\left(\frac{\eta^2\sqrt{d}L_1\sqrt{D_1}(\partial_i f(\boldsymbol x_t))^2}{2\sqrt{1-\beta_2}\sqrt{\beta_2 \boldsymbol v_{t-1,i}+\zeta}}+  \frac{\eta^2\sqrt{d}(\partial_i f(\boldsymbol x_t))^2}{2({1-\beta_2})\alpha_2\sqrt{\beta_2 \boldsymbol v_{t-1,i}+\zeta}}+  \frac{\eta^2\sqrt{d}\alpha_2L_1^2D_0}{2{}\sqrt{\beta_2 \boldsymbol v_{t-1,i}+\zeta}}\right)\nonumber\\
     &+\sum_{i=1}^d \frac{\eta (\partial_i f(\boldsymbol x_t))^2}{2\alpha_0\sqrt{{\beta_2 \boldsymbol v_{t-1,i}} + \zeta}} +\sum_{i=1}^d\frac{\eta \alpha_0D_0}{2}\mathbb E\left[\frac{1}{\sqrt{{\beta_2 \boldsymbol v_{t-1,i}} + \zeta}}-\frac{1}{\sqrt{{ \boldsymbol v_{t,i}} + \zeta}}\Bigg|\mathcal F_t\right]\nonumber\\
    &+\sum_{i=1}^d\frac{\eta \alpha_0D_1}{2}\mathbb E\left[\frac{(\partial_i f(\boldsymbol x_{t-1}))^2}{\sqrt{{\beta_2 \boldsymbol v_{t-1,i}} + \zeta}}-\frac{(\partial_i f(\boldsymbol x_{t}))^2}{\sqrt{{ \boldsymbol v_{t,i}} + \zeta}}\Bigg|\mathcal F_t\right]\nonumber\\
    &+\sum_{i=1}^d \frac{\eta \alpha_0D_1}{2\sqrt{{\beta_2 \boldsymbol v_{t-1,i}} + \zeta}}\left(\frac{(\partial_i f(\boldsymbol x_t))^2}{\alpha_1 D_1}+\frac{\alpha_1d D_1L_0^2\eta^2}{1-\beta_2}+\frac{2\sqrt{d}\eta L_1(\partial_i f(\boldsymbol x_t))^2}{\sqrt{1-\beta_2}}\right),
\end{flalign}
which can be written as 
\begin{flalign}\label{proof:4eq3}
    &\sum_{i=1}^d\left(\eta-\frac{\eta}{2\alpha_0}-\frac{\eta\alpha_0}{2\alpha_1}-\frac{L_0\eta^2D_1}{2\sqrt{\zeta}}-\frac{\eta^2L_1\sqrt{dD_1}}{2\sqrt{1-\beta_2}}-\frac{\eta^2\sqrt{d}}{2(1-\beta_2)\alpha_2}-\frac{\eta^2\sqrt{d}\alpha_0L_1D_1}{\sqrt{1-\beta_2}}\right)  \frac{ (\partial_i f(\boldsymbol x_t))^2}{\sqrt{{\beta_2 \boldsymbol v_{t-1,i}} + \zeta}} \nonumber\\
    \le & f(\boldsymbol x_t)-\mathbb E[f(\boldsymbol x_{t+1})|\mathcal F_t]+\frac{dL_0\eta^2D_0}{2{\zeta}}+\frac{\alpha_0\alpha_1d^2 L_0^2\eta^3 D_1^2}{2(1-\beta_2)} \frac{1}{\sqrt{ \zeta}}+  \frac{\eta^2\alpha_2d^{1.5}L_1^2D_0}{2{}\sqrt{\zeta}} \nonumber\\
    &+\sum_{i=1}^d\frac{\eta\alpha_0 D_0}{2}\mathbb E\left[\frac{1}{\sqrt{{\beta_2 \boldsymbol v_{t-1,i}} + \zeta}}-\frac{1}{\sqrt{{ \boldsymbol v_{t,i}} + \zeta}}\Bigg|\mathcal F_t\right]\nonumber\\
    &+\sum_{i=1}^d\frac{\eta\alpha_0 D_1}{2}\mathbb E\left[\frac{(\partial_i f(\boldsymbol x_{t-1}))^2}{\sqrt{{\beta_2 \boldsymbol v_{t-1,i}} + \zeta}}-\frac{(\partial_i f(\boldsymbol x_{t}))^2}{\sqrt{{ \boldsymbol v_{t,i}} + \zeta}}\Bigg|\mathcal F_t\right].
\end{flalign}
It is worth noting that the sum of the last two terms in \eqref{proof:4eq3} from $t=1$ to $T$ can be further bounded. Specifically, for any $i\in[d]$, taking the expectation with respect to $\mathcal F_t$, and the sum from $t=1$ to $T$, we have that 
\begin{flalign}\label{proof:coeq1}
    &\sum_{t=1}^T \mathbb E\left[\frac{1}{\sqrt{{\beta_2 \boldsymbol v_{t-1,i}} + \zeta}}-\frac{1}{\sqrt{{ \boldsymbol v_{t,i}} + \zeta}}\right]\nonumber\\
    =& \mathbb E\left[\frac{1}{\sqrt{{\beta_2 \boldsymbol v_{0,i}} + \zeta}}\right]+\sum_{t=1}^{T-1} \mathbb E\left[\frac{1}{\sqrt{{\beta_2 \boldsymbol v_{t,i}} + \zeta}}-\frac{1}{\sqrt{{ \boldsymbol v_{t,i}} + \zeta}}\right]-\mathbb E\left[\frac{1}{\sqrt{{ \boldsymbol v_{T,i}} + \zeta}}\right]\nonumber\\
    \le & \frac{1}{\sqrt{ \zeta}}+\sum_{t=1}^{T-1} \mathbb E\left[\frac{1}{\sqrt{{\beta_2 \boldsymbol v_{t,i}} +  \zeta}}-\frac{\sqrt{\beta_2}}{\sqrt{\beta_2 { \boldsymbol v_{t,i}} +  \zeta}}\right]\nonumber\\
    = & \frac{1}{\sqrt{ \zeta}}+\sum_{t=1}^{T-1} \mathbb E\left[\frac{1-\sqrt{\beta_2}}{\sqrt{\beta_2{ \boldsymbol v_{t,i}} +  \zeta}}\right]\nonumber\\
    \le &  \frac{1}{\sqrt{ \zeta}}+T \frac{1-\sqrt{\beta_2}}{\sqrt{\zeta}}.
\end{flalign}
Similarly, for the last term in \eqref{proof:4eq3}, the sum from $t=1$ to $T$ can be bounded as follows
\begin{flalign}\label{proof:coeq2}
    &\sum_{t=1}^T\mathbb E\left[\frac{(\partial_i f(\boldsymbol x_{t-1}))^2}{\sqrt{{\beta_2 \boldsymbol v_{t-1,i}} + \zeta}}-\frac{(\partial_i f(\boldsymbol x_{t}))^2}{\sqrt{{ \boldsymbol v_{t,i}} + \zeta}}\right]\nonumber\\
    =&\mathbb E\left[\frac{(\partial_i f(\boldsymbol x_{0}))^2}{\sqrt{{\beta_2 \boldsymbol v_{0,i}} + \zeta}}-\frac{(\partial_i f(\boldsymbol x_{1}))^2}{\sqrt{{ \boldsymbol v_{1,i}} + \zeta}}\right]+\sum_{t=2}^T \mathbb E\left[\frac{(\partial_i f(\boldsymbol x_{t-1}))^2}{\sqrt{{\beta_2 \boldsymbol v_{t-1,i}} + \zeta}}-\frac{(\partial_i f(\boldsymbol x_{t}))^2}{\sqrt{{ \boldsymbol v_{t,i}} + \zeta}}\right]\nonumber\\
    =& \mathbb E\left[\frac{(\partial_i  f(\boldsymbol x_{1}))^2}{\sqrt{{\beta_2 \boldsymbol v_{0,i}} + \zeta}}\right]+\sum_{t=1}^{T-1} \mathbb E\left[\frac{(\partial_i f(\boldsymbol x_{t}))^2}{\sqrt{{\beta_2 \boldsymbol v_{t,i}} + \zeta}}-\frac{(\partial_i f(\boldsymbol x_{t}))^2}{\sqrt{{ \boldsymbol v_{t,i}} + \zeta}}\right]-\mathbb E\left[\frac{(\partial_i f(\boldsymbol x_{T}))^2}{\sqrt{{\boldsymbol  v_{T,i}} + \zeta}}\right]\nonumber\\
    \le & \mathbb E\left[\frac{(\partial_i f(\boldsymbol x_{1}))^2}{\sqrt{ \zeta}}\right]+\sum_{t=1}^{T-1} \mathbb E\left[\left(\frac{1}{\sqrt{{\beta_2 \boldsymbol v_{t,i}} +  \zeta}}-\frac{\sqrt{\beta_2}}{\sqrt{\beta_2 { \boldsymbol v_{t,i}} +  \zeta}}\right)(\partial_i f(\boldsymbol x_{t}))^2\right]\nonumber\\
    = & \frac{(\partial_i f(\boldsymbol x_{1}))^2}{\sqrt{ \zeta}}+\sum_{t=1}^{T-1}(1-\sqrt{\beta_2}) \mathbb E\left[\frac{(\partial_i f(\boldsymbol x_{t}))^2}{\sqrt{\beta_2{ \boldsymbol v_{t,i}} +  \zeta}}\right]\nonumber\\
    \le & \frac{(\partial_i f(\boldsymbol x_{1}))^2}{\sqrt{ \zeta}}+\sum_{t=1}^{T-1}\frac{1-\sqrt{\beta_2}}{\sqrt{\beta_2}} \mathbb E\left[\frac{(\partial_i f(\boldsymbol x_{t}))^2}{\sqrt{{ \boldsymbol v_{t,i}} +  \zeta}}\right]\nonumber\\
    \le & \frac{(\partial_i f(\boldsymbol x_{1}))^2}{\sqrt{ \zeta}}+\sum_{t=1}^{T-1}({1-\beta_2}) \mathbb E\left[\frac{(\partial_i f(\boldsymbol x_{t}))^2}{\sqrt{\beta_2{ \boldsymbol v_{t-1,i}} +  \zeta}}\right],
\end{flalign}
since $\boldsymbol x_0=\boldsymbol x_1$, $\boldsymbol g_0=0$ and the last inequality is due to the fact that $\frac{1}{\sqrt{\beta_2}}\le 1+\sqrt{\beta_2}$.
By taking expectations and sums of (\ref{proof:4eq3}) from $t=1$ to $T$, based on (\ref{proof:coeq1}) and (\ref{proof:coeq2}), we have that
\begin{flalign}
       &\sum_{t=1}^T\sum_{i=1}^d\left(\eta-\frac{\eta}{2\alpha_0}-\frac{\eta\alpha_0}{2\alpha_1}-\frac{L_0\eta^2D_1}{2\sqrt{\zeta}}-\frac{\eta^2L_1\sqrt{dD_1}}{2\sqrt{1-\beta_2}}-\frac{\eta^2\sqrt{d}}{2(1-\beta_2)\alpha_2}-\frac{\eta^2\sqrt{d}\alpha_0L_1D_1}{\sqrt{1-\beta_2}}\right)  \frac{ (\partial_i f(\boldsymbol x_t))^2}{\sqrt{{\beta_2 \boldsymbol v_{t-1,i}} + \zeta}} \nonumber\\
    \le & f(\boldsymbol x_1)-\mathbb E[f(\boldsymbol x_{T+1})|\mathcal F_t]+\frac{\eta \alpha_0 D_1\|\nabla f(\boldsymbol x_1)\|^2}{2\sqrt{\zeta}}+\frac{\eta \alpha_0 dD_0 }{2\sqrt{\zeta}}+T\frac{dL_0\eta^2D_0}{2{\zeta}}+T\frac{\alpha_0\alpha_1d^2 L_0^2\eta^3 D_1^2}{2(1-\beta_2)} \frac{1}{\sqrt{ \zeta}} \nonumber\\
    &+  T\frac{\eta^2\alpha_2d^{1.5}L_1^2D_0}{2{}\sqrt{\zeta}}+T\frac{\eta \alpha_0dD_0(1-\sqrt{\beta_2})}{2\sqrt{ \zeta}}+\frac{\eta \alpha_0D_1(1-{\beta_2})}{2}\sum_{i=1}^d\sum_{t=1}^T \mathbb E\left[\frac{ \|\partial_i f(\boldsymbol x_t)\|^2}{\sqrt{{\beta_2 \boldsymbol v_{t-1,i}} + \zeta}} \right].
\end{flalign}
Define $\Delta=f(\boldsymbol x_1)-f(\boldsymbol x^*)+\frac{\eta\alpha_0 dD_0 }{2\sqrt{\zeta}}+\frac{\eta \alpha_0D_1\|\nabla f(\boldsymbol x_1)\|^2}{2\sqrt{\zeta}}$, where $f(\boldsymbol x^*)=\inf_{\boldsymbol x} f(\boldsymbol x)$. If we set $\alpha_0=1, \alpha_1=7$ and $\alpha_2=1$, obviously we can find some $1-\beta_2=\min\left(\frac{1}{7D_1},\frac{\sqrt{\zeta}\epsilon^2}{35dD_0}\right)= \mathcal O(\epsilon^{2})$, and 
\begin{align}
    \eta\le \min\left(\frac{\sqrt{\zeta}}{7L_0D_1},\frac{\sqrt{1-\beta_2}}{\max{(14L_1\sqrt{d}D_1,7L_1\sqrt{dD_1})}},\frac{1-\beta_2}{7\sqrt{d}},\frac{\zeta \epsilon^2}{35L_0dD_0},\frac{\epsilon \sqrt{1-\beta_2}\sqrt[4]{\zeta}}{7D_1L_0d\sqrt{5}},\frac{\sqrt{\zeta} \epsilon^2}{35L_1^2d^{1.5}D_0}\right)= \mathcal O(\epsilon^{2})\nonumber
\end{align}
such that
\begin{flalign}
    \frac{\eta}{14}\sum_{i=1}^d\sum_{t=1}^T \mathbb E\left[\frac{ (\partial_i f(\boldsymbol x_t))^2}{\sqrt{{\beta_2 \boldsymbol v_{t-1,i}} + \zeta}} \right]\le \Delta+T\frac{\eta}{70}\epsilon^2+T\frac{\eta}{70}\epsilon^2+T\frac{\eta}{70}\epsilon^2+T\frac{\eta}{70}\epsilon^2,
\end{flalign}
Set $T\ge\frac{70\Delta}{\eta\epsilon^2} = \mathcal O(\epsilon^{-4})$, and we have that
\begin{flalign}
   \frac{1}{T} \sum_{t=1}^T \mathbb E\left[\frac{ \|\nabla f(\boldsymbol x_t)\|^2}{\sqrt{{\beta_2 \|\boldsymbol v_{t-1}}\| + \zeta}} \right]\le\sum_{i=1}^d\frac{1}{T} \sum_{t=1}^T \mathbb E\left[\frac{ (\partial_i f(\boldsymbol x_t))^2}{\sqrt{{\beta_2 \boldsymbol v_{t-1,i}} + \zeta}} \right]\le \epsilon^2. 
\end{flalign}
This completes the proof.

\section{Proof of Lemma \ref{lemma:2}}\label{proof:lemma2}
For any $a,b\ge0$, we have that $\sqrt{a+b}\le \sqrt{a}+\sqrt{b}$. It then follows that
\begin{flalign}
    \mathbb E\Big[\sqrt{{\beta_2 \|\boldsymbol v_{t-1}}\| + \zeta}\Big]\le \sum_{i=1}^d\mathbb E\Big[\sqrt{{\beta_2 \boldsymbol v_{t-1,i}}}\Big]+\sqrt{\zeta}.
\end{flalign}
For the first term $ \mathbb E[\sqrt{{\beta_2 \boldsymbol v_{t-1,i}}}]$ and $t>1$ we have that
\begin{flalign}
     \mathbb E[\sqrt{{\beta_2 \boldsymbol v_{t-1,i}}}]=\mathbb E\left[ \mathbb E \left[\sqrt{{\beta_2 \boldsymbol v_{t-1,i}}}\Big|\mathcal F_{t-1}\right]\right]=\mathbb E\left[ \mathbb E \left[\sqrt{{\beta_2^2 \boldsymbol v_{t-2,i}}+\beta_2(1-\beta_2)(\boldsymbol g_{t-1,i})^2}\Big|\mathcal F_{t-1}\right]\right].
\end{flalign}
{Given} $\mathcal F_{t-1}$,  $\boldsymbol v_{t-2}$ is deterministic. By Jensen's inequality, we have that
\begin{flalign}\label{proof:3eq1}
    &\mathbb E\left[ \mathbb E \left[\sqrt{{\beta_2^2 \boldsymbol v_{t-2,i}}+\beta_2(1-\beta_2)(\boldsymbol g_{t-1,i})^2}\Big|\mathcal F_{t-1}\right]\right]\nonumber\\
    \le & \mathbb E\left[ \sqrt{\mathbb E  \left[ {\beta_2^2 \boldsymbol v_{t-2,i}}+\beta_2(1-\beta_2)(\boldsymbol g_{t-1,i})^2\big|\mathcal F_{t-1}\right]}\right]\nonumber\\
    \le & \mathbb E\left[ \sqrt{   {\beta_2^2 \boldsymbol v_{t-2,i}}+\beta_2(1-\beta_2)(D_0+D_1(\partial_i f(\boldsymbol x_{t-1}))^2)}\right]\nonumber\\
    \le & \mathbb E\left[ \sqrt{   {\beta_2^2 \boldsymbol v_{t-2,i}}+\beta_2(1-\beta_2)D_0}\right]+\mathbb E\Big[\sqrt{\beta_2(1-\beta_2)D_1}|\partial_i f(\boldsymbol x_{t-1,i})|\Big],
\end{flalign}
where the second inequality is according to Assumption \ref{assump:variance}. By recursively applying (\ref{proof:3eq1}) we have that
\begin{flalign}
    &\mathbb E\left[ \sqrt{   {\beta_2^2 \boldsymbol v_{t-2}}+\beta_2(1-\beta_2)D_0}\right]\nonumber\\
    \le & \mathbb E\left[ \sqrt{   {\beta_2^3 \boldsymbol v_{t-3,i}}+(\beta_2+\beta_2^2)(1-\beta_2)D_0}\right]+\mathbb E\Big[\sqrt{\beta_2^2(1-\beta_2)D_1}|\partial_i f(\boldsymbol x_{t-2})|\Big].
\end{flalign}
Specifically, we can get that
\begin{flalign}\label{eq:3recursive}
    \mathbb E\Big[\sqrt{{\beta_2 \boldsymbol v_{t-1,i}}}\Big]\le&   \mathbb E\left[ \sqrt{   {\beta_2^2 \boldsymbol v_{t-2,i}}+\beta_2(1-\beta_2)D_0}\right]+\mathbb E\Big[\sqrt{\beta_2(1-\beta_2)D_1}|\partial_i f(\boldsymbol x_{t-1})|\Big]\nonumber\\
    \le&\mathbb E\left[ \sqrt{   {\beta_2^3 \boldsymbol v_{t-3,i}}+(\beta_2+\beta_2^2)(1-\beta_2)D_0}\right]+\mathbb E\Big[\sqrt{\beta_2^2(1-\beta_2)D_1}|\partial_i f(\boldsymbol x_{t-t})|\Big]\nonumber\\
    &+\mathbb E\Big[\sqrt{\beta_2(1-\beta_2)D_1}|\partial_i f(\boldsymbol x_{t-1})|\Big]\nonumber\\
    \le&...\nonumber\\
    \le&\mathbb E\left[ \sqrt{ (\beta_2+\beta_2^2+\beta_2^3+...+\beta_2^t)(1-\beta_2)D_0+\boldsymbol v_{0,i}}\right]\nonumber\\
    &+\sum_{i=1}^{t-1} \mathbb E\Big[ \sqrt{{}\beta_2^{j}(1-\beta_2)D_1}|\partial_i f(\boldsymbol x_{t-j})|\Big]\nonumber\\
    \le& \sqrt{D_0+\|\boldsymbol v_0\|}+\sum_{j=1}^{t-1} \mathbb E\Big[ \sqrt{{}\beta_2^{j}(1-\beta_2)D_1}|\partial_i f(\boldsymbol x_{t-j})|\Big].
\end{flalign}
As a result, we have that
\begin{flalign}\label{eq:3recursive2}
        &\frac{1}{T} \sum_{t=1}^T  \mathbb E\Big[\sqrt{{\beta_2 \|\boldsymbol v_{t-1}}\| + \zeta}\Big]\nonumber\\
        &\le  \sqrt{\zeta}+d\sqrt{D_0+\|\boldsymbol v_0\|}+\frac{1}{T}\sum_{i=1}^d\sum_{t=1}^T \sum_{j=1}^{t-1} \mathbb E\Big[ \sqrt{{}\beta_2^{j}(1-\beta_2)D_1}|\partial_i f(\boldsymbol x_{t-j})|\Big]\nonumber\\
         &=  c+\frac{1}{T}\sum_{i=1}^d\sum_{t=1}^{T-1}  \left(\sum_{j=1}^{T-t}\sqrt{\beta_2^j}\right)\mathbb E\Big[ \sqrt{(1-\beta_2)D_1}|\partial_i f(\boldsymbol x_{t})|\Big]\nonumber\\
        &\le c+\sum_{i=1}^d\frac{\sum_{t=1}^T\mathbb E[|\partial_i f(\boldsymbol x_t)|]}{T} \sqrt{(1-\beta_2)D_1}\Bigg(\sum_{j=1}^T \sqrt{\beta_2^j}\Bigg) \nonumber\\
        &\le c+\frac{2\sqrt{D_1}}{\sqrt{(1-\beta_2)}} \sum_{i=1}^d\frac{\sum_{t=1}^T\mathbb E[|\partial_i f(\boldsymbol x_t)|]}{T}\nonumber\\
        &\le c+\frac{2\sqrt{dD_1}}{\sqrt{(1-\beta_2)}} \frac{\sum_{t=1}^T\mathbb E[\|\nabla f(\boldsymbol x_t)\|]}{T},
    \end{flalign}
where the last inequlity is due to $\sum_{i=1}^d \|\partial_i f(\boldsymbol x)\|\le \sqrt{d}\|f(\boldsymbol x)\|$. We then complete the proof.

\section{Formal Version of Theorem \ref{theorem:1} and Its Proof}\label{proof:theorem1}
Recall that $c=\sqrt{\zeta}+d\sqrt{D_0+\|\boldsymbol v_0\|}$, $\Delta=f(\boldsymbol x_1)-f(\boldsymbol x^*)+\frac{\eta\alpha_0 dD_0 }{2\sqrt{\zeta}}+\frac{\eta \alpha_0D_1\|\nabla f(\boldsymbol x_1)\|^2}{2\sqrt{\zeta}}$. Define $\Lambda_1=\frac{1}{\max{(14L_1\sqrt{d}D_1,7L_1\sqrt{dD_1})}}$, $\Lambda_2=\min\left(\frac{\zeta}{35L_0dD_0},\frac{\sqrt{\zeta}}{35L_1^2d^{1.5}D_0}\right)$ and $\Lambda_3=\frac{\sqrt[4]{\zeta}}{7D_1L_0d\sqrt{5}}$.
\begin{theorem}
    Let Assumptions \ref{assump:lowerbound}, \ref{assump:variance} and \ref{assump:generalsmooth} hold. Let $1-\beta_2=\min\left(\frac{1}{7D_1},\frac{\sqrt{\zeta}\epsilon^2}{35dD_0}\right)= \mathcal O(\epsilon^{2})$, $\eta\le\min\left(\frac{\sqrt{\zeta}}{7L_0D_1},\Lambda_1{\sqrt{1-\beta_2}},\frac{1-\beta_2}{7\sqrt{d}},\Lambda_2{ \epsilon^2},\Lambda_3{\epsilon \sqrt{1-\beta_2}}\right)=\mathcal O(\epsilon^{2})$,  and $T\ge\frac{70\Delta}{\eta\epsilon^2} = \mathcal O(\epsilon^{-4})$. 
    For small $\epsilon$ such that $\epsilon\le \frac{\sqrt{5dD_0}}{\sqrt{D_1}\sqrt[4]{\zeta}}$,
    we have that
 \begin{flalign}
     \frac{1}{T} \sum_{t=1}^T  \mathbb E[\|\nabla f(\boldsymbol x_t)\|]\le \left(\frac{2d\sqrt{35D_0D_1}}{\sqrt[4]{\zeta}}+\sqrt{c}\right)\epsilon.
\end{flalign}
\end{theorem}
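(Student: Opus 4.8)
The plan is to assemble the three stages from the proof sketch into the stated bound. First I would invoke Lemma~\ref{co:1}: under the stated choices $1-\beta_2=\min\!\big(\tfrac{1}{7D_1},\tfrac{\sqrt{\zeta}\epsilon^2}{35dD_0}\big)$, $\eta$ as specified, and $T=\tfrac{70\Delta}{\eta\epsilon^2}$, it gives $\frac{1}{T}\sum_{t=1}^T\mathbb E\!\big[\|\nabla f(\boldsymbol x_t)\|^2/\sqrt{\beta_2\|\boldsymbol v_{t-1}\|+\zeta}\big]\le\epsilon^2$. Next I would invoke Lemma~\ref{lemma:2}, which bounds $\frac{1}{T}\sum_{t=1}^T\mathbb E[\sqrt{\beta_2\|\boldsymbol v_{t-1}\|+\zeta}]$ by $c+\frac{2\sqrt{dD_1}}{\sqrt{1-\beta_2}}\,e$ where $e:=\frac1T\sum_{t=1}^T\mathbb E[\|\nabla f(\boldsymbol x_t)\|]$. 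Plugging both estimates into the H\"older inequality \eqref{eq:stage2} yields the scalar inequality $e^2\le\epsilon^2\big(c+\tfrac{2\sqrt{dD_1}}{\sqrt{1-\beta_2}}\,e\big)$.

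Then I would solve this quadratic inequality in $e$. Writing $A=\frac{2\sqrt{dD_1}\,\epsilon^2}{\sqrt{1-\beta_2}}$ and $B=c\epsilon^2$, the inequality reads $e^2-Ae-B\le 0$, so $e\le\frac{A+\sqrt{A^2+4B}}{2}\le A+\sqrt{B}$; that is, $e\le\frac{2\sqrt{dD_1}\,\epsilon^2}{\sqrt{1-\beta_2}}+\sqrt{c}\,\epsilon$. The last step is hyperparameter bookkeeping: for $\epsilon\le\frac{\sqrt{5dD_0}}{\sqrt{D_1}\sqrt[4]{\zeta}}$ one checks $\frac{\sqrt{\zeta}\epsilon^2}{35dD_0}\le\frac{1}{7D_1}$, so the minimum defining $1-\beta_2$ is attained at $\frac{\sqrt{\zeta}\epsilon^2}{35dD_0}$, whence $\frac{\epsilon^2}{\sqrt{1-\beta_2}}=\frac{\epsilon\sqrt{35dD_0}}{\sqrt[4]{\zeta}}$ and $\frac{2\sqrt{dD_1}\,\epsilon^2}{\sqrt{1-\beta_2}}=\frac{2d\sqrt{35D_0D_1}}{\sqrt[4]{\zeta}}\,\epsilon$, giving the claimed $e\le\big(\tfrac{2d\sqrt{35D_0D_1}}{\sqrt[4]{\zeta}}+\sqrt{c}\big)\epsilon$.

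Given the lemmas, this assembly is purely algebraic, so the genuine obstacle is everything already packaged into Lemma~\ref{co:1}. That lemma requires Stage~I: starting from the descent inequality \eqref{eq:generalsmooth}, lower-bounding its left side by $\sum_i\mathbb E\!\big[\eta(\partial_i f(\boldsymbol x_t))^2/\sqrt{\beta_2\boldsymbol v_{t-1,i}+\zeta}\big]$ via the surrogate $\tilde{\boldsymbol v}_t=\beta_2\boldsymbol v_{t-1}$, controlling the first-order.b (surrogate-error) term through Lemma~\ref{lemma:1} and telescoping the $\phi$-type residuals — which is precisely what the modified stepsize $\eta/\sqrt{\boldsymbol v_t+\zeta}$ makes possible — and absorbing the second-order and the $(L_0,L_1)$ additional terms back into the same quantity. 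It is this absorption that forces the explicit choices of $\alpha_0,\alpha_1,\alpha_2$ together with $1-\beta_2\sim\mathcal O(\epsilon^2)$, $\eta\sim\mathcal O(\epsilon^2)$, $T\sim\mathcal O(\epsilon^{-4})$ so that the leading coefficient of $\sum_i\mathbb E\big[(\partial_i f(\boldsymbol x_t))^2/\sqrt{\beta_2\boldsymbol v_{t-1,i}+\zeta}\big]$ stays bounded away from zero while the per-iteration constant and telescoping residuals remain $\mathcal O(\epsilon^2)$. Once Lemma~\ref{co:1} and Lemma~\ref{lemma:2} are in hand, the final inequality chain above closes the argument with no further difficulty.
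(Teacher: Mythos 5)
Your proposal is correct and follows essentially the same route as the paper's own proof: invoke Lemma~\ref{co:1} and Lemma~\ref{lemma:2}, combine them through the H\"older inequality \eqref{eq:stage2} to get $e^2\le\epsilon^2\big(c+\tfrac{2\sqrt{dD_1}}{\sqrt{1-\beta_2}}e\big)$, solve the quadratic with $e\le A+\sqrt{B}$, and use $\epsilon\le\tfrac{\sqrt{5dD_0}}{\sqrt{D_1}\sqrt[4]{\zeta}}$ to identify $1-\beta_2=\tfrac{\sqrt{\zeta}\epsilon^2}{35dD_0}$ so that $\tfrac{2\sqrt{dD_1}\epsilon^2}{\sqrt{1-\beta_2}}=\tfrac{2d\sqrt{35D_0D_1}}{\sqrt[4]{\zeta}}\epsilon$. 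The constants match the paper's, and deferring the real work to Lemma~\ref{co:1} is exactly what the paper's proof does as well.
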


\begin{proof}
According to Lemma \ref{co:1}, we have that
\begin{flalign}
        \frac{1}{T} \sum_{t=1}^T \mathbb E\left[\frac{ \|\nabla f(\boldsymbol x_t)\|^2}{\sqrt{{\beta_2 \|\boldsymbol v_{t-1}}\| + \zeta}} \right]\le \epsilon^2.
\end{flalign}

According to Lemma \ref{lemma:2}, we have that
 \begin{flalign}
        &\frac{1}{T} \sum_{t=1}^T  \mathbb E[\sqrt{{\beta_2 \|\boldsymbol v_{t-1}}\| + \zeta}]\nonumber\\
        &\le  c
   +\frac{2\sqrt{dD_1}}{\sqrt{(1-\beta_2)}} \frac{\sum_{t=1}^T\mathbb E[\|\nabla f(\boldsymbol x_t)\|]}{T}.
    \end{flalign}

Define $e=\frac{1}{T} \sum_{t=1}^T  \mathbb E[\|\nabla f(\boldsymbol x_t)\|]$. By H\"older's inequality, we have that
\begin{flalign}
    \left(\frac{1}{T} \sum_{t=1}^T  \mathbb E[\|\nabla f(\boldsymbol x_t)\|]\right)^2\le \left(\frac{1}{T} \sum_{t=1}^T \mathbb E\left[\frac{ \|\nabla f(\boldsymbol x_t)\|^2}{\sqrt{{\beta_2 \|\boldsymbol v_{t-1}}\| + \zeta}} \right]\right)\left(\frac{1}{T} \sum_{t=1}^T  \mathbb E[\sqrt{{\beta_2 \|\boldsymbol v_{t-1}}\| + \zeta}]\right).
\end{flalign}
By Lemma \ref{lemma:2} and Lemma \ref{co:1}, this can be written as 
\begin{flalign}
    e^2\le \epsilon^2\left(c+\frac{2\sqrt{dD_1}}{\sqrt{1-\beta_2}}e\right).
\end{flalign}
Thus we have that $$ \frac{1}{T} \sum_{t=1}^T  \mathbb E[\|\nabla f(x_t)\|]=e\le\frac{\sqrt{dD_1}\epsilon^2}{\sqrt{1-\beta_2}}+\frac{1}{2}\sqrt{\left(\frac{2\sqrt{dD_1}\epsilon^2}{\sqrt{1-\beta_2}}\right)^2+4c\epsilon^2}.  $$
Since $1-\beta_2=\min\left(\frac{1}{7D_1},\frac{\sqrt{\zeta}\epsilon^2}{35dD_0}\right)= \mathcal O(\epsilon^{2})$, if $\epsilon\le \frac{\sqrt{5dD_0}}{\sqrt{D_1}\sqrt[4]{\zeta}}$, we have $\frac{\epsilon^2}{\sqrt{1-\beta_2}} \le \frac{\sqrt{35dD_0}}{\sqrt[4]{\zeta}}\epsilon$. It demonstrates that
\begin{flalign}
     \frac{1}{T} \sum_{t=1}^T  \mathbb E[\|\nabla f(x_t)\|]\le \left(\frac{2d\sqrt{35D_0D_1}}{\sqrt[4]{\zeta}}+\sqrt{c}\right)\epsilon,
\end{flalign}
which completes the proof.
\end{proof}

\section{Lemmas for Theorem \ref{theorem:2}}
Here are some lemmas we need for the proof of Theorem \ref{theorem:2}. Lemma \ref{lemma:4} and Lemma \ref{lemma:5} are from \citet{wang2023closing}. We include their proof for completeness. 
\begin{lemma}\label{lemma:4}
(Lemma 6 in \citet{wang2023closing}) For any $\{c_t\}_{t=0}^\infty\ge0 $ and $a_t=\beta_2 a_{t-1}+(1-\beta_2)c_t^2$ and $b_t=\beta_1b_{t-1}+(1-\beta_1)c_t$, if $0<\beta_1^2<\beta_2<1$, we have that
\begin{flalign}
    \frac{b_t}{\sqrt{a_t+\zeta}}\le \frac{1-\beta_1}{\sqrt{1-\beta_2}\sqrt{1-\frac{\beta_1^2}{{\beta_2}}}}.
\end{flalign}
\end{lemma}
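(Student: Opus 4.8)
The plan is to unroll both linear recursions, lower bound $a_t+\zeta$ by a $\beta_2$-weighted sum of the $c_k^2$, and then estimate the numerator $b_t$ by a weighted Cauchy--Schwarz inequality whose weights are chosen to reproduce exactly that $\beta_2$-weighted sum. First I would unroll the recursions: with the natural initialization $b_{-1}=0$ (as in Algorithm~\ref{alg:1}, where $\boldsymbol m_0=0$) one gets $b_t=(1-\beta_1)\sum_{k=0}^{t}\beta_1^{\,t-k}c_k$, and likewise $a_t=(1-\beta_2)\sum_{k=0}^{t}\beta_2^{\,t-k}c_k^2+\beta_2^{\,t+1}a_{-1}$. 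Dropping the nonnegative quantities $\zeta$ and $\beta_2^{\,t+1}a_{-1}$ gives
\[
a_t+\zeta\ \ge\ (1-\beta_2)\sum_{k=0}^{t}\beta_2^{\,t-k}c_k^2 .
\]

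Next I would bound $b_t$. Writing $\beta_1^{\,t-k}c_k=\bigl(\beta_1^2/\beta_2\bigr)^{(t-k)/2}\,\beta_2^{(t-k)/2}c_k$ and applying Cauchy--Schwarz to the sum over $k$,
\[
b_t^2\ \le\ (1-\beta_1)^2\Bigl(\sum_{k=0}^{t}\bigl(\beta_1^2/\beta_2\bigr)^{t-k}\Bigr)\Bigl(\sum_{k=0}^{t}\beta_2^{\,t-k}c_k^2\Bigr).
\]
Because $\beta_1^2<\beta_2$, the first factor is a truncated geometric series bounded by $1/(1-\beta_1^2/\beta_2)$, while the second factor is at most $(a_t+\zeta)/(1-\beta_2)$ by the previous display. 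Combining,
\[
b_t^2\ \le\ \frac{(1-\beta_1)^2}{(1-\beta_1^2/\beta_2)(1-\beta_2)}\,(a_t+\zeta),
\]
and taking square roots and dividing both sides by $\sqrt{a_t+\zeta}$ yields the stated bound.

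I do not expect a genuine obstacle here; the computation is essentially bookkeeping. The one point that must be gotten right is the choice of Cauchy--Schwarz weights: the exponents have to be split as $\beta_1^{\,t-k}=\bigl(\beta_1^2/\beta_2\bigr)^{(t-k)/2}\beta_2^{(t-k)/2}$ precisely so that the ``square'' factor produced by Cauchy--Schwarz is the $\beta_2$-weighted sum $\sum_k\beta_2^{\,t-k}c_k^2$ that was already used to lower bound $a_t+\zeta$, while the remaining factor $\sum_k(\beta_1^2/\beta_2)^{t-k}$ is summable exactly under the hypothesis $\beta_1^2<\beta_2$. If one prefers not to assume $b_{-1}=0$, the same argument goes through verbatim with an extra additive term $\beta_1^{\,t+1}b_{-1}$ in $b_t$, which is harmless in the present application since $\boldsymbol m_0=0$.
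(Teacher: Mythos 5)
Your proposal is correct and follows essentially the same route as the paper: unroll both recursions, lower bound $a_t+\zeta$ by the $\beta_2$-weighted sum $(1-\beta_2)\sum_k\beta_2^{\,t-k}c_k^2$, and apply Cauchy--Schwarz to $b_t$ with the split $\beta_1^{\,t-k}=(\beta_1^2/\beta_2)^{(t-k)/2}\beta_2^{(t-k)/2}$ so the geometric factor is bounded by $1/(1-\beta_1^2/\beta_2)$. The only differences from the paper's proof are cosmetic (indexing and how the initial terms $\zeta$, $a_0$, $b_0$ are dropped), so nothing further is needed.
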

\begin{proof}
    We can show that 
    \begin{flalign}
        \frac{b_t}{\sqrt{a_t+\zeta}}\le\frac{b_t}{\sqrt{a_t}}&\le \frac{\sum_{i=0}^{t-1}(1-\beta_1)\beta_1^ic_{t-i}}{\sqrt{\sum_{i=0}^{t-1}(1-\beta_2)\beta_2^ic_{t-i}^2+a_0}}\nonumber\\
        &\le \frac{1-\beta_1}{\sqrt{1-\beta_2}}\frac{\sqrt{\sum_{i=0}^{t-1}\beta_2^ic_{t-i}^2}\sqrt{\sum_{i=0}^{t-1}\frac{\beta_1^{2i}}{\beta_2^i}}}{\sqrt{\sum_{i=0}^{t-1}\beta_2^ic_{t-i}^2}}
        \le \frac{1-\beta_1}{\sqrt{1-\beta_2}\sqrt{1-\frac{\beta_1^2}{{\beta_2}}}},
    \end{flalign}
where the third inequality is due to the Cauchy-Schwarz inequality.
\end{proof}

\begin{lemma}\label{lemma:5}
(Lemma 5 in \citet{wang2023closing}) For any $\{c_t\}_{t=0}^\infty\ge0 $ and $a_t=\beta_2 a_{t-1}+(1-\beta_2)c_t^2$ and $b_t=\beta_1b_{t-1}+(1-\beta_1)c_t$, if $0<\beta_1^2<\beta_2<1$ and $a_0>0$, we have that
\begin{flalign}
    \sum_{t=1}^T\frac{b_t^2}{{a_t}}\le \frac{(1-\beta_1)^2}{(1-\frac{\beta_1}{\sqrt{\beta_2}})^2(1-\beta_2)}\left(\ln\left(\frac{a_T}{a_0}\right)-T\ln(\beta_2)\right).
\end{flalign}
\end{lemma}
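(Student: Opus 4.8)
The plan is to decouple the momentum numerator $b_t$ from the preconditioner $a_t$ by Cauchy--Schwarz and then telescope a logarithm, exactly in the spirit of the Adagrad-style potential arguments. First I would unroll the two recursions. Since $b_0=0$, we have $b_t=(1-\beta_1)\sum_{j=0}^{t-1}\beta_1^{j}c_{t-j}$, and iterating $a_t=\beta_2 a_{t-1}+(1-\beta_2)c_t^2$ gives $a_t=\beta_2^{t}a_0+\sum_{j=0}^{t-1}\beta_2^{j}(1-\beta_2)c_{t-j}^2$; in particular $a_t>0$ (this is where $a_0>0$ enters) and $a_t\ge \beta_2^{j}a_{t-j}$ for every $0\le j\le t-1$. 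The hypothesis $\beta_1^2<\beta_2$ makes $\gamma:=\beta_1/\sqrt{\beta_2}\in(0,1)$, and I would write $\beta_1^{j}=\gamma^{j}\beta_2^{j/2}=\gamma^{j/2}\cdot(\gamma^{j/2}\beta_2^{j/2})$ so that Cauchy--Schwarz yields
\[
b_t^2\le (1-\beta_1)^2\Big(\sum_{j=0}^{t-1}\gamma^{j}\Big)\Big(\sum_{j=0}^{t-1}\gamma^{j}\beta_2^{j}c_{t-j}^2\Big)\le \frac{(1-\beta_1)^2}{1-\gamma}\sum_{j=0}^{t-1}\gamma^{j}\beta_2^{j}c_{t-j}^2 .
\]

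Next I would divide by $a_t$ and use $\beta_2^{j}/a_t\le 1/a_{t-j}$ to obtain $\frac{b_t^2}{a_t}\le \frac{(1-\beta_1)^2}{1-\gamma}\sum_{j=0}^{t-1}\gamma^{j}\frac{c_{t-j}^2}{a_{t-j}}$, then sum over $t=1,\dots,T$ and swap the order of summation by setting $s=t-j$, so that for each fixed $s$ the lag $j$ runs over $0,\dots,T-s$; the inner geometric sum is again at most $1/(1-\gamma)$, giving
\[
\sum_{t=1}^T\frac{b_t^2}{a_t}\le \frac{(1-\beta_1)^2}{(1-\gamma)^2}\sum_{s=1}^T\frac{c_s^2}{a_s}.
\]
Finally I would control $\sum_{s=1}^T c_s^2/a_s$ by the standard logarithmic telescoping: from $(1-\beta_2)c_s^2=a_s-\beta_2 a_{s-1}$ we get $\frac{(1-\beta_2)c_s^2}{a_s}=1-\frac{\beta_2 a_{s-1}}{a_s}\le -\ln\frac{\beta_2 a_{s-1}}{a_s}=\ln\frac{a_s}{a_{s-1}}-\ln\beta_2$ (using $1-x\le -\ln x$ for $x>0$), and summing telescopes to $\sum_{s=1}^T\frac{c_s^2}{a_s}\le \frac{1}{1-\beta_2}\big(\ln(a_T/a_0)-T\ln\beta_2\big)$. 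Substituting back and recalling $1-\gamma=1-\beta_1/\sqrt{\beta_2}$ reproduces the asserted bound.

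The calculations are all elementary; the only points that need care are (i) choosing the Cauchy--Schwarz split so that a geometric factor $\gamma^{j}$ survives for \emph{both} the summation over the lag $j$ and, after exchanging the order of summation, the summation over $t$ — this is what produces the squared factor $(1-\gamma)^{-2}$ rather than a single power — and (ii) the inequality $a_t\ge \beta_2^{j}a_{t-j}$, which lets the weight $\beta_2^{j}$ be absorbed by an index shift in the denominator. Beyond this bookkeeping I expect no genuine obstacle.
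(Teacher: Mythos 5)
Your proposal is correct and follows essentially the same route as the paper's proof: unroll the momentum, apply Cauchy--Schwarz with the split ratio $\gamma=\beta_1/\sqrt{\beta_2}$, absorb $\beta_2^{j}$ into the shifted denominator via $a_t\ge\beta_2^{j}a_{t-j}$, pick up one factor $(1-\gamma)^{-1}$ from Cauchy--Schwarz and a second from swapping the order of summation, and finish with the logarithmic telescoping bound on $\sum_t c_t^2/a_t$ (your $1-x\le-\ln x$ step is the same inequality the paper obtains from the integral of $1/a$). The only cosmetic difference is that you apply Cauchy--Schwarz before dividing by $a_t$ whereas the paper first bounds $b_t/\sqrt{a_t}$ and then squares, which yields the identical intermediate estimate and constants.
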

\begin{proof}
    Due to the monotonicity of the $\frac{1}{a}$ function and Lemma 5.2 in \citet{defossez2020simple}, we have that
    \begin{flalign}
        \frac{(1-\beta_2)c_t^2}{a_t}\le \int_{a=a_t-(1-\beta_2)c_t^2}^{a_t} \frac{1}{a}da=\ln\left(\frac{a_t}{a_t-(1-\beta_2)c_t^2}\right)=\ln\left(\frac{a_t}{a_{t-1}}\right)-\ln(\beta_2).
    \end{flalign}
    By telescoping, we have  $\sum_{t=1}^T \frac{c_t^2}{a_t}\le \frac{1}{1-\beta_2}\ln\left(\frac{a_T}{a_{0}}\right)-T\frac{\ln(\beta_2)}{(1-\beta_2)}$. Moreover, for $b_t$ we can show that
    \begin{flalign}
    \frac{b_t}{\sqrt{a_t}}\le (1-\beta_1)\sum_{i=1}^t\frac{\beta_1^{t-i}c_i}{\sqrt{a_t}}
    \le (1-\beta_1)\sum_{i=1}^t\left(\frac{\beta_1}{\sqrt{\beta_2}}\right)^{t-i}\frac{c_i}{\sqrt{a_i}}.
\end{flalign}
    
Further applying the Cauchy-Schwarz inequality, we get that
\begin{flalign}
    \frac{b_t^2}{{a_t}}
    \le& (1-\beta_1)^2\left(\sum_{i=1}^t\left(\frac{\beta_1}{\sqrt{\beta_2}}\right)^{t-i}\frac{c_i}{\sqrt{a_i}}\right)^2\nonumber\\
    \le &(1-\beta_1)^2\left(\sum_{i=1}^t\left(\frac{\beta_1}{\sqrt{\beta_2}}\right)^{t-i}\right)\left(\sum_{i=1}^t\left({\frac{\beta_1}{\sqrt{\beta_2}}}\right)^{t-i} \frac{c_i^2}{{a_i}}\right)\nonumber\\
    \le & \frac{(1-\beta_1)^2}{\left(1-\frac{\beta_1}{\sqrt{\beta_2}}\right)}\left(\sum_{i=1}^t\left(\frac{\beta_1}{\sqrt{\beta_2}}\right)^{t-i} \frac{c_i^2}{{a_i}}\right).
\end{flalign}
This further implies that
\begin{flalign}
    \sum_{t=1}^T\frac{b_t^2}{{a_t}}\le \frac{(1-\beta_1)^2}{(1-\frac{\beta_1}{\sqrt{\beta_2}})^2(1-\beta_2)}\left(\ln\left(\frac{a_T}{a_0}\right)-T\ln(\beta_2)\right).
\end{flalign}
\end{proof}

\begin{lemma}\label{lemma:6}
 For any $\{c_t\}_{t=0}^\infty\ge0 $ and $a_t=\beta_2 a_{t-1}+(1-\beta_2)c_t^2$ and $b_t=\beta_1b_{t-1}+(1-\beta_1)c_t$, if $0<\beta_1^4<\beta_2<1$ and $a_0>0$, we have that
\begin{flalign}
    \sum_{t=1}^T\frac{b_t^2}{\sqrt{a_t}}\le  \frac{(1-\beta_1)^2}{(1-\frac{\beta_1}{\sqrt[4]{\beta_2}})^2}\left(\frac{2}{1-\beta_2}(\sqrt{a_T}-\sqrt{a_{0}})+\sum_{t=1}^T2\sqrt{a_{t-1}}\right).
\end{flalign}
\end{lemma}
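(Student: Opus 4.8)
The plan is to follow the same architecture as the proof of Lemma \ref{lemma:5}, but replacing the logarithmic integral estimate for $1/a$ by the corresponding estimate for $1/\sqrt{a}$, and replacing the geometric ratio $\beta_1/\sqrt{\beta_2}$ by $\beta_1/\sqrt[4]{\beta_2}$ — this last substitution is precisely where the strengthened hypothesis $\beta_1^4<\beta_2$ (rather than $\beta_1^2<\beta_2$) is used, since it is exactly what makes $\beta_1/\sqrt[4]{\beta_2}<1$.

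First I would bound the ``$c_t^2$'' version. Since $a_t-(1-\beta_2)c_t^2=\beta_2 a_{t-1}$ and $s\mapsto 1/\sqrt{s}$ is decreasing on $[\beta_2 a_{t-1},a_t]$ (an interval of length $(1-\beta_2)c_t^2$), we get
\[
\frac{(1-\beta_2)c_t^2}{\sqrt{a_t}}\le \int_{\beta_2 a_{t-1}}^{a_t}\frac{ds}{\sqrt{s}}=2\sqrt{a_t}-2\sqrt{\beta_2}\sqrt{a_{t-1}}=2\big(\sqrt{a_t}-\sqrt{a_{t-1}}\big)+2\big(1-\sqrt{\beta_2}\big)\sqrt{a_{t-1}}.
\]
Summing over $t=1,\dots,T$ telescopes the first piece; dividing by $1-\beta_2$ and using $\frac{1-\sqrt{\beta_2}}{1-\beta_2}=\frac{1}{1+\sqrt{\beta_2}}\le 1$ yields
\[
\sum_{t=1}^T\frac{c_t^2}{\sqrt{a_t}}\le \frac{2}{1-\beta_2}\big(\sqrt{a_T}-\sqrt{a_0}\big)+2\sum_{t=1}^T\sqrt{a_{t-1}},
\]
which is exactly the parenthesized expression in the target. (Here $a_0>0$ guarantees $a_i\ge\beta_2^i a_0>0$, so all the divisions are legitimate.)

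Next I would handle $b_t$. Writing $b_t=(1-\beta_1)\sum_{i=1}^t\beta_1^{t-i}c_i$ and using the monotonicity $a_t\ge\beta_2^{t-i}a_i$, hence $a_t^{1/4}\ge\beta_2^{(t-i)/4}a_i^{1/4}$, I obtain
\[
\frac{b_t}{a_t^{1/4}}\le (1-\beta_1)\sum_{i=1}^t\Big(\frac{\beta_1}{\sqrt[4]{\beta_2}}\Big)^{t-i}\frac{c_i}{a_i^{1/4}}.
\]
Setting $r:=\beta_1/\sqrt[4]{\beta_2}<1$ and applying Cauchy--Schwarz by splitting $r^{t-i}=r^{(t-i)/2}\cdot r^{(t-i)/2}$ gives
\[
\frac{b_t^2}{\sqrt{a_t}}\le (1-\beta_1)^2\Big(\sum_{i=1}^t r^{t-i}\Big)\Big(\sum_{i=1}^t r^{t-i}\frac{c_i^2}{\sqrt{a_i}}\Big)\le \frac{(1-\beta_1)^2}{1-r}\sum_{i=1}^t r^{t-i}\frac{c_i^2}{\sqrt{a_i}}.
\]

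Finally I would sum over $t=1,\dots,T$, swap the order of summation, bound $\sum_{t=i}^T r^{t-i}\le \frac{1}{1-r}$, and substitute the ``$c_t^2$'' bound from the first step:
\[
\sum_{t=1}^T\frac{b_t^2}{\sqrt{a_t}}\le \frac{(1-\beta_1)^2}{(1-r)^2}\sum_{i=1}^T\frac{c_i^2}{\sqrt{a_i}}\le \frac{(1-\beta_1)^2}{\big(1-\frac{\beta_1}{\sqrt[4]{\beta_2}}\big)^2}\Big(\frac{2}{1-\beta_2}\big(\sqrt{a_T}-\sqrt{a_0}\big)+2\sum_{t=1}^T\sqrt{a_{t-1}}\Big),
\]
which is the claim. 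I expect the only genuinely delicate point to be the first (integral/telescoping) step together with tracking the $\sqrt{\beta_2}$ factor so that it collapses to the clean bound above; everything downstream — the geometric estimate for $b_t$, Cauchy--Schwarz, and interchanging the double sum — is routine, the one caveat being that the weaker exponent $1/4$ forces the ratio $\beta_1/\sqrt[4]{\beta_2}$, so one must invoke $\beta_1^4<\beta_2$ exactly at that point.
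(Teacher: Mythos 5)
Your proposal is correct and follows essentially the same route as the paper's proof: the integral bound $\frac{(1-\beta_2)c_t^2}{\sqrt{a_t}}\le 2\sqrt{a_t}-2\sqrt{\beta_2 a_{t-1}}$ followed by telescoping, then the geometric expansion of $b_t$ with ratio $\beta_1/\sqrt[4]{\beta_2}$, Cauchy--Schwarz, and summation/substitution. The only cosmetic difference is when the factor is absorbed ($\frac{1-\sqrt{\beta_2}}{1-\beta_2}\le 1$ versus the paper's $1-\sqrt{\beta_2}\le 1-\beta_2$ before dividing), which is immaterial.
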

\begin{proof}
    Due to the monotonicity of the $\frac{1}{\sqrt{a}}$ function, we can show that
    \begin{flalign}
        \frac{(1-\beta_2)c_t^2}{\sqrt{a}_t}&\le \int_{a=a_t-(1-\beta_2)c_t^2}^{a_t} \frac{1}{\sqrt{a}}da=2\sqrt{a_t}-2\sqrt{a_t-(1-\beta_2)c_t^2}\nonumber\\
        &=2\sqrt{a_t}-2\sqrt{\beta_2a_{t-1}}\le 2\sqrt{a_t}-2\sqrt{a_{t-1}}+2(1-\beta_2)\sqrt{a_{t-1}}.
    \end{flalign}
    Taking sum up from $t=1$ to $T$, we obtain the following: 
    \begin{flalign}\label{lemma:e3}
    \sum_{t=1}^T \frac{(1-\beta_2)c_t^2}{\sqrt{a_t}}\le  2\sqrt{a_T}-2\sqrt{a_{0}}+2(1-\beta_2)\sum_{t=1}^T\sqrt{a_{t-1}}.
    \end{flalign}
    We then derive the following bound:
\begin{flalign}
    \frac{b_t}{\sqrt[4]{a_t}}\le (1-\beta_1)\sum_{i=1}^t\frac{\beta_1^{t-i}c_i}{\sqrt[4]{a_t}}
    \le (1-\beta_1)\sum_{i=1}^t\left(\frac{\beta_1}{\sqrt[4]{\beta_2}}\right)^{t-i}\frac{c_i}{\sqrt[4]{a_i}}.
\end{flalign}
It then follows that
\begin{flalign}
    \frac{b_t^2}{\sqrt{a_t}}
    \le& (1-\beta_1)^2\left(\sum_{i=1}^t\left(\frac{\beta_1}{\sqrt[4]{\beta_2}}\right)^{t-i}\frac{c_i}{\sqrt[4]{a_i}}\right)^2\nonumber\\
    \le &(1-\beta_1)^2\left(\sum_{i=1}^t\left(\frac{\beta_1}{\sqrt[4]{\beta_2}}\right)^{t-i}\right)\left(\sum_{i=1}^t\left({\frac{\beta_1}{\sqrt[4]{\beta_2}}}\right)^{t-i} \frac{c_i^2}{\sqrt{a_i}}\right)\nonumber\\
    \le & \frac{(1-\beta_1)^2}{1-\frac{\beta_1}{\sqrt[4]{\beta_2}}}\left(\sum_{i=1}^t\left({\frac{\beta_1}{\sqrt[4]{\beta_2}}}\right)^{t-i} \frac{c_i^2}{\sqrt{a_i}}\right).
\end{flalign}
Taking the sum from $t=1$ to $T$, we can derive that
\begin{flalign}\label{eq:proof59}
    \sum_{t=1}^T\frac{b_t^2}{\sqrt{a_t}}\le  \frac{(1-\beta_1)^2}{(1-\frac{\beta_1}{\sqrt[4]{\beta_2}})^2}\sum_{t=1}^T\frac{c_t^2}{\sqrt{a_t}}.
\end{flalign}
Plug \eqref{lemma:e3} in \eqref{eq:proof59}, and we complete the proof.
\end{proof}

\section{Lemma \ref{formal:lemma3} and Its Proof}\label{proof:lemma3}
Define $C_1=1-\frac{\beta_1}{\sqrt{\beta_2}},C_2=\sqrt{1-\frac{\beta_1^2}{\beta_2}}$, we have the following lemma:
\begin{lemma}\label{formal:lemma3}
     For any $\alpha_0,\alpha_1,\alpha_3, \alpha_4>0$, $\beta_2>0.5,$ and $0<\beta_1^2<\beta_2$,  we have that
\begin{flalign}
 &{\mathbb E[\langle \nabla f(\boldsymbol u_t), \boldsymbol u_{t}-\boldsymbol u_{t+1} \rangle|\mathcal F_t]}\nonumber\\  
 \ge& \Bigg(\frac{\eta (1-\beta_1)}{C_1}-\frac{\eta(1-\beta_1)}{C_1C_2}\left(\frac{1}{2\alpha_0}+\frac{\alpha_0}{2\alpha_1}+\frac{\eta \alpha_0\sqrt{d}D_1L_1(1-\beta_1)}{\sqrt{1-\beta_2}C_2}\right)\nonumber\\
 &-\frac{\eta \beta_1(1-\beta_1)\sqrt{\zeta}}{2\alpha_3C_1C_2}-\frac{\eta L_1(1-C_1)^2}{2\alpha_4C_1^2}-\frac{\eta L_1(1-C_1)}{2\alpha_4C_1^2}\Bigg)\times \sum_{i=1}^d\frac{(\partial_i f(\boldsymbol x_t))^2}{\sqrt{{\beta_2\boldsymbol v_{t-1,i}} + \zeta}}\nonumber\\
 &-\sum_{i=1}^d\frac{\eta (1-\beta_1)}{C_1C_2}\Bigg(\frac{\alpha_0D_0}{2}\Bigg(\frac{1}{\sqrt{\beta_2 \boldsymbol v_{t-1,i}+\zeta}}-\mathbb E\Bigg[{\frac{1}{\sqrt{\boldsymbol v_{t,i}+\zeta}}}\Bigg|\mathcal F_t\Bigg]\Bigg)\nonumber\\
 &+\frac{\alpha_0 D_1}{2}\mathbb E\left[\frac{(\partial_i  f(\boldsymbol x_{t-1}))^2}{\sqrt{{\beta_2 \boldsymbol v_{t-1,i}} + \zeta}}-\frac{(\partial_i f(\boldsymbol x_{t}))^2}{\sqrt{{ \boldsymbol v_{t,i}} + \zeta}}\Bigg|\mathcal F_t\right]\Bigg)\nonumber\\
 &-\frac{\alpha_0\alpha_1D_1^2L_0^2\eta^3d^2(1-\beta_1)^3}{2(1-\beta_2)C_1C_2^3\sqrt{\zeta})}-\frac{\alpha_3\eta d\beta_1(1-\beta_1)(1-\beta_2)}{2C_1C_2}\nonumber\\
 &-\sum_{i=1}^d\frac{\eta^2((1-C_1)^2+0.5(1-C_1))\sqrt{d}L_0}{C_1^2}  \frac{\boldsymbol m_{t-1,i}^2}{\boldsymbol v_{t-1,i}+\zeta}-\sum_{i=1}^d\frac{\eta^20.5(1-C_1)\sqrt{d}L_0}{C_1^2}\mathbb E\Bigg[ \frac{\boldsymbol m_{t,i}^2}{\boldsymbol v_{t,i}+\zeta}\Bigg|\mathcal F_t\Bigg]\nonumber\\
 &-\sum_{i=1}^d\frac{\alpha_4\eta^3(1-\beta_1)^2(1-C_1)^2dL_1}{2(1-\beta_2)C_1^2C_2^2}\frac{\boldsymbol m_{t-1,i}^2}{\sqrt{\boldsymbol v_{t-1,i}+\zeta}}-\sum_{i=1}^d\frac{\alpha_4\eta^3(1-\beta_1)^2(1-C_1)dL_1}{2(1-\beta_2)C_1^2C_2^2}\mathbb E\left[\frac{\boldsymbol m_{t,i}^2}{\sqrt{\boldsymbol v_{t,i}+\zeta}}\Bigg|\mathcal F_t\right].
\end{flalign}
\end{lemma}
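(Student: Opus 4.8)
The plan is to work from the exact forms of the two displacements appearing in the statement. Using $\boldsymbol x_{t+1}-\boldsymbol x_t = -\eta\,\boldsymbol m_t/\sqrt{\boldsymbol v_t+\zeta}$, $\boldsymbol m_t=\beta_1\boldsymbol m_{t-1}+(1-\beta_1)\boldsymbol g_t$, and the identity $\beta_1/\sqrt{\beta_2}=1-C_1$, a short algebraic computation yields
\[
\boldsymbol u_{t+1}-\boldsymbol u_t=\frac{\eta}{C_1}\Big(-\frac{(1-\beta_1)\boldsymbol g_t}{\sqrt{\boldsymbol v_t+\zeta}}+\beta_1\,\boldsymbol m_{t-1}\odot\Big(\frac{1}{\sqrt{\beta_2}\sqrt{\boldsymbol v_{t-1}+\zeta}}-\frac{1}{\sqrt{\boldsymbol v_t+\zeta}}\Big)\Big),\qquad \boldsymbol u_t-\boldsymbol x_t=-\frac{(1-C_1)\eta}{C_1}\frac{\boldsymbol m_{t-1}}{\sqrt{\boldsymbol v_{t-1}+\zeta}}.
\]
I then split $\langle\nabla f(\boldsymbol u_t),\boldsymbol u_t-\boldsymbol u_{t+1}\rangle=\langle\nabla f(\boldsymbol x_t),\boldsymbol u_t-\boldsymbol u_{t+1}\rangle+\langle\nabla f(\boldsymbol u_t)-\nabla f(\boldsymbol x_t),\boldsymbol u_t-\boldsymbol u_{t+1}\rangle$ and bound the two pieces separately, collecting at the end all contributions proportional to $\sum_i(\partial_i f(\boldsymbol x_t))^2/\sqrt{\beta_2\boldsymbol v_{t-1,i}+\zeta}$.

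For the gradient piece I reuse the RMSProp argument. I peel off the ``clean'' term $\frac{\eta(1-\beta_1)}{C_1}\langle\nabla f(\boldsymbol x_t),\boldsymbol g_t/\sqrt{\boldsymbol v_t+\zeta}\rangle$ and replace $\sqrt{\boldsymbol v_t+\zeta}$ by the $\mathcal F_t$-measurable surrogate $\sqrt{\beta_2\boldsymbol v_{t-1}+\zeta}$; since $\boldsymbol g_t$ is then independent of the denominator, the main part equals $\frac{\eta(1-\beta_1)}{C_1}\sum_i(\partial_i f(\boldsymbol x_t))^2/\sqrt{\beta_2\boldsymbol v_{t-1,i}+\zeta}$, and the surrogate error is exactly controlled by Lemma~\ref{lemma:1} (with $\alpha_0,\alpha_1$), scaled by $\frac{1-\beta_1}{C_1C_2}$; this produces the telescoping terms in $1/\sqrt{\boldsymbol v_{t,i}+\zeta}$ and $(\partial_i f(\boldsymbol x_t))^2/\sqrt{\boldsymbol v_{t,i}+\zeta}$ and the $O(\eta^3/(1-\beta_2))$ constant. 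The leftover momentum-correction term $\frac{\eta\beta_1}{C_1}\langle\nabla f(\boldsymbol x_t),\boldsymbol m_{t-1}\odot(\tfrac{1}{\sqrt{\beta_2}\sqrt{\boldsymbol v_{t-1}+\zeta}}-\tfrac{1}{\sqrt{\boldsymbol v_t+\zeta}})\rangle$ is handled coordinate-wise by Young's inequality with parameter $\alpha_3$: using that $\boldsymbol m_{t-1},\boldsymbol v_{t-1}$ are $\mathcal F_t$-measurable, that $0\le\tfrac{1}{\sqrt{\beta_2}\sqrt{\boldsymbol v_{t-1,i}+\zeta}}-\tfrac{1}{\sqrt{\boldsymbol v_{t,i}+\zeta}}$, and the envelope $|\boldsymbol m_{t-1,i}|\le\frac{1-\beta_1}{\sqrt{1-\beta_2}\,C_2}\sqrt{\boldsymbol v_{t-1,i}+\zeta}$ from Lemma~\ref{lemma:4}, this gives the $\propto\eta\beta_1\sqrt{\zeta}/(\alpha_3C_1C_2)$ contribution to the $(\partial_i f)^2$ coefficient plus the small $O((1-\beta_2))$ constant.

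For the mismatch piece I invoke coordinate-wise $(L_0,L_1)$-smoothness, $|\partial_i f(\boldsymbol u_t)-\partial_i f(\boldsymbol x_t)|\le(L_0+L_1|\partial_i f(\boldsymbol x_t)|)\|\boldsymbol u_t-\boldsymbol x_t\|$, substitute $\|\boldsymbol u_t-\boldsymbol x_t\|=\frac{(1-C_1)\eta}{C_1}\|\boldsymbol m_{t-1}/\sqrt{\boldsymbol v_{t-1}+\zeta}\|$, and bound each $|\boldsymbol u_{t,i}-\boldsymbol u_{t+1,i}|$ by the two pieces of the displacement formula above. Applying $ab\le\tfrac12(a^2+b^2)$ and, for the $L_1$-term, an extra Young step with parameter $\alpha_4$, separates out the further $(\partial_i f(\boldsymbol x_t))^2/\sqrt{\beta_2\boldsymbol v_{t-1,i}+\zeta}$ contribution (the $\frac{L_1(1-C_1)^2}{C_1^2}$ and $\frac{L_1(1-C_1)}{C_1^2}$ terms), while the $L_0$-term and the residual $L_1$-term are retained as the four momentum quantities $\boldsymbol m_{t-1,i}^2/(\boldsymbol v_{t-1,i}+\zeta)$, $\boldsymbol m_{t,i}^2/(\boldsymbol v_{t,i}+\zeta)$, $\boldsymbol m_{t-1,i}^2/\sqrt{\boldsymbol v_{t-1,i}+\zeta}$, $\boldsymbol m_{t,i}^2/\sqrt{\boldsymbol v_{t,i}+\zeta}$ with the stated $\sqrt d\eta^2L_0$ and $\eta^3L_1/(1-\beta_2)$ coefficients; here $\boldsymbol v_{t,i}+\zeta\ge\beta_2(\boldsymbol v_{t-1,i}+\zeta)$ is used to reindex the $\boldsymbol g_t$-part of $\boldsymbol u_{t,i}-\boldsymbol u_{t+1,i}$ to index $t-1$ where needed. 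Summing all $(\partial_i f)^2/\sqrt{\beta_2\boldsymbol v_{t-1,i}+\zeta}$ contributions from the clean term, the surrogate error, the momentum correction, and the mismatch term gives the bracketed coefficient; because $\boldsymbol x_0=\boldsymbol x_1$ and $\boldsymbol m_0=0$ the $t=1$ case is consistent with the convention.

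The hard part will be the momentum-correction and mismatch terms: since $\boldsymbol m_{t-1}$ is not independent of $\boldsymbol v_{t-1}$, Assumption~\ref{assump:variance} cannot be applied to $\boldsymbol m_t$, so the whole argument relies on treating $\boldsymbol m_{t-1},\boldsymbol v_{t-1}$ as $\mathcal F_t$-measurable, on the deterministic envelope from Lemma~\ref{lemma:4}, and on choosing $\alpha_0,\alpha_1,\alpha_3,\alpha_4$ so that the momentum quantities $\boldsymbol m_{t,i}^2/\boldsymbol v_{t,i}$ and $\boldsymbol m_{t,i}^2/\sqrt{\boldsymbol v_{t,i}}$ are multiplied only by factors of order $\sqrt d\eta^2L_0$ or $\eta^3L_1/(1-\beta_2)$, so that Lemmas~\ref{lemma:5}--\ref{lemma:6} can later absorb them. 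Keeping precise track of which time index ($t-1$ versus $t$) each $\boldsymbol m$ and $\boldsymbol v$ carries in every Young/AM--GM step is the delicate bookkeeping that makes the bound tight enough for the telescoping to go through.
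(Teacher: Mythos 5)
Your overall strategy is the right one and matches the paper's: the same potential $\boldsymbol u_t$, the split $\langle\nabla f(\boldsymbol u_t),\boldsymbol u_t-\boldsymbol u_{t+1}\rangle=\langle\nabla f(\boldsymbol x_t),\cdot\rangle+\langle\nabla f(\boldsymbol u_t)-\nabla f(\boldsymbol x_t),\cdot\rangle$, the envelope from Lemma \ref{lemma:4}, and the Young steps with $\alpha_3,\alpha_4$ that leave the four momentum quantities for Lemmas \ref{lemma:5}--\ref{lemma:6}; your treatment of the mismatch (``Error 3'') piece is essentially the paper's. The gap is in how you route the gradient part. Your regrouping puts the entire drift $\frac{1}{\sqrt{\beta_2}\sqrt{\boldsymbol v_{t-1,i}+\zeta}}-\frac{1}{\sqrt{\boldsymbol v_{t,i}+\zeta}}$ against $\beta_1\boldsymbol m_{t-1,i}$ and claims it is handled by the $\mathcal F_t$-measurability of $\boldsymbol m_{t-1}$, the envelope, and Young with $\alpha_3$, producing only the $\eta\beta_1\sqrt{\zeta}/(\alpha_3C_1C_2)$ coefficient plus an $O(1-\beta_2)$ constant. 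That bracket, however, contains the $(1-\beta_2)\boldsymbol g_{t,i}^2$ drift of $\boldsymbol v$, not just the $\zeta$-versus-$\beta_2\zeta$ discrepancy: deterministically, $\sqrt{\boldsymbol v_{t-1,i}+\zeta}\bigl(\tfrac{1}{\sqrt{\beta_2}\sqrt{\boldsymbol v_{t-1,i}+\zeta}}-\tfrac{1}{\sqrt{\boldsymbol v_{t,i}+\zeta}}\bigr)$ can be of order one when $\boldsymbol g_{t,i}$ is large, so after multiplying by the envelope you only get a bound of order $\eta|\partial_i f(\boldsymbol x_t)|/\sqrt{1-\beta_2}$, far too large to be absorbed into the stated $\alpha_3$ terms. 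Only the $\zeta\mapsto\beta_2\zeta$ part (the paper's Error 2, with numerator $(1-\beta_2)\zeta$) is genuinely small; the $\boldsymbol g_t$-driven part must go through the conditional expectation, the affine-noise assumption, and the telescoping machinery. The paper avoids your problem by keeping $\boldsymbol m_t$ whole in the $\boldsymbol v_{t-1}\to\boldsymbol v_t$ surrogate error (its Error 1), so that Lemma \ref{lemma:4} absorbs the factor $1/\sqrt{\boldsymbol v_{t,i}+\zeta}$ and the remaining $(1-\beta_2)\boldsymbol g_{t,i}^2$ gets the same treatment as in Lemma \ref{lemma:1}; this is also exactly where the $1/C_2$ factors in the stated coefficients come from.

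Relatedly, you cannot cite Lemma \ref{lemma:1} ``exactly'' for your $\boldsymbol g_t$-only surrogate error: its proof uses the RMSProp update $\boldsymbol x_t-\boldsymbol x_{t-1}=-\eta\,\boldsymbol g_{t-1}/\sqrt{\boldsymbol v_{t-1}+\zeta}$ when bounding $(\partial_i f(\boldsymbol x_t))^2-(\partial_i f(\boldsymbol x_{t-1}))^2$, whereas for Adam $\boldsymbol x_t-\boldsymbol x_{t-1}=-\eta\,\boldsymbol m_{t-1}/\sqrt{\boldsymbol v_{t-1}+\zeta}$, and that step must be redone with Lemma \ref{lemma:4}, which changes the constants (this is the $d(1-\beta_1)^2/((1-\beta_2)C_2^2)$-type factors in the paper's proof). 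Moreover, under your own split the surrogate error would carry the factor $(1-\beta_1)/C_1$, not $(1-\beta_1)/(C_1C_2)$ as you assert, so the coefficients you claim do not follow from your decomposition. To close the argument you should either regroup as the paper does (main term with $(1-\beta_1)\boldsymbol g_t/\sqrt{\beta_2\boldsymbol v_{t-1}+\zeta}$, Error 1 with $\boldsymbol m_t$ against the $\boldsymbol v_{t-1}\to\boldsymbol v_t$ difference, Error 2 with $\beta_1\boldsymbol m_{t-1}$ against the $\zeta$ discrepancy only), or redo the expectation/affine-noise bookkeeping for the $\boldsymbol g_t$-driven part of your momentum-correction term and accept the resulting extra terms.
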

\begin{proof}
Since $\boldsymbol u_t=\frac{\boldsymbol x_t-\frac{\beta_1}{\sqrt{\beta_2}}\boldsymbol x_{t-1}}{1-\frac{\beta_1}{\sqrt{\beta_2}}}$, we then have that
\begin{flalign}\label{eq:u}
    \boldsymbol u_{t+1}-\boldsymbol u_t=&\frac{\boldsymbol x_{t+1}-\boldsymbol x_t}{1-\frac{\beta_1}{\sqrt{\beta_2}}}-\frac{\beta_1}{\sqrt{\beta_2}}\frac{\boldsymbol x_{t}-\boldsymbol x_{t-1}}{1-\frac{\beta_1}{\sqrt{\beta_2}}}\nonumber\\
    =&\frac{ - \eta \odot \frac{\boldsymbol m_t}{\sqrt{{\boldsymbol v_t} + \zeta}}}{1-\frac{\beta_1}{\sqrt{\beta_2}}}+\frac{\beta_1}{\sqrt{\beta_2}}\frac{ \eta \odot \frac{\boldsymbol m_{t-1}}{\sqrt{{\boldsymbol v_{t-1}} + \zeta}}}{1-\frac{\beta_1}{\sqrt{\beta_2}}}\nonumber\\
    =&\frac{ - \eta \odot \frac{\boldsymbol m_t}{\sqrt{{\beta_2\boldsymbol v_{t-1}} + \zeta}}}{1-\frac{\beta_1}{\sqrt{\beta_2}}}+\frac{ - \eta \odot \frac{\boldsymbol m_t}{\sqrt{{\boldsymbol v_t} + \zeta}}+ \eta \odot \frac{\boldsymbol m_t}{\sqrt{{\beta_2\boldsymbol v_{t-1}} + \zeta}}}{1-\frac{\beta_1}{\sqrt{\beta_2}}}\nonumber\\
    &+\frac{ \eta \odot \frac{\beta_1\boldsymbol m_{t-1}}{\sqrt{\beta_2{\boldsymbol v_{t-1}} + \zeta}}}{1-\frac{\beta_1}{\sqrt{\beta_2}}}+\frac{ \eta \odot \frac{\beta_1\boldsymbol m_{t-1}}{\sqrt{\beta_2{\boldsymbol v_{t-1}} + \beta_2\zeta}}- \eta  \odot\frac{\beta_1\boldsymbol m_{t-1}}{\sqrt{\beta_2{\boldsymbol v_{t-1}} + \zeta}}}{1-\frac{\beta_1}{\sqrt{\beta_2}}}\nonumber\\
    =&\frac{ - \eta \odot \frac{(1-\beta_1)\boldsymbol g_t}{\sqrt{{\beta_2\boldsymbol v_{t-1}} + \zeta}}}{1-\frac{\beta_1}{\sqrt{\beta_2}}}+\frac{ - \eta \odot \frac{\boldsymbol m_t}{\sqrt{{\boldsymbol v_t} + \zeta}}+ \eta \odot \frac{\boldsymbol m_t}{\sqrt{{\beta_2\boldsymbol v_{t-1}} + \zeta}}}{1-\frac{\beta_1}{\sqrt{\beta_2}}}+\frac{ \eta\odot  \frac{\beta_1\boldsymbol m_{t-1}}{\sqrt{\beta_2{\boldsymbol v_{t-1}} + \beta_2\zeta}}- \eta\odot  \frac{\beta_1\boldsymbol m_{t-1}}{\sqrt{\beta_2{\boldsymbol v_{t-1}} + \zeta}}}{1-\frac{\beta_1}{\sqrt{\beta_2}}},
\end{flalign}
where the last equality is due to $\boldsymbol m_t=\beta_1 \boldsymbol m_{t-1}+(1-\beta_1)\boldsymbol g_t$.
For $(L_0,L_1)$-smooth objectives, from Lemma 1 in \citet{crawshaw2022robustness} we can get that
\begin{flalign}
    \underbrace{\mathbb E[\langle \nabla f(\boldsymbol u_t), \boldsymbol u_{t}-\boldsymbol u_{t+1} \rangle|\mathcal F_t]}_{\text{first-order}}&\le f(\boldsymbol u_t)-\mathbb E[f(\boldsymbol u_{t+1})|\mathcal F_t]
  +\underbrace{\sum_{i=1}^d\frac{L_0}{2\sqrt{d}}\mathbb E[\|\boldsymbol u_{t+1}-\boldsymbol u_t\||\boldsymbol u_{t+1,i}-\boldsymbol u_{t,i}||\mathcal F_t]}_{\text{second-order}}\nonumber\\
    &+\underbrace{\sum_{i=1}^d\frac{L_1|\partial_i f(\boldsymbol u_t)|}{2}\mathbb E[\|\boldsymbol u_{t+1}-\boldsymbol u_t\||\boldsymbol u_{t+1,i}-\boldsymbol u_{t,i}||\mathcal F_t]}_{\text{additional term}}.
\label{eq:generalsmoothforu}
\end{flalign}
We then focus on the first-order term. Based on \eqref{eq:u}, we divide our first-order term into four parts:
\begin{flalign}\label{eq:6eq8}
    {\mathbb E[\langle \nabla f(\boldsymbol u_t), \boldsymbol u_{t}-\boldsymbol u_{t+1} \rangle|\mathcal F_t]}=& {\mathbb E[\langle \nabla f(\boldsymbol x_t), \boldsymbol u_{t}-\boldsymbol u_{t+1} \rangle|\mathcal F_t]}+ {\mathbb E\left[\left\langle \nabla f(\boldsymbol u_t)-\nabla f(\boldsymbol x_t), \boldsymbol u_{t}-\boldsymbol u_{t+1} \right\rangle|\mathcal F_t\right]}\nonumber\\
    =& \underbrace{{\mathbb E\left[\left\langle \nabla f(\boldsymbol x_t),\frac{  \eta\odot  \frac{(1-\beta_1)\boldsymbol g_t}{\sqrt{{\beta_2\boldsymbol v_{t-1}} + \zeta}}}{1-\frac{\beta_1}{\sqrt{\beta_2}}} \right\rangle\Bigg|\mathcal F_t\right]}}_{\text{first-order main}}\nonumber\\
    &-\underbrace{{\mathbb E\left[\left\langle \nabla f(\boldsymbol x_t),\frac{ - \eta \odot \frac{\boldsymbol m_t}{\sqrt{{\boldsymbol v_t} + \zeta}}+ \eta\odot  \frac{\boldsymbol m_t}{\sqrt{{\beta_2\boldsymbol v_{t-1}} + \zeta}}}{1-\frac{\beta_1}{\sqrt{\beta_2}}}\right\rangle\Bigg|\mathcal F_t\right]}}_{\text{error 1}}\nonumber\\
    &-\underbrace{{\left\langle \nabla f(\boldsymbol x_t),\frac{ \eta \odot \frac{\beta_1\boldsymbol m_{t-1}}{\sqrt{\beta_2{\boldsymbol v_{t-1}} + \beta_2\zeta}}- \eta \odot \frac{\beta_1\boldsymbol m_{t-1}}{\sqrt{\beta_2{\boldsymbol v_{t-1}} + \zeta}}}{1-\frac{\beta_1}{\sqrt{\beta_2}}} \right\rangle}}_{\text{error 2}}\nonumber\\
    &- \underbrace{{\mathbb E[\langle \nabla f(\boldsymbol u_t)-\nabla f(\boldsymbol x_t), \boldsymbol u_{t+1}-\boldsymbol u_{t} \rangle|\mathcal F_t]}}_{\text{error 3}}.
\end{flalign}
The first-order main term is easy to bound. Since given $\mathcal F_t$, $\boldsymbol g_t$ is independent of $\boldsymbol v_{t-1}$ and we have that
\begin{flalign}\label{eq:6eq9}
    \underbrace{{\mathbb E\left[\left\langle \nabla f(\boldsymbol x_t),\frac{  \eta \odot \frac{(1-\beta_1)\boldsymbol g_t}{\sqrt{{\beta_2\boldsymbol v_{t-1}} + \zeta}}}{1-\frac{\beta_1}{\sqrt{\beta_2}}} \right\rangle\Bigg|\mathcal F_t\right]}}_{\text{first-order main}}=\sum_{i=1}^d\frac{\eta(1-\beta_1)}{1-\frac{\beta_1}{\sqrt{\beta_2}}}\frac{(\partial_i f(\boldsymbol x_t))^2}{\sqrt{{\beta_2\boldsymbol v_{t-1,i}} + \zeta}}.
\end{flalign}
We then focus on the error 1 term. Since $\boldsymbol v_t=\beta_2 \boldsymbol v_{t-1}+(1-\beta_2)\boldsymbol g_t\odot \boldsymbol g_t$, we {have \eqref{eq:lemma1eq1}} and it follows that
\begin{flalign}
    &\underbrace{{\mathbb E\left[\left\langle \nabla f(\boldsymbol x_t),\frac{ - \eta \odot \frac{\boldsymbol m_t}{\sqrt{{\boldsymbol v_t} + \zeta}}+ \eta\odot  \frac{\boldsymbol m_t}{\sqrt{{\beta_2\boldsymbol v_{t-1}} + \zeta}}}{1-\frac{\beta_1}{\sqrt{\beta_2}}}\right\rangle\Bigg|\mathcal F_t\right]}}_{\text{error 1}}\nonumber\\
    \le& \sum_{i=1}^d \frac{\eta}{1-\frac{\beta_1}{\sqrt{\beta_2}}}{\mathbb E\left[ |\partial_i f(\boldsymbol x_t)| \frac{(1-\beta_2)\boldsymbol g_{t,i}^2|\boldsymbol m_{t,i}|}{(\sqrt{{ \boldsymbol v_{t,i}} + \zeta})(\sqrt{{\beta_2 \boldsymbol v_{t-1,i}} + \zeta})(\sqrt{{ \boldsymbol v_{t,i}}+\zeta}+\sqrt{{\beta_2 \boldsymbol v_{t-1,i}}+\zeta})}\Bigg|\mathcal F_t\right]}.
\end{flalign}
According to Lemma \ref{lemma:4}, we have 
  $  \frac{|\boldsymbol m_{t,i}|}{\sqrt{\boldsymbol v_{t,i}+\zeta}}\le \frac{1-\beta_1}{\sqrt{1-\beta_2}\sqrt{1-\frac{\beta_1^2}{{\beta_2}}}}$, which demonstrates that
  \begin{flalign}
   \text{error 1}
    \le &\sum_{i=1}^d \frac{\eta}{1-\frac{\beta_1}{\sqrt{\beta_2}}} 
  \frac{1-\beta_1}{\sqrt{1-\frac{\beta_1^2}{\beta_2}}} 
  \frac{|\partial_i f(\boldsymbol x_t)|}{\sqrt{{\beta_2 \boldsymbol v_{t-1,i}} + \zeta}}\mathbb E\left[\left\| \frac{\sqrt{1-\beta_2}\boldsymbol g_{t,i}^2}{(\sqrt{{ \boldsymbol v_{t,i}} + \zeta}+\sqrt{{\beta_2 \boldsymbol v_{t-1,i}} + \zeta})}\right\|\Bigg|\mathcal F_t\right].
\end{flalign}
Similar to the proof in Appendix \ref{proof:lemma1},  we can show that for any $\alpha_0>0$ and $i\in[d]$,
\begin{flalign}\label{proof:6eq2}
     &  \frac{|\partial_i f(\boldsymbol x_t)|}{\sqrt{{\beta_2 \boldsymbol v_{t-1,i}} + \zeta}}\mathbb E\left[\left\| \frac{\sqrt{1-\beta_2}\boldsymbol g_{t,i}^2}{(\sqrt{{ \boldsymbol v_{t,i}} + \zeta}+\sqrt{{\beta_2 \boldsymbol v_{t-1,i}} + \zeta})}\right\|\Bigg|\mathcal F_t\right]\nonumber\\
     \le &\frac{ (\partial_i f(\boldsymbol x_t))^2}{2\alpha_0\sqrt{{\beta_2 \boldsymbol v_{t-1,i}} + \zeta}}+\frac{\alpha_0 }{2\sqrt{{\beta_2 \boldsymbol v_{t-1,i}} + \zeta}}\left(\mathbb E\left[\frac{\sqrt{1-\beta_2}\boldsymbol g_{t,i}^2}{(\sqrt{{ \boldsymbol v_{t,i}} + \zeta}+\sqrt{{\beta_2 \boldsymbol v_{t-1,i}} + \zeta})}\Bigg|\mathcal F_t\right]\right)^2.
\end{flalign}
For the last term, using H\"older's inequality, we have that 
\begin{flalign}\label{proof:6eq3}
    &\frac{\alpha_0 }{2\sqrt{{\beta_2 \boldsymbol v_{t-1,i}} + \zeta}}\left(\mathbb E\left[\frac{\sqrt{1-\beta_2}\boldsymbol g_{t,i}^2}{(\sqrt{{ \boldsymbol v_{t,i}} + \zeta}+\sqrt{{\beta_2 \boldsymbol v_{t-1,i}} + \zeta})}\Bigg|\mathcal F_t\right]\right)^2\nonumber\\
    \le& \frac{(1-\beta_2)\alpha_0}{2\sqrt{{\beta_2 \boldsymbol v_{t-1,i}} + \zeta}}\mathbb E[\boldsymbol g_{t,i}^2|\mathcal F_t]\mathbb E\left[\frac{\boldsymbol g_{t,i}^2}{(\sqrt{{ \boldsymbol v_{t,i}} + \zeta}+\sqrt{{\beta_2 \boldsymbol v_{t-1,i}} + \zeta})^2}\Bigg|\mathcal F_t\right]\nonumber\\
        \le & \frac{\alpha_0}{2}(D_0+D_1 (\partial_i f(\boldsymbol x_t))^2)\times 
    \mathbb E\left[\frac{(1-\beta_2)\boldsymbol g_{t,i}^2}{\sqrt{{ \boldsymbol v_{t,i}} + \zeta}\sqrt{{\beta_2 \boldsymbol v_{t-1,i}} + \zeta}(\sqrt{{ \boldsymbol v_{t,i}} + \zeta}+\sqrt{{\beta_2 \boldsymbol v_{t-1,i}} + \zeta})}\Bigg|\mathcal F_t\right]\nonumber\\
    \le &\frac{\alpha_0}{2}(D_0+D_1 (\partial_i f(\boldsymbol x_t))^2)\mathbb E\left[\frac{1}{\sqrt{{\beta_2  \boldsymbol v_{t-1,i}} + \zeta}}-\frac{1}{\sqrt{{ \boldsymbol v_{t,i}} + \zeta}}\Bigg|\mathcal F_t\right].\nonumber\\
\end{flalign}
Similar to \eqref{proof:eq4}, we then show the sum of \eqref{proof:6eq3} from $t=1$ to $T$ can be bounded. For $t>1$ we have that
\begin{flalign}\label{proof:6eq4}
& \frac{\alpha_0}{2}(D_0+D_1 (\partial_i f(\boldsymbol x_t))^2)\mathbb E\left[\frac{1}{\sqrt{{\beta_2 \boldsymbol v_{t-1,i}} + \zeta}}-\frac{1}{\sqrt{{ \boldsymbol v_{t,i}} + \zeta}}\Bigg|\mathcal F_t\right]\nonumber\\
    = &\frac{\alpha_0 D_0}{2}\mathbb E\left[\frac{1}{\sqrt{{\beta_2 \boldsymbol v_{t-1,i}} + \zeta}}-\frac{1}{\sqrt{{ \boldsymbol v_{t,i}} + \zeta}}\Bigg|\mathcal F_t\right]+\frac{\alpha_0 D_1}{2}\mathbb E\left[\frac{(\partial_i f(\boldsymbol x_{t-1}))^2}{\sqrt{{\beta_2 \boldsymbol v_{t-1,i}} + \zeta}}-\frac{(\partial_i f(\boldsymbol x_{t}))^2}{\sqrt{{ \boldsymbol v_{t,i}} + \zeta}}\Bigg|\mathcal F_t\right]\nonumber\\
    &+ \frac{\alpha_0 D_1}{2}\mathbb E\left[\frac{(\partial_i f(\boldsymbol x_t))^2-(\partial_i f(\boldsymbol x_{t-1}))^2}{\sqrt{{\beta_2 \boldsymbol v_{t-1,i}} + \zeta}}\Bigg|\mathcal F_t\right].
\end{flalign}
Note for the last term on RHS of \eqref{proof:6eq4}, due to the different update of $\boldsymbol x_t$, there is a little difference from \eqref{proof:eq5}. For the last term and any $\alpha_1 >0$ and $i\in [d]$, we have that
\begin{flalign}
    &\frac{\alpha_0 D_1}{2}\mathbb E\left[\frac{(\partial_i f(\boldsymbol x_t))^2-(\partial_i f(\boldsymbol x_{t-1}))^2}{\sqrt{{\beta \boldsymbol v_{t-1,i}} + \zeta}}\Bigg|\mathcal F_t\right]\nonumber\\
    \le& \frac{\alpha_0 D_1}{2}\mathbb E\left[\frac{2|\partial_i f(\boldsymbol x_t)|\big|\partial_i f(\boldsymbol x_t)-\partial_i f(\boldsymbol x_{t-1})\big|}{\sqrt{{\beta \boldsymbol v_{t-1,i}} + \zeta}}\Bigg|\mathcal F_t\right] \nonumber\\
    \le &\alpha_0 D_1\frac{|\partial_i f(\boldsymbol x_t)|(L_0+L_1|\partial_i f(\boldsymbol x_t)|)\eta\left\|\frac{1}{\sqrt{\boldsymbol v_{t-1}+\zeta}}\odot \boldsymbol m_{t-1}\right\|}{\sqrt{{\beta \boldsymbol v_{t-1,i}} + \zeta}}\nonumber\\
    \le & \frac{\alpha_0 D_1}{2\sqrt{{\beta \boldsymbol v_{t-1,i}} + \zeta}}
    \Bigg(\frac{(\partial_i f(\boldsymbol x_t))^2}{\alpha_1 D_1}+\alpha_1 D_1L_0^2\eta^2\left\|\frac{1}{\boldsymbol v_{t-1}+\zeta}\odot \boldsymbol m_{t-1}\right\|^2\nonumber\\
    &+2\eta L_1(\partial_i f(\boldsymbol x_t))^2\left\|\frac{1}{\boldsymbol v_{t-1}+\zeta}\odot \boldsymbol m_{t-1}\right\|\Bigg)\nonumber\\
    \le  & \frac{\alpha_0 D_1}{2\sqrt{{\beta \boldsymbol v_{t-1,i}} + \zeta}}\left(\frac{(\partial_i f(\boldsymbol x_t))^2}{\alpha_1 D_1}+{\alpha_1 D_1L_0^2\eta^2}{}\frac{d(1-\beta_1)^2}{{(1-\beta_2)}{(1-\frac{\beta_1^2}{\beta_2}})}+2\eta L_1(\partial_i f(\boldsymbol x_t))^2\frac{\sqrt{d}(1-\beta_1)}{\sqrt{1-\beta_2}\sqrt{1-\frac{\beta_1^2}{\beta_2}}}\right),
\end{flalign}
where the first inequality is due to the fact that for any $a,b$, we have $a^2-b^2\le 2|a||a-b|$, the second inequality is by the $(L_0,L_1)$-smoothness assumption and update process of $x_t$, and the last inequality is due to Lemma \ref{lemma:4}. As a result, for $\alpha_0,\alpha_1>0$, if $t=1$, it can be shown that
  \begin{flalign}
   \text{error 1}
    \le & \sum_{i=1}^d\frac{\eta}{1-\frac{\beta_1}{\sqrt{\beta_2}}} 
  \frac{1-\beta_1}{\sqrt{1-\frac{\beta_1^2}{\beta_2}}} 
 \Bigg(\frac{ (\partial_i f(\boldsymbol x_t))^2}{2\alpha_0\sqrt{{\beta_2 \boldsymbol v_{t-1,i}} + \zeta}}+\frac{\alpha_0 D_0}{2}\mathbb E\left[\frac{1}{\sqrt{{\beta_2 \boldsymbol v_{t-1,i}} + \zeta}}-\frac{1}{\sqrt{{ \boldsymbol v_{t,i}} + \zeta}}\Bigg|\mathcal F_t\right]\nonumber\\
 &+\frac{\alpha_0 D_1}{2}\mathbb E\left[\frac{(\partial_i f(\boldsymbol x_{t}))^2}{\sqrt{{\beta_2 \boldsymbol v_{t-1,i}} + \zeta}}-\frac{(\partial_i f(\boldsymbol x_{t}))^2}{\sqrt{{ \boldsymbol v_{t,i}} + \zeta}}\Bigg|\mathcal F_t\right]\Bigg),
\end{flalign}
and if $t>1$ we have that
  \begin{flalign}\label{proof:6eq5}
   \text{error 1}
    \le &\sum_{i=1}^d \frac{\eta}{1-\frac{\beta_1}{\sqrt{\beta_2}}} 
  \frac{1-\beta_1}{\sqrt{1-\frac{\beta_1^2}{\beta_2}}} 
 \Bigg(\frac{ (\partial_i f(\boldsymbol x_t))^2}{2\alpha_0\sqrt{{\beta_2 \boldsymbol v_{t-1,i}} + \zeta}}+\frac{\alpha_0 D_0}{2}\mathbb E\left[\frac{1}{\sqrt{{\beta_2 \boldsymbol v_{t-1,i}} + \zeta}}-\frac{1}{\sqrt{{ \boldsymbol v_{t,i}} + \zeta}}\Bigg|\mathcal F_t\right]\nonumber\\
 &+\frac{\alpha_0 D_1}{2}\mathbb E\left[\frac{(\partial_i f(\boldsymbol x_{t-1}))^2}{\sqrt{{\beta_2 \boldsymbol v_{t-1,i}} + \zeta}}-\frac{(\partial_i f(\boldsymbol x_{t}))^2}{\sqrt{{ \boldsymbol v_{t,i}} + \zeta}}\Bigg|\mathcal F_t\right]\nonumber\\
 &+\frac{\alpha_0 D_1}{2\sqrt{{\beta \boldsymbol v_{t-1,i}} + \zeta}}\Bigg(\frac{(\partial_i f(\boldsymbol x_t))^2}{\alpha_1 D_1}+{\alpha_1 D_1L_0^2\eta^2}{}\frac{d(1-\beta_1)^2}{{(1-\beta_2)}{(1-\frac{\beta_1^2}{\beta_2}})}\nonumber\\
 &+2\eta L_1(\partial_i f(\boldsymbol x_t))^2\frac{\sqrt{d}(1-\beta_1)}{\sqrt{1-\beta_2}\sqrt{1-\frac{\beta_1^2}{\beta_2}}}\Bigg)\Bigg).
\end{flalign}
Since we define $\boldsymbol m_0=0$, and $\boldsymbol x_{0}=\boldsymbol x_1 $, therefore this inequality holds for $t=1$.
Note many terms in the RHS of \eqref{proof:6eq5} can be reduced by telescoping, which will be demonstrated later. Now we move to the error 2 term. For any $\alpha_3>0,0<\beta_1^2<\beta_2$ and $\beta_2>0.5$, we have that
\begin{flalign}\label{eq:6eq10}
    &\underbrace{{\left\langle \nabla f(\boldsymbol x_t),\frac{ \eta \odot \frac{\beta_1\boldsymbol m_{t-1}}{\sqrt{\beta_2{\boldsymbol v_{t-1}} + \beta_2\zeta}}- \eta \odot \frac{\beta_1\boldsymbol m_{t-1}}{\sqrt{\beta_2{\boldsymbol v_{t-1}} + \zeta}}}{1-\frac{\beta_1}{\sqrt{\beta_2}}} \right\rangle}}_{\text{error 2}}\nonumber\\
    = & \frac{\eta \beta_1}{1-\frac{\beta_1}{\sqrt{\beta_2}}}{\left\langle \nabla f(\boldsymbol x_t), \frac{(1-\beta_2)\zeta }{(\sqrt{\beta_2{\boldsymbol v_{t-1}} + \zeta})\odot(\sqrt{\beta_2{\boldsymbol v_{t-1}} + \beta_2\zeta})\odot(\sqrt{\beta_2{\boldsymbol v_{t-1}} + \zeta}+\sqrt{\beta_2{v_{t-1}} + \beta_2\zeta})}\odot\boldsymbol m_{t-1}\right\rangle}\nonumber\\
    \le &\sum_{i=1}^d\frac{\eta \beta_1}{1-\frac{\beta_1}{\sqrt{\beta_2}}} |\partial_i f(\boldsymbol x_t)|
     \frac{(1-\beta_2)\zeta |\boldsymbol m_{t-1,i}|}{(\sqrt{\beta_2{\boldsymbol v_{t-1,i}} + \zeta})(\sqrt{\beta_2{\boldsymbol v_{t-1,i}} + \beta_2\zeta})(\sqrt{\beta_2{\boldsymbol v_{t-1,i}} + \zeta}+\sqrt{\beta_2{\boldsymbol v_{t-1,i}} + \beta_2\zeta})}\nonumber\\
     \le &\sum_{i=1}^d\frac{\eta \beta_1}{1-\frac{\beta_1}{\sqrt{\beta_2}}} |\partial_i f(\boldsymbol x_t)|
     \frac{(1-\beta_2)\zeta }{(\sqrt{\beta_2{\boldsymbol v_{t-1,i}} + \zeta})\sqrt{\zeta}}\frac{1-\beta_1}{\sqrt{1-\beta_2}\sqrt{1-\frac{\beta_1^2}{\beta_2}}}\nonumber\\
     \le &\sum_{i=1}^d\frac{\eta \beta_1 (1-\beta_1)\sqrt{\zeta}}{(1-\frac{\beta_1}{\sqrt{\beta_2}})\sqrt{1-\frac{\beta_1^2}{\beta_2}}}\left(\frac{(\partial_i f(\boldsymbol x_t))^2}{2\alpha_3\sqrt{\beta_2\boldsymbol v_{t-1,i}+\zeta}}+\frac{\alpha_3(1-\beta_2)}{2\sqrt{\beta_2\boldsymbol v_{t-1,i}+\zeta}}\right),
\end{flalign}
where the second inequality is due to Lemma \ref{lemma:4} and {$2\beta_2> 1$}.

For error 3, it can be bounded as follows
\begin{flalign}\label{eq:6eq11}
    &{\langle \nabla f(\boldsymbol u_t)-\nabla f(\boldsymbol x_t), \boldsymbol u_{t}-\boldsymbol u_{t+1} \rangle}\nonumber\\
    \le &\sum_{i=1}^d |\partial_i f(\boldsymbol u_t)-\partial_i f(\boldsymbol x_t)|| \boldsymbol u_{t,i}-\boldsymbol u_{t+1,i}|\nonumber\\
    \le &\sum_{i=1}^d(L_0+L_1|\partial_i f(\boldsymbol x_t)|)\|\boldsymbol u_t-\boldsymbol x_t\|| \boldsymbol u_{t,i}-\boldsymbol u_{t+1,i}|\nonumber\\
    \le &\sum_{i=1}^d(L_0+L_1|\partial_i f(\boldsymbol x_t)|)\frac{\frac{\beta_1}{\sqrt{\beta_2}}}{1-\frac{\beta_1}{\sqrt{\beta_2}}}\|\boldsymbol x_t-\boldsymbol x_{t-1}\|\left| \frac{\boldsymbol x_{t+1,i}-\boldsymbol x_{t,i}}{1-\frac{\beta_1}{\sqrt{\beta_2}}}-\frac{\beta_1}{\sqrt{\beta_2}}\frac{\boldsymbol x_{t,i}-\boldsymbol x_{t-1,i}}{1-\frac{\beta_1}{\sqrt{\beta_2}}}\right|,
\end{flalign}
where the second inequality is due to Assumption \ref{assump:generalsmooth}.
According to the update process of $\boldsymbol x_t$ and Lemma \ref{lemma:5}, it is easy to get that
\begin{flalign}\label{eq:6eq12}
    &\sum_{i=1}^dL_0\frac{\frac{\beta_1}{\sqrt{\beta_2}}}{1-\frac{\beta_1}{\sqrt{\beta_2}}}\|\boldsymbol x_t-\boldsymbol x_{t-1}\|\left| \frac{\boldsymbol x_{t+1,i}-\boldsymbol x_{t,i}}{1-\frac{\beta_1}{\sqrt{\beta_2}}}-\frac{\beta_1}{\sqrt{\beta_2}}\frac{\boldsymbol x_{t,i}-\boldsymbol x_{t-1,i}}{1-\frac{\beta_1}{\sqrt{\beta_2}}}\right|\nonumber\\
    &\le \sum_{i=1}^dL_0\frac{\frac{\beta_1}{\sqrt{\beta_2}}}{1-\frac{\beta_1}{\sqrt{\beta_2}}}\left(\frac{1}{1-\frac{\beta_1}{\sqrt{\beta_2}}}\left({\|\boldsymbol x_{t}-\boldsymbol x_{t-1}\|}|\boldsymbol x_{t+1,i}-\boldsymbol x_{t,i}| \right)+\frac{\frac{\beta_1}{\sqrt{\beta_2}}}{1-\frac{\beta_1}{\sqrt{\beta_2}}}\left({\|\boldsymbol x_{t}-\boldsymbol x_{t-1}\|}|\boldsymbol x_{t,i}-\boldsymbol x_{t-1,i}| \right)\right)\nonumber\\
        &\le \sum_{i=1}^dL_0\frac{\frac{\beta_1}{\sqrt{\beta_2}}}{1-\frac{\beta_1}{\sqrt{\beta_2}}}\Bigg(\frac{1}{1-\frac{\beta_1}{\sqrt{\beta_2}}}\left({\frac{\|\boldsymbol x_{t}-\boldsymbol x_{t-1}\|^2}{2\sqrt{d}}
        }+\frac{\sqrt{d}|\boldsymbol x_{t+1,i}-\boldsymbol x_{t,i}|^2}{2} \right)\nonumber\\
        &+\frac{\frac{\beta_1}{\sqrt{\beta_2}}}{1-\frac{\beta_1}{\sqrt{\beta_2}}}\left({\frac{\|\boldsymbol x_{t}-\boldsymbol x_{t-1}\|^2}{2\sqrt{d}}
        }+\frac{\sqrt{d}|\boldsymbol x_{t,i}-\boldsymbol x_{t-1,i}|^2}{2} \right)\Bigg)\nonumber\\
        &\le L_0\eta^2\sum_{i=1}^d\left(\frac{\frac{\beta_1}{2\sqrt{\beta_2}}\sqrt{d}+\frac{\beta_1^2}{\beta_2}\sqrt{d}}{\left(1-\frac{\beta_1}{\sqrt{\beta_2}}\right)^2}\frac{|\boldsymbol m_{t-1,i}|^2}{\boldsymbol v_{t-1,i}+\zeta}+\frac{\frac{\beta_1}{2\sqrt{\beta_2}}\sqrt{d}}{\left(1-\frac{\beta_1}{\sqrt{\beta_2}}\right)^2}\frac{\boldsymbol m_{t,i}^2}{\boldsymbol v_{t,i}+\zeta}\right).
\end{flalign}
However, it is hard to bound the remaining $|\partial_i f(\boldsymbol x_t)|\|\boldsymbol x_t-\boldsymbol x_{t-1}\||\boldsymbol u_{t,i}-\boldsymbol u_{t+1,i}|$ term by directly applying Lemma \ref{lemma:4}, which will induce a $\mathcal O(|\partial_i f(\boldsymbol x_t)| \frac{\eta^2}{1-\beta_2})$ term and with our affine noise variance, we can only bound the $\frac{(\partial_i f(\boldsymbol x_t))^2}{\sqrt{\beta_2\boldsymbol v_{t-1,i}+\zeta}}$ term. It is worth noting that this challenging additional term is due to the $(L_0,L_1)$-smoothness, and methods in \citet{wang2023closing,li2023convergence} do not generalize to our case. To solve this challenge, we first bound the additional terms as follows:
\begin{flalign}\label{eq:6eq13}
    &\sum_{i=1}^d|\partial_i f(\boldsymbol x_t)|\|\boldsymbol x_t-\boldsymbol x_{t-1}\||\boldsymbol x_{t,i}-\boldsymbol x_{t-1,i}|\nonumber\\
    \le & \sum_{i=1}^ d|\partial_i f(\boldsymbol x_t)|\|\boldsymbol  x_t-\boldsymbol x_{t-1}\|\frac{\eta |\boldsymbol m_{t-1,i}|}{\sqrt{{\boldsymbol v_{t-1,i}+\zeta}}}\nonumber\\
    \le &\sum_{i=1}^d  \frac{\eta (\partial_i f(\boldsymbol x_t))^2 }{2\alpha_4 \sqrt{\boldsymbol v_{t-1,i}+\zeta}}+
    \frac{\alpha_4 \eta \boldsymbol m_{t-1,i}^2}{2\sqrt{\boldsymbol v_{t-1,i}+\zeta}}|\boldsymbol x_t-\boldsymbol x_{t-1}|^2\nonumber\\
    \le &\sum_{i=1}^d \frac{\eta (\partial_i f(\boldsymbol x_t))^2 }{2\alpha_4 \sqrt{\beta_2 \boldsymbol v_{t-1,i}+\zeta}}+\sum_{i=1}^d\frac{\alpha_4 \eta^3 \boldsymbol m_{t-1,i}^2}{2\sqrt{\boldsymbol v_{t-1,i}+\zeta}}\frac{d(1-\beta_1)^2}{{(1-\beta_2)}{(1-\frac{\beta_1^2}{\beta_2})}},
\end{flalign}
for any $\alpha_4>0$, where the last inequality is due to Lemma \ref{lemma:4}. The motivation is to bound using $\frac{(\partial_i f(\boldsymbol x_t))^2}{\sqrt{\beta_2\boldsymbol v_{t-1,i}+\zeta}}$ and $\frac{\eta^3 \boldsymbol m_{t-1,i}^2}{2\sqrt{\boldsymbol v_{t-1,i}+\zeta}}$, where the latter can be bounded by Lemma \ref{lemma:6}.
Similarly, we have that
\begin{flalign} \label{eq:6eq14}
    &\sum_{i=1}^d|\partial_i f(\boldsymbol x_t)|\|\boldsymbol x_t-\boldsymbol x_{t-1}\||\boldsymbol x_{t+1,i}-\boldsymbol x_{t,i}|\nonumber\\
    \le &\sum_{i=1}^d \left[ \frac{\eta (\partial_i f(\boldsymbol x_t))^2 }{2\alpha_4 \sqrt{\boldsymbol v_{t,i}+\zeta}}+
    \frac{\alpha_4 \eta \boldsymbol m_{t,i}^2}{2\sqrt{\boldsymbol v_{t,i}+\zeta}}|\boldsymbol x_t-\boldsymbol x_{t-1}|^2\right]\nonumber\\
    \le &\sum_{i=1}^d \frac{\eta (\partial_i f(\boldsymbol x_t))^2 }{2\alpha_4 \sqrt{\beta_2 \boldsymbol v_{t-1,i}+\zeta}}+\sum_{i=1}^d\frac{\alpha_4 \eta^3 \boldsymbol m_{t,i}^2}{2\sqrt{\boldsymbol v_{t,i}+\zeta}}\frac{d(1-\beta_1)^2}{{(1-\beta_2)}{(1-\frac{\beta_1^2}{\beta_2})}}.
\end{flalign}
Combine \eqref{eq:6eq8}, \eqref{eq:6eq9}, \eqref{proof:6eq5}, \eqref{eq:6eq10}, \eqref{eq:6eq11}, \eqref{eq:6eq12}, \eqref{eq:6eq13} and \eqref{eq:6eq14}, and we then have that
\begin{flalign}\label{eq:6eq15}
     &{\mathbb E[\langle \nabla f(\boldsymbol u_t), \boldsymbol u_{t+1}-\boldsymbol u_{t} \rangle|\mathcal F_t]}\nonumber\\
     \le & -\sum_{i=1}^d\frac{\eta(1-\beta_1)}{1-\frac{\beta_1}{\sqrt{\beta_2}}}\frac{(\partial_i f(\boldsymbol x_t))^2}{\sqrt{{\beta_2\boldsymbol v_{t-1,i}} + \zeta}}\nonumber\\
     &+\sum_{i=1}^d \frac{\eta}{1-\frac{\beta_1}{\sqrt{\beta_2}}} 
  \frac{1-\beta_1}{\sqrt{1-\frac{\beta_1^2}{\beta_2}}} 
 \Bigg(\frac{ (\partial_i f(\boldsymbol x_t))^2}{2\alpha_0\sqrt{{\beta_2 \boldsymbol v_{t-1,i}} + \zeta}}+\frac{\alpha_0 D_0}{2}\mathbb E\left[\frac{1}{\sqrt{{\beta_2 \boldsymbol v_{t-1,i}} + \zeta}}-\frac{1}{\sqrt{{ \boldsymbol v_{t,i}} + \zeta}}\Bigg|\mathcal F_t\right]\nonumber\\
 &+\frac{\alpha_0 D_1}{2}\mathbb E\left[\frac{(\partial_i f(\boldsymbol x_{t-1}))^2}{\sqrt{{\beta_2 \boldsymbol v_{t-1,i}} + \zeta}}-\frac{(\partial_i f(\boldsymbol x_{t}))^2}{\sqrt{{ \boldsymbol v_{t,i}} + \zeta}}\Bigg|\mathcal F_t\right]\nonumber\\
 &+\frac{\alpha_0 D_1}{2\sqrt{{\beta \boldsymbol v_{t-1,i}} + \zeta}}\Bigg(\frac{(\partial_i f(\boldsymbol x_t))^2}{\alpha_1 D_1}+{\alpha_1 D_1L_0^2\eta^2}{}\frac{d(1-\beta_1)^2}{{(1-\beta_2)}{(1-\frac{\beta_1^2}{\beta_2}})}\nonumber\\
 &+2\eta L_1(\partial_i f(\boldsymbol x_t))^2\frac{\sqrt{d}(1-\beta_1)}{\sqrt{1-\beta_2}\sqrt{1-\frac{\beta_1^2}{\beta_2}}}\Bigg)\Bigg)\nonumber\\
     &+\sum_{i=1}^d\frac{\eta \beta_1 (1-\beta_1)\sqrt{\zeta}}{(1-\frac{\beta_1}{\sqrt{\beta_2}})\sqrt{1-\frac{\beta_1^2}{\beta_2}}}\left(\frac{(\partial_i f(\boldsymbol x_t))^2}{2\alpha_3\sqrt{\beta_2\boldsymbol v_{t-1,i}+\zeta}}+\frac{\alpha_3(1-\beta_2)}{2\sqrt{\beta_2\boldsymbol v_{t-1,i}+\zeta}}\right)\nonumber\\
     &+L_0\eta^2\sum_{i=1}^d\frac{\frac{\beta_1}{2\sqrt{\beta_2}}\sqrt{d}+\frac{\beta_1^2}{\beta_2}\sqrt{d}}{\left(1-\frac{\beta_1}{\sqrt{\beta_2}}\right)^2}\frac{\boldsymbol m_{t-1,i}^2}{\boldsymbol v_{t-1,i}+\zeta}+\frac{\frac{\beta_1}{2\sqrt{\beta_2}}\sqrt{d}}{\left(1-\frac{\beta_1}{\sqrt{\beta_2}}\right)^2}\frac{\boldsymbol m_{t,i}^2}{\boldsymbol v_{t,i}+\zeta}\nonumber\\
     &+L_1\frac{\frac{\beta_1^2}{{\beta_2}}}{\left(1-\frac{\beta_1}{\sqrt{\beta_2}}\right)^2}\left(\sum_{i=1}^d \frac{\eta (\partial_i f(\boldsymbol x_t))^2 }{2\alpha_4 \sqrt{\beta_2 \boldsymbol v_{t-1,i}+\zeta}}+\sum_{i=1}^d\frac{\alpha_4 \eta^3 \boldsymbol m_{t-1,i}^2}{2\sqrt{\boldsymbol v_{t-1,i}+\zeta}}\frac{d(1-\beta_1)^2}{{(1-\beta_2)}{(1-\frac{\beta_1^2}{\beta_2})}}\right)\nonumber\\
     &+{L_1}\frac{\frac{\beta_1}{\sqrt{\beta_2}}}{\left(1-\frac{\beta_1}{\sqrt{\beta_2}}\right)^2}\mathbb E\left[\sum_{i=1}^d \frac{\eta (\partial_i f(\boldsymbol x_t))^2 }{2\alpha_4 \sqrt{\beta_2 \boldsymbol v_{t-1,i}+\zeta}}+\sum_{i=1}^d\frac{\alpha_4 \eta^3 \boldsymbol m_{t,i}^2}{2\sqrt{\boldsymbol v_{t,i}+\zeta}}\frac{d(1-\beta_1)^2}{{(1-\beta_2)}{(1-\frac{\beta_1^2}{\beta_2})}}\right].
\end{flalign}
Since $C_1=1-\frac{\beta_1}{\sqrt{\beta_2}},$ and $C_2=\sqrt{1-\frac{\beta_1^2}{\beta_2}}$, \eqref{eq:6eq15} can be further bounded as follows
\begin{flalign}
 &{\mathbb E[\langle \nabla f(\boldsymbol u_t), \boldsymbol u_{t+1}-\boldsymbol u_{t} \rangle|\mathcal F_t]}\nonumber\\  
 \le& \Bigg[-\frac{\eta (1-\beta_1)}{C_1}+\frac{\eta(1-\beta_1)}{C_1C_2}\left(\frac{1}{2\alpha_0}+\frac{\alpha_0}{2\alpha_1}+\frac{\eta \alpha_0\sqrt{d}D_1L_1(1-\beta_1)}{\sqrt{1-\beta_2}C_2}\right)+\frac{\eta \beta_1(1-\beta_1)\sqrt{\zeta}}{2\alpha_3C_1C_2}\nonumber\\
 &+\frac{\eta L_1(1-C_1)^2}{2\alpha_4C_1^2}+\frac{\eta L_1(1-C_1)}{2\alpha_4C_1^2}\Bigg] \times \sum_{i=1}^d\frac{(\partial_i f(\boldsymbol x_t))^2}{\sqrt{{\beta_2\boldsymbol v_{t-1,i}} + \zeta}}\nonumber\\
 &+\sum_{i=1}^d\frac{\eta (1-\beta_1)}{C_1C_2}\Bigg[\frac{\alpha_0D_0}{2}\Bigg(\frac{1}{\sqrt{\beta_2 \boldsymbol v_{t-1,i}+\zeta}}-\mathbb E\Bigg[{\frac{1}{\sqrt{\boldsymbol v_{t,i}+\zeta}}}\Bigg|\mathcal F_t\Bigg]\Bigg)\nonumber\\
 &+\frac{\alpha_0 D_1}{2}\mathbb E\left[\frac{(\partial_i \boldsymbol f(x_{t-1}))^2}{\sqrt{{\beta_2 \boldsymbol v_{t-1,i}} + \zeta}}-\frac{(\partial_i f(\boldsymbol x_{t}))^2}{\sqrt{{ \boldsymbol v_{t,i}} + \zeta}}\Bigg|\mathcal F_t\right]\Bigg]+\frac{\alpha_0\alpha_1D_1^2L_0^2\eta^3d^2(1-\beta_1)^3}{2(1-\beta_2)C_1C_2^3\sqrt{\zeta}}+\frac{\alpha_3\eta d\beta_1(1-\beta_1)(1-\beta_2)}{2C_1C_2}\nonumber\\
 &+\sum_{i=1}^d\frac{\eta^2((1-C_1)^2+0.5(1-C_1))\sqrt{d}L_0}{C_1^2}  \frac{\boldsymbol m_{t-1,i}^2}{\boldsymbol v_{t-1,i}+\zeta}+\sum_{i=1}^d\frac{\eta^20.5(1-C_1)\sqrt{d}L_0}{C_1^2}\mathbb E\Bigg[ \frac{\boldsymbol m_{t,i}^2}{\boldsymbol v_{t,i}+\zeta}\Bigg|\mathcal F_t\Bigg]\nonumber\\
 &+\sum_{i=1}^d\frac{\alpha_4\eta^3(1-\beta_1)^2(1-C_1)^2dL_1}{2(1-\beta_2)C_1^2C_2^2}\frac{\boldsymbol m_{t-1,i}^2}{\sqrt{\boldsymbol v_{t-1,i}+\zeta}}+\sum_{i=1}^d\frac{\alpha_4\eta^3(1-\beta_1)^2(1-C_1)dL_1}{2(1-\beta_2)C_1^2C_2^2}\mathbb E\left[\frac{\boldsymbol m_{t,i}^2}{\sqrt{\boldsymbol v_{t,i}+\zeta}}\Bigg|\mathcal F_t\right].
\end{flalign}
This completes the proof.
\end{proof}

\section{Proof of Lemma \ref{co:2}}\label{proof:co2}
For $(L_0,L_1)$-smooth objective functions, we have the following descent inequality (Lemma 1 in \citet{crawshaw2022robustness}):
\begin{flalign}\label{eq:7eq1}
    \underbrace{\mathbb E[\langle \nabla f(\boldsymbol u_t), \boldsymbol u_{t}-\boldsymbol u_{t+1} \rangle|\mathcal F_t]}_{\text{first-order}}&\le f(\boldsymbol u_t)-\mathbb E[f(\boldsymbol u_{t+1})|\mathcal F_t]
  +\underbrace{\sum_{i=1}^d\frac{L_0}{2\sqrt{d}}\mathbb E[\|\boldsymbol u_{t+1}-\boldsymbol u_t\||\boldsymbol u_{t+1,i}-\boldsymbol u_{t,i}||\mathcal F_t]}_{\text{second-order}}\nonumber\\
    &+\underbrace{\sum_{i=1}^d\frac{L_1\|\partial_i f(\boldsymbol u_t)\|}{2}\mathbb E[\|\boldsymbol u_{t+1}-\boldsymbol u_t\||\boldsymbol u_{t+1,i}-\boldsymbol u_{t,i}||\mathcal F_t]}_{\text{additional term}}.
\end{flalign}

The first-order term is bounded by Lemma \ref{formal:lemma3}, we then only need to bound the remaining two terms. For the second-order term, based on the definition of $\boldsymbol u_t$ and update process of $\boldsymbol x_t$, we have that
\begin{flalign}\label{eq:7eq2}
   &\sum_{i=1}^d\frac{L_0}{2\sqrt{d}}\mathbb E[\|\boldsymbol u_{t+1}-\boldsymbol u_t\||\boldsymbol u_{t+1,i}-\boldsymbol u_{t,i}||\mathcal F_t]\nonumber\\
   \le&\sum_{i=1}^d\frac{L_0}{2\sqrt{d}}\mathbb E\left[ \frac{1}{2\sqrt{d}}\left\|\frac{\boldsymbol x_{t+1}-\boldsymbol x_t}{1-\frac{\beta_1}{\sqrt{\beta_2}}}-\frac{\beta_1}{\sqrt{\beta_2}}\frac{\boldsymbol x_{t}-\boldsymbol x_{t-1}}{1-\frac{\beta_1}{\sqrt{\beta_2}}}\right\|^2+\frac{\sqrt{d}}{2}\left|\frac{\boldsymbol x_{t+1,i}-\boldsymbol x_{t,i}}{1-\frac{\beta_1}{\sqrt{\beta_2}}}-\frac{\beta_1}{\sqrt{\beta_2}}\frac{\boldsymbol x_{t,i}-\boldsymbol x_{t-1,i}}{1-\frac{\beta_1}{\sqrt{\beta_2}}}\right|^2|\mathcal F_t\right]\nonumber\\
    \le &\sum_{i=1}^d\frac{L_0}{2\sqrt{d}}\mathbb E\left[ \frac{2\sqrt{d}}{(1-\frac{\beta_1}{\sqrt{\beta_2}})^2}|\boldsymbol x_{t+1,i}-\boldsymbol x_{t,i}|^2+\frac{2\sqrt{d}\frac{\beta_1^2}{{\beta_2}}}{(1-\frac{\beta_1}{\sqrt{\beta_2}})^2}|\boldsymbol x_{t,i}-\boldsymbol x_{t-1,i}|^2|\mathcal F_t\right]\nonumber\\
    =& \sum_{i=1}^d\frac{L_0}{2\sqrt{d}}\left(\frac{2\sqrt{d}}{C_1^2}\mathbb E\left[\frac{\eta^2\boldsymbol m_{t,i}^2}{\boldsymbol v_{t,i}+\zeta}|\mathcal F_t\right]+\frac{2\sqrt{d}(1-C_1)^2}{C_1^2}\frac{\eta^2 \boldsymbol m_{t-1,i}^2}{\boldsymbol v_{t-1,i}+\zeta}\right).
\end{flalign}

Now we focus on the additional term. According to the definition of $\boldsymbol u_t$ and update process of $\boldsymbol x_t$, for $\alpha_4>0$ we have that
\begin{flalign} \label{eq:7eq3}
    &\sum_{i=1}^d\frac{L_1\|\partial_i f(\boldsymbol u_t)\|}{2}\mathbb E[\|\boldsymbol u_{t+1}-\boldsymbol u_t\||\boldsymbol u_{t+1,i}-\boldsymbol u_{t,i}||\mathcal F_t]\nonumber\\
    \le &\sum_{i=1}^d\frac{L_1}{2}(|\partial_i f(\boldsymbol x_t)|+(L_0+L_1|\partial_i f(\boldsymbol x_t)|)\|\boldsymbol u_t-\boldsymbol x_t\|)\mathbb E[\|\boldsymbol u_{t+1}-\boldsymbol u_t\||\boldsymbol u_{t+1,i}-\boldsymbol u_{t,i}||\mathcal F_t]\nonumber\\
    \le &\sum_{i=1}^d\frac{L_1}{2}\left(|\partial_i f(\boldsymbol x_t)|+L_0\frac{\frac{\beta_1}{\sqrt{\beta_2}}}{1-\frac{\beta_1}{\sqrt{\beta_2}}}\|\boldsymbol x_t-\boldsymbol x_{t-1}\|+L_1|\partial_i f(\boldsymbol x_t)|\frac{\frac{\beta_1}{\sqrt{\beta_2}}}{1-\frac{\beta_1}{\sqrt{\beta_2}}}\|\boldsymbol x_t-\boldsymbol x_{t-1}\|\right)\nonumber\\
    &\times \mathbb E[\|\boldsymbol u_{t+1}-\boldsymbol u_t\||\boldsymbol u_{t+1,i}-\boldsymbol u_{t,i}||\mathcal F_t]\nonumber\\
    \le &\sum_{i=1}^d\frac{L_1}{2}\left(1+L_1\frac{1-C_1}{C_1}\eta\sqrt{d} \frac{1-\beta_1}{\sqrt{1-\beta_2}C_2}\right)\nonumber\\
    &\times \left(\sqrt{d}\frac{2-C_1}{C_1}\frac{\eta (1-\beta_1)}{\sqrt{1-\beta_2}C_2}|\partial_i f(\boldsymbol x_t)|\left(\frac{1}{C_1}\mathbb E\left[\frac{\eta |\boldsymbol m_{t,i}|}{\sqrt{\boldsymbol v_{t,i}+\zeta}}\Bigg|\mathcal F_t\right]+\frac{1-C_1}{C_1}\frac{\eta |\boldsymbol m_{t-1,i}|}{\sqrt{\boldsymbol v_{t-1,i}+\zeta}}\right)\right)\nonumber\\
        &+\sum_{i=1}^d\frac{L_1L_0}{2}\frac{1-C_1}{C_1}\eta\sqrt{d} \frac{1-\beta_1}{\sqrt{1-\beta_2}C_2}\times \left(\frac{2\sqrt{d}}{C_1^2}\mathbb E\left[\frac{\eta^2\boldsymbol m_{t,i}^2}{\boldsymbol v_{t,i}+\zeta}\Bigg|\mathcal F_t\right]+\frac{2\sqrt{d}(1-C_1)^2}{C_1^2}\frac{\eta^2 \boldsymbol m_{t-1,i}^2}{\boldsymbol v_{t-1,i}+\zeta}\right)\nonumber\\
      \le&\sum_{i=1}^d\frac{L_1}{2}\left(1+L_1\frac{1-C_1}{C_1}\eta\sqrt{d} \frac{1-\beta_1}{\sqrt{1-\beta_2}C_2}\right)\nonumber\\
      &\times \Bigg(\frac{2\sqrt{d}}{C_1^2}\left(\frac{\eta}{2\alpha_4}\frac{(\partial_i f(\boldsymbol x_t))^2}{\sqrt{\beta_2\boldsymbol v_{t-1,i}+\zeta}}+\frac{\alpha_4\eta^3(1-\beta_1)^2}{2(1-\beta_2)C_2^2}\mathbb E\left[\frac{\boldsymbol m_{t,i}^2}{\sqrt{\boldsymbol v_{t,i}+\zeta}}\Bigg|\mathcal F_t\right]\right)\nonumber\\
  &+\frac{2\sqrt{d}(2-C_1)^2}{C_1^2}\left(\frac{\eta }{2\alpha_4}\frac{(\partial_i f(\boldsymbol x_t))^2}{\sqrt{\beta_2\boldsymbol v_{t-1,i}+\zeta}}+\frac{\alpha_4\eta^3(1-\beta_1)^2}{2(1-\beta_2)C_2^2}\frac{\boldsymbol m_{t-1,i}^2}{\sqrt{\boldsymbol v_{t-1,i}+\zeta}}\right)\Bigg)\nonumber\\
  &+\sum_{i=1}^d\frac{L_1L_0}{2}\frac{1-C_1}{C_1}\eta\sqrt{d} \frac{1-\beta_1}{\sqrt{1-\beta_2}C_2}\left(\frac{2\sqrt{d}}{C_1^2}\mathbb E\left[\frac{\eta^2\boldsymbol m_{t,i}^2}{\boldsymbol v_{t,i}+\zeta}\Bigg|\mathcal F_t\right]+\frac{2\sqrt{d}(1-C_1)^2}{C_1^2}\frac{\eta^2 \boldsymbol m_{t-1,i}^2}{\boldsymbol v_{t-1,i}+\zeta}\right),
\end{flalign}
where the first inequality is due to the $(L_0,L_1)$-smoothness assumption, the third inequality is due to Lemma \ref{lemma:4} and \eqref{eq:7eq2}, and the last inequality is due to \eqref{eq:6eq13} and \eqref{eq:6eq14}. Combine Lemma \ref{formal:lemma3}, \eqref{eq:7eq1}, \eqref{eq:7eq2} and \eqref{eq:7eq3} together, and we have that
\begin{flalign}\label{eq:7eq4}
 &\Bigg(\frac{\eta (1-\beta_1)}{C_1}-\frac{\eta(1-\beta_1)}{C_1C_2}\left(\frac{1}{2\alpha_0}+\frac{\alpha_0}{2\alpha_1}+\frac{\eta \alpha_0\sqrt{d}D_1L_1(1-\beta_1)}{\sqrt{1-\beta_2}C_2}\right)-\frac{\eta \beta_1(1-\beta_1)\sqrt{\zeta}}{2\alpha_3C_1C_2}-\frac{\eta L_1(1-C_1)^2}{2\alpha_4C_1^2}\nonumber\\
 -&\frac{\eta L_1(1-C_1)}{2\alpha_4C_1^2}-\frac{\sqrt{d}L_1}{2}\left(1+L_1\frac{1-C_1}{C_1}\eta\sqrt{d} \frac{1-\beta_1}{\sqrt{1-\beta_2}C_2}\right)\left(\frac{\eta}{\alpha_4C_1^2}+\frac{\eta(2-C_1)^2}{\alpha_4C_1^2}\right)
 \Bigg)\times \sum_{i=1}^d\frac{(\partial_i f(\boldsymbol x_t))^2}{\sqrt{{\beta_2\boldsymbol v_{t-1,i}} + \zeta}}\nonumber\\  
 \le&f(\boldsymbol u_t)-\mathbb E[f(\boldsymbol u_{t+1})|\mathcal F_t]+\sum_{i=1}^d\frac{\eta (1-\beta_1)}{C_1C_2} \nonumber\\
 &\times \Bigg(\frac{\alpha_0D_0}{2}\Bigg(\frac{1}{\sqrt{\beta_2 \boldsymbol v_{t-1,i}+\zeta}}-\mathbb E\Bigg[{\frac{1}{\sqrt{\boldsymbol v_{t,i}+\zeta}}}\Bigg|\mathcal F_t\Bigg]\Bigg)+\frac{\alpha_0 D_1}{2}\mathbb E\left[\frac{(\partial_i f(\boldsymbol x_{t-1}))^2}{\sqrt{{\beta_2 \boldsymbol v_{t-1,i}} + \zeta}}-\frac{(\partial_i f(\boldsymbol x_{t}))^2}{\sqrt{{ \boldsymbol v_{t,i}} + \zeta}}\Bigg|\mathcal F_t\right]\Bigg)\nonumber\\
 &+\frac{\alpha_0\alpha_1D_1^2L_0^2\eta^3d^2(1-\beta_1)^3}{2(1-\beta_2)C_1C_2^3\sqrt{\zeta})}+\frac{\alpha_3\eta d\beta_1(1-\beta_1)(1-\beta_2)}{2C_1C_2}\nonumber\\
 &+\sum_{i=1}^d\left(\frac{\eta^2(2(1-C_1)^2+0.5(1-C_1))\sqrt{d}L_0}{C_1^2}+  \frac{dL_1L_0}{2}\frac{1-C_1}{C_1}\eta \frac{1-\beta_1}{\sqrt{1-\beta_2}C_2}\frac{2(1-C_1)^2}{C_1^2}\eta^2\right)  \frac{\boldsymbol m_{t-1,i}^2}{\boldsymbol v_{t-1,i}+\zeta}\nonumber\\
  &+\sum_{i=1}^d\left(\frac{\eta^2(0.5(1-C_1)+1)\sqrt{d}L_0}{C_1^2}+  \frac{dL_1L_0}{2}\frac{1-C_1}{C_1}\eta \frac{1-\beta_1}{\sqrt{1-\beta_2}C_2}\frac{2}{C_1^2}\eta^2\right)  \mathbb E\left[\frac{\boldsymbol m_{t,i}^2}{\boldsymbol v_{t,i}+\zeta}\Bigg|\mathcal F_t\right]\nonumber\\
 & +\sum_{i=1}^d\Bigg(\frac{\alpha_4\eta^3(1-\beta_1)^2(1-C_1)^2dL_1}{2(1-\beta_2)C_1^2C_2^2}\nonumber\\
 &+\frac{L_1}{2}\left(1+L_1\frac{1-C_1}{C_1}\eta\sqrt{d} \frac{1-\beta_1}{\sqrt{1-\beta_2}C_2}\right)\frac{2\sqrt{d}(2-C_1)^2}{C_1^2}\frac{\alpha_4\eta^3(1-\beta_1)^2}{2(1-\beta_2)C_2^2}\Bigg)\times \frac{\boldsymbol m_{t-1,i}^2}{\sqrt{\boldsymbol v_{t-1,i}+\zeta}}\nonumber\\
 & +\sum_{i=1}^d\left(\frac{\alpha_4\eta^3(1-\beta_1)^2(1-C_1)dL_1}{2(1-\beta_2)C_1^2C_2^2}+\frac{L_1}{2}\left(1+L_1\frac{1-C_1}{C_1}\eta\sqrt{d} \frac{1-\beta_1}{\sqrt{1-\beta_2}C_2}\right)\frac{2\sqrt{d}}{C_1^2}\frac{\alpha_4\eta^3(1-\beta_1)^2}{2(1-\beta_2)C_2^2}\right)\nonumber\\
 &\times \mathbb E\left[\frac{\boldsymbol m_{t,i}^2}{\sqrt{\boldsymbol v_{t,i}+\zeta}}\Bigg|\mathcal F_t\right].
\end{flalign}
It is worth noting that (\ref{proof:coeq1}) and (\ref{proof:coeq2}) still hold for Adam since the update of $\boldsymbol v_t$ does not change. Specifically, for any $i\in [d]$ we have that
\begin{flalign}\label{eq:7eq5}
    &\sum_{t=1}^T \mathbb E\left[\frac{1}{\sqrt{{\beta_2 \boldsymbol v_{t-1,i}} + \zeta}}-\frac{1}{\sqrt{{ \boldsymbol v_{t,i}} + \zeta}}\right]
    \le  \frac{1}{\sqrt{ \zeta}}+T \frac{1-\sqrt{\beta_2}}{\sqrt{\zeta}}.
\end{flalign}
Furthermore,  since $\boldsymbol x_0=\boldsymbol x_1$, we have that
\begin{flalign}\label{eq:7eq6}
    &\mathbb E\left[\frac{(\partial_i f(\boldsymbol x_{0}))^2}{\sqrt{{\beta_2 \boldsymbol v_{0,i}} + \zeta}}-\frac{(\partial_i f(\boldsymbol x_{1}))^2}{\sqrt{{ \boldsymbol v_{1,i}} + \zeta}}\right]+\sum_{t=2}^T \mathbb E\left[\frac{(\partial_i f(\boldsymbol x_{t-1}))^2}{\sqrt{{\beta_2 \boldsymbol v_{t-1,i}} + \zeta}}-\frac{(\partial_i f(\boldsymbol x_{t}))^2}{\sqrt{{ \boldsymbol v_{t,i}} + \zeta}}\right]\nonumber\\
    \le & \frac{(\partial_i f(\boldsymbol x_{1}))^2}{\sqrt{ \zeta}}+\sum_{t=1}^{T-1}(1-{\beta_2}) \mathbb E\left[\frac{(\partial_i f(\boldsymbol x_{t}))^2}{\sqrt{\beta_2{ \boldsymbol v_{t-1,i}} +  \zeta}}\right].
\end{flalign}

Taking expectations and telescoping (\ref{eq:7eq4}) for $t=1$ to $T$, and based on (\ref{eq:7eq5}), (\ref{eq:7eq6}), Lemma \ref{lemma:5} and Lemma \ref{lemma:6}, we can show that
\begin{flalign}\label{eq:7eq7}
 &\Bigg(\frac{\eta (1-\beta_1)}{C_1}-\frac{\eta(1-\beta_1)}{C_1C_2}\left(\frac{1}{2\alpha_0}+\frac{\alpha_0}{2\alpha_1}+\frac{\eta \alpha_0\sqrt{d}D_1L_1(1-\beta_1)}{\sqrt{1-\beta_2}C_2}\right)-\frac{\eta \beta_1(1-\beta_1)\sqrt{\zeta}}{2\alpha_3C_1C_2}-\frac{\eta L_1(1-C_1)^2}{2\alpha_4C_1^2}\nonumber\\
 -&\frac{\eta L_1(1-C_1)}{2\alpha_4C_1^2}-\frac{\sqrt{d}L_1}{2}\left(1+L_1\frac{1-C_1}{C_1}\eta\sqrt{d} \frac{1-\beta_1}{\sqrt{1-\beta_2}C_2}\right)\left(\frac{\eta}{\alpha_4C_1^2}+\frac{\eta(2-C_1)^2}{\alpha_4C_1^2}\right)
 \Bigg)\nonumber\\
 &\times \sum_{t=1}^T\sum_{i=1}^d\frac{(\partial_i f(\boldsymbol x_t))^2}{\sqrt{{\beta_2\boldsymbol v_{t-1,i}} + \zeta}}\nonumber\\  
 \le&f(\boldsymbol u_1)-\mathbb E[f(\boldsymbol u_{T+1})|\mathcal F_t]+\sum_{i=1}^d\frac{\eta (1-\beta_1)}{C_1C_2}\nonumber\\
 &\times \Bigg(\frac{\alpha_0D_0}{2}\Bigg(\frac{1}{\sqrt{ \zeta}}+T \frac{1-\sqrt{\beta_2}}{\sqrt{\zeta}}\Bigg)+\frac{\alpha_0 D_1}{2}\left(\frac{(\partial_i f(\boldsymbol x_{1}))^2}{\sqrt{ \zeta}}+\sum_{t=1}^{T-1}(1-{\beta_2}) \mathbb E\left[\frac{(\partial_i f(\boldsymbol x_{1}))^2}{\sqrt{\beta_2{ \boldsymbol v_{t-1,i}} +  \zeta}}\right]\right)\Bigg)\nonumber\\
 &+T\frac{\alpha_0\alpha_1D_1^2L_0^2\eta^3d^2(1-\beta_1)^3}{2(1-\beta_2)C_1C_2^3\sqrt{\zeta}}+T\frac{\alpha_3\eta d\beta_1(1-\beta_1)(1-\beta_2)}{2C_1C_2}\nonumber\\
 &+\Bigg(\frac{\eta^22(1-C_1)^2\sqrt{d}L_0}{C_1^2} + \frac{\eta^2(2-C_1)\sqrt{d}L_0}{C_1^2} +\frac{dL_1L_0}{2}\frac{1-C_1}{C_1}\eta \frac{1-\beta_1}{\sqrt{1-\beta_2}C_2}\frac{2(1-C_1)^2+2}{C_1^2}\eta^2\Bigg)\nonumber\\
 &\times\sum_{i=1}^d\mathbb E\Bigg[ \frac{(1-\beta_1)^2}{(1-\frac{\beta_1}{\sqrt{\beta_2}})^2(1-\beta_2)}\left(\ln\left(\frac{\boldsymbol v_{T,i}}{\boldsymbol v_{0,i}}\right)-T\ln(\beta_2)\right)\Bigg]\nonumber\\
 &+\Bigg(\frac{\alpha_4\eta^3(1-\beta_1)^2dL_1((1-C_1)+(1-C_1)^2)}{2(1-\beta_2)C_1^2C_2^2}\nonumber\\
 &+\frac{\sqrt{d}L_1}{2}\left(1+L_1\frac{1-C_1}{C_1}\eta\sqrt{d} \frac{1-\beta_1}{\sqrt{1-\beta_2}C_2}\right)\frac{2+2(2-C_1)^2}{C_1^2}\frac{\alpha_4\eta^3(1-\beta_1)^2}{2(1-\beta_2)C_2^2}\Bigg)\nonumber\\
 &\times \sum_{i=1}^d\mathbb E\left[\frac{(1-\beta_1)^2}{(1-\frac{\beta_1}{\sqrt[4]{\beta_2}})^2}\left(\frac{2}{1-\beta_2}(\sqrt{\boldsymbol v_{T,i}}-\sqrt{\boldsymbol v_{0,i}})+\sum_{t=1}^T2\sqrt{\boldsymbol v_{t-1,i}}\right)\right].
\end{flalign}
Moreover, for any $a>0$ we have that $\ln(a)\le a-1$. We then have that
\begin{flalign}\label{eq:7eq8}
    \frac{1}{T}\mathbb E[\ln(\boldsymbol v_{T,i})]=\frac{2}{T}\mathbb E[\ln(\sqrt{\boldsymbol v_{T,i}})]&\le \frac{2}{T}\mathbb E[\sqrt{\boldsymbol v_{T,i}}]\nonumber\\
    &\le \frac{2\sqrt{D_0+\boldsymbol v_{0,i}}}{T}+\frac{2}{T}\sum_{i=0}^{T-1} \mathbb E\left[ \sqrt{{}\beta_2^{i}(1-\beta_2)D_1}|\partial_i f(\boldsymbol x_{T-i})|\right]\nonumber\\
    &\le \frac{2\sqrt{D_0+\boldsymbol v_{0,i}}}{T}+\frac{2}{T}\sum_{i=0}^{T-1} \mathbb E\left[ \sqrt{{}(1-\beta_2)D_1}|\partial_i f(\boldsymbol x_{T-i})|\right],
\end{flalign}
where the second inequality is due to \eqref{eq:3recursive}. In addition, for any $\beta_2\ge 0.5$, we have that
\begin{flalign}\label{eq:7eq9}
    -\ln(\beta_2)=\ln\left(\frac{1}{\beta_2}\right)\le \frac{1}{\beta_2}-1\le 2(1-\beta_2).
\end{flalign}
Combining \eqref{eq:3recursive2}, \eqref{eq:7eq7}, \eqref{eq:7eq8} and \eqref{eq:7eq9}, we then can show $\sum_{i=1}^d\sum_{t=1}^T\mathbb E\left[\frac{(\partial_i f(\boldsymbol x_t))^2}{\sqrt{\beta_2 \boldsymbol v_{t-1,i}+\zeta}}\right]$ is upper bounded by a function of $\sum_{t=1}^T\mathbb E\left[{\|\nabla f(\boldsymbol x_t)\|}\right]$. Specifically, we can get
\begin{flalign}\label{eq:7eq10}
 &\sum_{t=1}^T\sum_{i=1}^d\mathbb E\left[\frac{(\partial_i f(\boldsymbol x_t))^2}{\sqrt{{\beta_2\boldsymbol v_{t-1,i}} + \zeta}}\right]\times \Bigg(\frac{\eta (1-\beta_1)}{C_1}-\frac{\eta(1-\beta_1)}{C_1C_2}\left(\frac{1}{2\alpha_0}+\frac{\alpha_0}{2\alpha_1}+\frac{\eta \alpha_0\sqrt{d}D_1L_1(1-\beta_1)}{\sqrt{1-\beta_2}C_2}\right)\nonumber\\
 &-\frac{\eta \beta_1(1-\beta_1)\sqrt{\zeta}}{2\alpha_3C_1C_2}-\frac{\eta L_1(1-C_1)^2}{2\alpha_4C_1^2}-\frac{\eta L_1(1-C_1)}{2\alpha_4C_1^2}\nonumber\\
 &-\frac{\sqrt{d}L_1}{2}\left(1+L_1\frac{1-C_1}{C_1}\eta\sqrt{d} \frac{1-\beta_1}{\sqrt{1-\beta_2}C_2}\right)\left(\frac{\eta}{\alpha_4C_1^2}+\frac{\eta(2-C_1)^2}{\alpha_4C_1^2}\right)-\frac{\eta (1-\beta_1)}{C_1C_2}\frac{\alpha_0 D_1(1-\beta_2)}{2}
 \Bigg)\nonumber\\
 \le&f(\boldsymbol u_1)-\mathbb E[f(\boldsymbol u_{T+1})|\mathcal F_t]+ \frac{\eta \alpha_0dD_0(1-\beta_1)}{2C_1C_2\sqrt{\zeta}}+ \frac{\eta \alpha_0D_1(1-\beta_1)\|\nabla f(\boldsymbol x_1)\|^2}{2C_1C_2\sqrt{\zeta}}\nonumber\\
 &+T\frac{\eta d\alpha_0D_0 (1-\beta_1)(1-\beta_2)}{2C_1C_2\sqrt{\zeta}}
 +T\frac{\alpha_0\alpha_1D_1^2L_0^2\eta^3d^2(1-\beta_1)^3}{2(1-\beta_2)C_1C_2^3\sqrt{\zeta}}+T\frac{\alpha_3\eta d\beta_1(1-\beta_1)(1-\beta_2)}{2C_1C_2}\nonumber\\
  &+\Bigg(\frac{\eta^22(1-C_1)^2\sqrt{d}L_0}{C_1^2} + \frac{\eta^2(2-C_1)\sqrt{d}L_0}{C_1^2} +\frac{dL_1L_0}{2}\frac{1-C_1}{C_1}\eta \frac{1-\beta_1}{\sqrt{1-\beta_2}C_2}\frac{2(1-C_1)^2+2}{C_1^2}\eta^2\Bigg)\nonumber\\
  &\times \frac{(1-\beta_1)^2}{(1-\frac{\beta_1}{\sqrt{\beta_2}})^2(1-\beta_2)}\sum_{i=1}^d\left(2\sqrt{D_0+\boldsymbol v_{0,i}}+\sum_{t=1}^T\mathbb E[2\sqrt{(1-\beta_2)D_1}|\partial_i f(\boldsymbol x_t)|]+2T(1-\beta_2)-\ln(\boldsymbol v_{0,i})\right)\nonumber\\
  &+\Bigg(\frac{\alpha_4\eta^3(1-\beta_1)^2dL_1((1-C_1)+(1-C_1)^2)}{2(1-\beta_2)C_1^2C_2^2}\nonumber\\
  &+\frac{\sqrt{d}L_1}{2}\left(1+L_1\frac{1-C_1}{C_1}\eta\sqrt{d} \frac{1-\beta_1}{\sqrt{1-\beta_2}C_2}\right)\frac{2+2(2-C_1)^2}{C_1^2}\frac{\alpha_4\eta^3(1-\beta_1)^2}{2(1-\beta_2)C_2^2}\Bigg)\nonumber\\
   &\times\sum_{i=1}^d \frac{(1-\beta_1)^2}{(1-\frac{\beta_1}{\sqrt[4]{\beta_2}})^2}\Bigg(\frac{2}{1-\beta_2}\left(\sqrt{D_0+\boldsymbol v_{0,i}}+\sum_{t=1}^T\mathbb E[\sqrt{(1-\beta_2)D_1}|\partial_i f(\boldsymbol x_t)|]\right)\nonumber\\
   &+2T\sqrt{D_0+\boldsymbol v_{0,i}}+\frac{4\sqrt{D_1}}{\sqrt{1-\beta_2}}\sum_{t=1}^T\mathbb E[|\partial_i f(\boldsymbol x_t)|]\Bigg).
\end{flalign}
If we have 
$\alpha_0\ge \frac{21}{2C_2},\alpha_1\ge \frac{21\alpha_0}{2C_2},\alpha_3\ge \frac{7\beta_1\sqrt{\zeta}}{2C_2}, \alpha_4\ge \frac{14L_1\sqrt{d}(2-C_1)^2}{C_1(1-\beta_1)}$, for 
$\eta\le \min\left(\frac{C_2^2\sqrt{1-\beta_2}}{21\alpha_0\sqrt{d}D_1L_1(1-\beta_1)}, \frac{C_1C_2\sqrt{1-\beta_2}}{\sqrt{d}L_1(1-C_1)(1-\beta_1)}\right)$, $1-\beta_2\le (\frac{2C_2}{7\alpha_0D_1})$, then it can be shown that
\begin{flalign}
 &\frac{\eta (1-\beta_1)}{C_1}-\frac{\eta(1-\beta_1)}{C_1C_2}\left(\frac{1}{2\alpha_0}+\frac{\alpha_0}{2\alpha_1}+\frac{\eta \alpha_0\sqrt{d}D_1L_1(1-\beta_1)}{\sqrt{1-\beta_2}C_2}\right)-\frac{\eta \beta_1(1-\beta_1)\sqrt{\zeta}}{2\alpha_3C_1C_2}\nonumber\\
 -&\frac{\eta L_1(1-C_1)^2}{2\alpha_4C_1^2}-\frac{\eta L_1(1-C_1)}{2\alpha_4C_1^2}-\frac{\eta (1-\beta_1)}{C_1C_2}\frac{\alpha_0 D_1(1-\beta_2)}{2}\nonumber\\
 -&\frac{\sqrt{d}L_1}{2}\left(1+L_1\frac{1-C_1}{C_1}\eta\sqrt{d} \frac{1-\beta_1}{\sqrt{1-\beta_2}C_2}\right)\left(\frac{\eta}{\alpha_4C_1^2}+\frac{\eta(2-C_1)^2}{\alpha_4C_1^2}\right)
 \ge \frac{\eta (1-\beta_1)}{7C_1}.
\end{flalign}
Let $\Delta'=f(\boldsymbol u_1)-\inf_{\boldsymbol{x}}f(\boldsymbol{x})+ \frac{\eta \alpha_0dD_0(1-\beta_1)}{2C_1C_2\sqrt{\zeta}}+ \frac{\eta \alpha_0D_1(1-\beta_1)\|\nabla f(\boldsymbol x_1)\|^2}{2C_1C_2\sqrt{\zeta}}$,
$C_3=\Big(\frac{2(1-C_1)^2\sqrt{d}L_0}{C_1^2} + \frac{(2-C_1)\sqrt{d}L_0}{C_1^2} +{\sqrt{d}L_0}{}\frac{(1-C_1)^2+1}{C_1^2}\Big)$, 
$C_4=\left(\frac{\alpha_4(1-\beta_1)^2dL_1((1-C_1)+(1-C_1)^2)}{2C_1^2C_2^2}+{\sqrt{d}L_1}\frac{2+2(2-C_1)^2}{C_1^2}\frac{\alpha_4(1-\beta_1)^2}{2C_2^2}\right)$,
we then have that
\begin{flalign}\label{eq:ll90}
    &\frac{\eta(1-\beta_1)}{7C_1}\sum_{i=1}^d\sum_{t=1}^T\mathbb E\left[\frac{(\partial_i f(\boldsymbol x_t))^2}{\sqrt{\beta_2 \boldsymbol v_{t-1,i}+\zeta}}\right]\nonumber\\
    \le& \Delta'+T\frac{\eta d\alpha_0D_0 (1-\beta_1)(1-\beta_2)}{2C_1C_2\sqrt{\zeta}}
 +T\frac{\alpha_0\alpha_1D_1^2L_0^2\eta^3d^2(1-\beta_1)^3}{2(1-\beta_2)C_1C_2^3\sqrt{\zeta}}+T\frac{\alpha_3\eta d\beta_1(1-\beta_1)(1-\beta_2)}{2C_1C_2}\nonumber\\
    &+\frac{\eta^2C_3(1-\beta_1)^2}{C_1^2(1-\beta_2)}\sum_{i=1}^d \left(2\sqrt{D_0+\boldsymbol v_{0,i}}+\sum_{t=1}^T\mathbb E[2\sqrt{(1-\beta_2)D_1}|\partial_i f(\boldsymbol x_t)|]+2T(1-\beta_2)-\ln(\boldsymbol v_{0,i})\right)\nonumber\\
    &+\frac{\eta^3C_4}{(1-\beta_2)}\frac{(1-\beta_1)^2}{(1-\frac{\beta_1}{\sqrt[4]{\beta_2}})^2}\times\sum_{i=1}^d \Bigg(\frac{2}{1-\beta_2}\left(\sqrt{D_0+\boldsymbol v_{0,i}}+\sum_{t=1}^T\mathbb E[\sqrt{(1-\beta_2)D_1}|\partial_i f(\boldsymbol x_t)|]\right)\nonumber\\
       &+2T\sqrt{D_0+\boldsymbol v_{0,i}}+\frac{4\sqrt{D_1}}{\sqrt{1-\beta_2}}\sum_{t=1}^T\mathbb E[|\partial_i f(\boldsymbol x_t)|]\Bigg).
\end{flalign}
By rearranging the items in \eqref{eq:ll90}, it further follows that
\begin{flalign}\label{eq:ll91}
    &\frac{1}{T}\sum_{t=1}^T\mathbb E\left[\frac{\|\nabla f(\boldsymbol x_t)\|^2}{\sqrt{\beta_2 \|\boldsymbol v_{t-1}\|+\zeta}}\right]\nonumber\\
    \le &\frac{1}{T}\sum_{i}^d\sum_{t=1}^T\mathbb E\left[\frac{(\partial_i f(\boldsymbol x_t))^2}{\sqrt{\beta_2 \boldsymbol v_{t-1,i}+\zeta}}\right]\nonumber\\
    \le & \frac{1}{T}\left(\frac{7C_1\Delta'}{\eta(1-\beta_1)}+\frac{7\eta C_3(1-\beta_1)}{C_1(1-\beta_2)}(2d\sqrt{D_0+\|\boldsymbol v_{0}\|}-\sum_{i=1}^d\ln(\boldsymbol v_{0,i}))+\frac{14\eta^2C_1C_4(1-\beta_1)d}{(1-\beta_2)^2(1-\frac{\beta_1}{\sqrt[4]{\beta_2}})^2}\sqrt{D_0+\|\boldsymbol v_{0}\|} \right)\nonumber\\
&+\frac{ 7\alpha_0dD_0(1-\beta_2)}{2C_2\sqrt{\zeta}}+\frac{7\alpha_0\alpha_1D_1^2L_0^2\eta^2d^2(1-\beta_1)^2}{2(1-\beta_2)C_2^3\sqrt{\zeta}}+\frac{7\alpha_3 \beta_1(1-\beta_2)d}{2C_2}+\frac{14\eta dC_3(1-\beta_1)}{C_1}\nonumber\\
&+\frac{14\eta^2C_1C_4(1-\beta_1)d\sqrt{D_0+\|\boldsymbol v_0\|}}{(1-\beta_2)(1-\frac{\beta_1}{\sqrt[4]{\beta_2}})^2}\nonumber\\
    &+\left(\frac{14\eta C_3(1-\beta_1)\sqrt{D_1}}{C_1\sqrt{1-\beta_2}}
+\frac{42\eta^2C_1C_4(1-\beta_1)\sqrt{D_1}}{(1-\beta_2)^{1.5}(1-\frac{\beta_1}{\sqrt[4]{\beta_2}})^2}\right)\left(\frac{\sum_{t=1}^T\sqrt{d}\mathbb E[\|\nabla f(\boldsymbol x_t)\|]}{T}\right).
\end{flalign}
For $\eta\le C_5(1-\beta_2)$, (where $C_5 >0$ and will be introduced in Appendix \ref{proof:theorem2}),  $C_6=\min\Bigg(\frac{C_2\sqrt{\zeta}}{21\alpha_0dD_0},\frac{C_2^3\sqrt{\zeta}}{21\alpha_0\alpha_1D_1^2L_0^2(1-\beta_1)^2d^2C_5^2},\frac{C_2}{21\alpha_3d\beta_1},
\frac{C_1}{84C_3C_5d(1-\beta_1)},\frac{(1-\frac{\beta_1}{\sqrt[4]{\beta_2}})^2}{84C_1C_4C_5^2(1-\beta_1)d\sqrt{D_0+\|\boldsymbol v_0\|}},\frac{C_1^2}{784C_3^2C_5^2(1-\beta_ 1)^2dD_1},\\ \frac{(1-\frac{\beta_1}{\sqrt[4]{\beta_2}})^4}{7056C_1^2C_4^2C_5^4dD_1}\Bigg)$,
 $1-\beta_2\le C_6 \epsilon^2$, 
and $T\ge\max\Bigg(\frac{126C_1\Delta'}{\eta(1-\beta_1)\epsilon^2}, \frac{126 C_3C_5(1-\beta_1)}{C_1}(2d\sqrt{D_0+ \|\boldsymbol v_0\|}-\sum_{i=1}^d\ln(\boldsymbol v_{0,i}))\epsilon^{-2}, \frac{252C_5^2C_1C_4(1-\beta_1)d}{(1-\frac{\beta_1}{\sqrt[4]{\beta_2}})^2}\sqrt{D_0+\|\boldsymbol v_0\|}\epsilon^{-2} \Bigg)$,
\eqref{eq:ll91} can be further written as 
\begin{flalign}\label{eq:7bound}
   &\frac{1}{T}\sum_{t=1}^T\mathbb E\left[\frac{\|\nabla f(\boldsymbol x_t)\|^2}{\sqrt{\beta_2 \|\boldsymbol v_{t-1}\|+\zeta}}\right]\nonumber\\
   &\le \epsilon^2+\left(\frac{14\eta C_3(1-\beta_1)\sqrt{dD_1}}{C_1\sqrt{1-\beta_2}}
+\frac{42\eta^2C_1C_4(1-\beta_1)\sqrt{dD_1}}{(1-\beta_2)^{1.5}(1-\frac{\beta_1}{\sqrt[4]{\beta_2}})^2}\right)\left(\frac{\sum_{t=1}^T\mathbb E[\|\nabla f(\boldsymbol x_t)\|]}{T}\right).
\end{flalign}
Since we have that $\eta\le C_5(1-\beta_2)$ and $1-\beta_2\le C_6\epsilon^2,$ thus we have that
\begin{flalign}
   &\frac{1}{T}\sum_{t=1}^T\mathbb E\left[\frac{\|\nabla f(\boldsymbol x_t)\|^2}{\sqrt{\beta_2 \|\boldsymbol v_{t-1}\|+\zeta}}\right]
   \le \epsilon^2+\epsilon\left(\frac{\sum_{t=1}^T\mathbb E[\|\nabla f(\boldsymbol x_t)\|]}{T}\right).
\end{flalign}

This completes the proof.

\section{Formal Version of Theorem \ref{theorem:2} and Its Proof}\label{proof:theorem2}
For $\alpha_0\ge \frac{21}{2C_2},\alpha_1\ge \frac{21\alpha_0}{2C_2},\alpha_3\ge \frac{7\beta_1\sqrt{\zeta}}{2C_2}, \alpha_4\ge \frac{14L_1\sqrt{d}(2-C_1)^2}{C_1(1-\beta_1)}$, $C_1, C_2$ defined in Appendix \ref{proof:lemma3}, define
\begin{flalign}
    \Delta'=&f(\boldsymbol u_1)-\inf_{\boldsymbol{x}}f(\boldsymbol{x})+ \frac{\eta \alpha_0dD_0(1-\beta_1)}{2C_1C_2\sqrt{\zeta}}+ \frac{\eta \alpha_0D_1(1-\beta_1)\|\nabla f(\boldsymbol x_1)\|^2}{2C_1C_2\sqrt{\zeta}},\nonumber\\
    C_3=&\Bigg(\frac{2(1-C_1)^2\sqrt{d}L_0}{C_1^2} + \frac{(2-C_1)\sqrt{d}L_0}{C_1^2} +{\sqrt{d}L_0}{}\frac{(1-C_1)^2+1}{C_1^2}\Bigg),\nonumber\\
    C_4=&\left(\frac{\alpha_4(1-\beta_1)^2dL_1((1-C_1)+(1-C_1)^2)}{2C_1^2C_2^2}+{\sqrt{d}L_1}\frac{2+2(2-C_1)^2}{C_1^2}\frac{\alpha_4(1-\beta_1)^2}{2C_2^2}\right),\nonumber\\
    C_5=&\min\left(\frac{C_1}{112C_3(1-\beta_1)dD_1},\frac{1-\frac{\beta_1}{\sqrt[4]{\beta_2}}}{168D_1C_1C_4(1-\beta_1)d}\right),\nonumber\\
    C_6=&\min\Bigg(\frac{C_2\sqrt{\zeta}}{21\alpha_0dD_0},\frac{C_2^3\sqrt{\zeta}}{21\alpha_0\alpha_1D_1^2L_0^2(1-\beta_1)^2d^2C_5^2},\frac{C_2}{21\alpha_3d\beta_1}, \nonumber\\
    &\frac{C_1}{84C_3C_5d(1-\beta_1)},\frac{(1-\frac{\beta_1}{\sqrt[4]{\beta_2}})^2}{84C_1C_4C_5^2(1-\beta_1)d\sqrt{D_0+\|\boldsymbol v_0\|}},\frac{C_1^2}{784C_3^2C_5^2(1-\beta_ 1)^2dD_1}, \frac{(1-\frac{\beta_1}{\sqrt[4]{\beta_2}})^4}{7056C_1^2C_4^2C_5^4dD_1}\Bigg),\nonumber\\
    \Lambda_4=&\min\left(\frac{C_2^2}{21\alpha_0\sqrt{d}D_1L_1(1-\beta_1)}, \frac{C_1C_2}{
\sqrt{d}L_1(1-C_1)(1-\beta_1)}\right),\nonumber\\
\Lambda_5= &\left(\frac{126 C_3C_5(1-\beta_1)}{C_1}(2d\sqrt{D_0+\|\boldsymbol v_0\|}-\sum_{i=1}^d\ln(\boldsymbol v_{0,i}))\right),\nonumber\\
\Lambda_6= &\left( \frac{252C_5^2C_1C_4(1-\beta_1)d}{(1-\frac{\beta_1}{\sqrt[4]{\beta_2}})^2}\sqrt{D_0+\|\boldsymbol v_0\|} \right).\nonumber
\end{flalign}
We then have the following theorem:
\begin{theorem}
    Let Assumptions \ref{assump:lowerbound}, \ref{assump:variance} and \ref{assump:generalsmooth} hold. 
Let $1-\beta_2= \min\left(\frac{2C_2}{7\alpha_0D_1},C_6 \epsilon^2\right)= \mathcal O(\epsilon^2)$, $0<\beta_1\le \sqrt{\beta_2}<1$, 
$\eta\le \min\left(\Lambda_4\sqrt{1-\beta_2},C_5(1-\beta_2)\right)= \mathcal O(\epsilon^2)$, and
$T\ge\max\left(\frac{126C_1\Delta'}{\eta(1-\beta_1)\epsilon^2},\Lambda_5\epsilon^{-2},\Lambda_6 \epsilon^{-2}\right)= \mathcal O(\epsilon^{-4})$.
    For small $\epsilon$ such that $\epsilon\le \frac{\sqrt{2C_2}}{\sqrt{7\alpha_0C_6D_1}}$,
    we have that
 \begin{flalign}
     \frac{1}{T} \sum_{t=1}^T  \mathbb E[\|\nabla f(x_t)\|]\le \left(2c+\sqrt{2c}+\frac{4\sqrt{dD_1}}{\sqrt{C_6}}\right)\epsilon.
\end{flalign}
\end{theorem}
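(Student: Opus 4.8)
The plan is to run the three-stage argument of the RMSProp proof, now fed with the Adam-specific Stage~I estimate. First I would invoke Lemma~\ref{co:2}: under the prescribed $\beta_1,\beta_2,\eta,T$ it gives
\[
\frac{1}{T}\sum_{t=1}^T\mathbb{E}\!\left[\frac{\|\nabla f(\boldsymbol x_t)\|^2}{\sqrt{\beta_2\|\boldsymbol v_{t-1}\|+\zeta}}\right]\le \epsilon^2+\epsilon\,e,\qquad e:=\frac{1}{T}\sum_{t=1}^T\mathbb{E}[\|\nabla f(\boldsymbol x_t)\|].
\]
The only new feature compared with the RMSProp estimate (Lemma~\ref{co:1}) is the additive term $\epsilon\,e$, which is the price of the first-order momentum and which must be carried through. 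Stage~II then needs nothing new: Lemma~\ref{lemma:2} depends only on Assumption~\ref{assump:variance} and the update of $\boldsymbol v_t$, hence applies to Adam verbatim, giving $\frac{1}{T}\sum_t\mathbb{E}[\sqrt{\beta_2\|\boldsymbol v_{t-1}\|+\zeta}]\le c+\frac{2\sqrt{dD_1}}{\sqrt{1-\beta_2}}\,e$.

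For Stage~III I would combine the two estimates via the Cauchy--Schwarz inequality~\eqref{eq:stage2}, reducing everything to the scalar inequality
\[
e^2\le\big(\epsilon^2+\epsilon\,e\big)\Big(c+\tfrac{2\sqrt{dD_1}}{\sqrt{1-\beta_2}}\,e\Big).
\]
The crucial point is that the admissibility condition $\epsilon\le\sqrt{2C_2}/\sqrt{7\alpha_0 C_6 D_1}$ forces the min defining $1-\beta_2$ onto its second branch, so $1-\beta_2=C_6\epsilon^2$ and $\tfrac{2\sqrt{dD_1}}{\sqrt{1-\beta_2}}=\tfrac{2\sqrt{dD_1}}{\sqrt{C_6}\,\epsilon}$; after expanding and using the defining bounds on $C_6$ to drive the coefficient of $e^2$ on the right below $\tfrac12$, the inequality collapses to $\tfrac12 e^2\le c\epsilon^2+\big(c+\tfrac{2\sqrt{dD_1}}{\sqrt{C_6}}\big)\epsilon\,e$. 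Solving this quadratic in $e$ and applying $\sqrt{a+b}\le\sqrt a+\sqrt b$ yields $e\le\big(2c+\sqrt{2c}+\tfrac{4\sqrt{dD_1}}{\sqrt{C_6}}\big)\epsilon$, the asserted bound. Finally, since the prescribed $\eta\sim\mathcal{O}(\epsilon^2)$, $1-\beta_2\sim\mathcal{O}(\epsilon^2)$ and $T\sim\mathcal{O}(\epsilon^{-4})$ are exactly the ranges required by Lemmas~\ref{co:2} and~\ref{lemma:2}, the $\mathcal{O}(\epsilon^{-4})$ iteration complexity follows at once.

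The real difficulty lies not in this final assembly but inside Lemma~\ref{co:2} (which we may assume here): the potential-function decomposition around $\boldsymbol u_t=(\boldsymbol x_t-\tfrac{\beta_1}{\sqrt{\beta_2}}\boldsymbol x_{t-1})/(1-\tfrac{\beta_1}{\sqrt{\beta_2}})$ leaves momentum remainders $\sum_t\boldsymbol m_{t,i}^2/\boldsymbol v_{t,i}$ and $\sum_t\boldsymbol m_{t,i}^2/\sqrt{\boldsymbol v_{t,i}}$ that cannot be bounded by constants and must be re-expressed, through Lemmas~\ref{lemma:5} and~\ref{lemma:6}, in terms of $\sum_t\mathbb{E}[\|\nabla f(\boldsymbol x_t)\|]$ --- this is precisely what produces the extra $\epsilon\,e$. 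Within the present statement, the one thing demanding care is the constant bookkeeping: one must verify that each $\mathcal{O}(\cdot)$ factor accumulated in Stages~I--II is, after the stated substitutions, dominated by the matching power of $\epsilon$, so that the right-hand $e^2$ term can indeed be absorbed and the resulting quadratic in $e$ remains non-degenerate.
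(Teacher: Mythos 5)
Your overall architecture is the paper's: Stage~I bound from Lemma~\ref{co:2}, Stage~II from Lemma~\ref{lemma:2} (which indeed transfers verbatim to Adam), H\"older/Cauchy--Schwarz as in \eqref{eq:stage2}, then a scalar quadratic in $e$, and your final quadratic $\tfrac12 e^2\le c\epsilon^2+\bigl(c+\tfrac{2\sqrt{dD_1}}{\sqrt{C_6}}\bigr)\epsilon e$ and its solution coincide with the paper's. However, there is a genuine gap in how you justify the absorption of the $e^2$ term. If you use Lemma~\ref{co:2} as a black box, i.e. the bound $\epsilon^2+\epsilon e$, then after multiplying by the Stage~II bound $c+\tfrac{2\sqrt{dD_1}}{\sqrt{1-\beta_2}}e$ with $1-\beta_2=C_6\epsilon^2$, the coefficient of $e^2$ on the right-hand side is exactly $\epsilon\cdot\tfrac{2\sqrt{dD_1}}{\sqrt{C_6}\,\epsilon}=\tfrac{2\sqrt{dD_1}}{\sqrt{C_6}}$, an $\epsilon$-independent constant. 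Your claim that ``the defining bounds on $C_6$'' drive this below $\tfrac12$ is not correct: $C_6$ is a minimum of small quantities, so $\tfrac{2\sqrt{dD_1}}{\sqrt{C_6}}$ is typically large, and no hypothesis of the theorem forces $\tfrac{2\sqrt{dD_1}}{\sqrt{C_6}}\le\tfrac12$ (indeed, if it did, the term $\tfrac{4\sqrt{dD_1}}{\sqrt{C_6}}$ in the final bound would be superfluous).

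The paper closes this gap by not using the simplified conclusion of Lemma~\ref{co:2} but the sharper intermediate inequality \eqref{eq:7bound}, in which the coefficient multiplying $e$ is the explicit quantity $A=\tfrac{14\eta C_3(1-\beta_1)\sqrt{dD_1}}{C_1\sqrt{1-\beta_2}}+\tfrac{42\eta^2C_1C_4(1-\beta_1)\sqrt{dD_1}}{(1-\beta_2)^{1.5}(1-\beta_1/\sqrt[4]{\beta_2})^2}$. Two separate facts are then used: $A\le\epsilon$ (this is what the relevant entries of $C_6$ guarantee) and, crucially, $A\cdot\tfrac{2\sqrt{dD_1}}{\sqrt{1-\beta_2}}\le\tfrac12$, which follows from the constraint $\eta\le C_5(1-\beta_2)$ together with the definition of $C_5$ --- not from $C_6$, and not from $A\le\epsilon$ alone. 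The second fact is strictly stronger than what the black-box form of Lemma~\ref{co:2} records, so to make your Stage~III rigorous you must either re-open the proof of Lemma~\ref{co:2} and carry the explicit coefficient $A$ (as the paper does), or restate Lemma~\ref{co:2} so that it exposes both properties of $A$. With that repair, the rest of your argument (branch selection in the min defining $1-\beta_2$, the quadratic solution via $\sqrt{a+b}\le\sqrt a+\sqrt b$, and the complexity count) goes through exactly as in the paper.
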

\begin{proof}
The proof follows similarly as the one in Appendix \ref{proof:theorem1}. According to \eqref{eq:7bound} in the proof of Corollary \ref{co:2}, we have that
\begin{flalign}
        &\frac{1}{T} \sum_{t=1}^T \mathbb E\left[\frac{ \|\nabla f(\boldsymbol x_t)\|^2}{\sqrt{{\beta_2 \|\boldsymbol v_{t-1}}\| + \zeta}} \right]\nonumber\\
        &\le \epsilon^2+\left(\frac{14\eta C_3(1-\beta_1)\sqrt{dD_1}}{C_1\sqrt{1-\beta_2}}
+\frac{42\eta^2C_1C_4(1-\beta_1)\sqrt{dD_1}}{(1-\beta_2)^{1.5}(1-\frac{\beta_1}{\sqrt[4]{\beta_2}})^2}\right)\left(\frac{\sum_{t=1}^T\mathbb E[\|\nabla f(\boldsymbol x_t)\|]}{T}\right).
\end{flalign}

According to Lemma \ref{lemma:2}, we have that
 \begin{flalign}
        &\left(\frac{1}{T} \sum_{t=1}^T  \mathbb E[\sqrt{{\beta_2 \|\boldsymbol v_{t-1}}\| + \zeta}]\right)\nonumber\\
        &\le  c
   +\frac{2\sqrt{dD_1}}{\sqrt{(1-\beta_2)}} \frac{\sum_{t=1}^T\mathbb E[\|\nabla f(\boldsymbol x_t)\|]}{T}.
    \end{flalign}

Define $e=\frac{1}{T} \sum_{t=1}^T  \mathbb E[\|\nabla f(\boldsymbol x_t)\|]$. By H\"older's inequality, we can show that
\begin{flalign}\label{eq:ll97}
    \left(\frac{1}{T} \sum_{t=1}^T  \mathbb E[\|\nabla f(\boldsymbol x_t)\|]\right)^2\le \left(\frac{1}{T} \sum_{t=1}^T \mathbb E\left[\frac{ \|\nabla f(\boldsymbol x_t)\|^2}{\sqrt{{\beta_2 \|\boldsymbol v_{t-1}}\| + \zeta}} \right]\right)\left(\frac{1}{T} \sum_{t=1}^T  \mathbb E[\sqrt{{\beta_2 \|\boldsymbol v_{t-1}}\| + \zeta}]\right).
\end{flalign}
By Lemma \ref{lemma:2} and Corollary \ref{co:2}, \eqref{eq:ll97} can be further written as 
\begin{flalign}
     e^2&\le \left(\epsilon^2+\left(\frac{14\eta C_3(1-\beta_1)\sqrt{dD_1}}{C_1\sqrt{1-\beta_2}}
+\frac{42\eta^2C_1C_4(1-\beta_1)\sqrt{dD_1}}{(1-\beta_2)^{1.5}(1-\frac{\beta_1}{\sqrt[4]{\beta_2}})^2}\right)e\right)\left(c+\frac{2\sqrt{dD_1}}{\sqrt{1-\beta_2}}e \right)\nonumber\\
    &\le c\epsilon^2+ce\epsilon+\frac{2\sqrt{dD_1}}{\sqrt{C_6}\epsilon}e\epsilon^2 +\frac{e^2}{2},
\end{flalign}
where the second inequality is due to the fact that $\left(\frac{14\eta C_3(1-\beta_1)\sqrt{dD_1}}{C_1\sqrt{1-\beta_2}}
+\frac{42\eta^2C_1C_4(1-\beta_1)\sqrt{dD_1}}{(1-\beta_2)^{1.5}(1-\frac{\beta_1}{\sqrt[4]{\beta_2}})^2}\right)\le \epsilon$ and $\left(\frac{14\eta C_3(1-\beta_1)\sqrt{dD_1}}{C_1\sqrt{1-\beta_2}}
+\frac{42\eta^2C_1C_4(1-\beta_1)\sqrt{dD_1}}{(1-\beta_2)^{1.5}(1-\frac{\beta_1}{\sqrt[4]{\beta_2}})^2}\right)\frac{2\sqrt{dD_1}}{\sqrt{1-\beta_2}}\le \frac{1}{2}$ if  $\eta\le  C_5(1-\beta_2)$, $1-\beta_2=\min\left(\frac{2C_2}{7\alpha_0D_1},C_6 \epsilon^2\right)= C_6\epsilon^2$ and $C_5=\min\left(\frac{C_1}{112C_3(1-\beta_1)dD_1},\frac{1-\frac{\beta_1}{\sqrt[4]{\beta_2}}}{168D_1C_1C_4(1-\beta_1)d}\right)$.

Thus, we have that $$ \frac{1}{T} \sum_{t=1}^T  \mathbb E[\|\nabla f(x_t)\|]=e\le\left(2c+\sqrt{2c}+\frac{4\sqrt{dD_1}}{\sqrt{C_6}}\right)\epsilon, $$
which completes the proof.
\end{proof}

\section{Experiments}
{
In this section, we provide numerical experiments to verify the coordinate-wise generalized smoothness and affine noise variance conditions. We follow the same setting of the LSTM language model \citep{zhang2019gradient} for the  Penn Treebank (PTB) \citep{mikolov2010recurrent} dataset. The model is a 3-layer LSTM language model with hidden size of $1150$ and embedding size of $400$. The training details follow \cite{merity2017regularizing}.

Given $\boldsymbol{x_t}$ and $\boldsymbol{x_{t+1}}$, we
estimate the coordinate-wise smoothness by 
\begin{flalign}
    L_{t,i}=\max_{\gamma\in \{\delta_1, \delta_2,...., \delta_N\}}\frac{|\partial_i f(\boldsymbol x_t+\gamma(\boldsymbol x_{t+1}-\boldsymbol x_t))-\partial_i f(\boldsymbol x_{t})|}{\gamma\|\boldsymbol x_t-\boldsymbol x_{t+11}\|},
\end{flalign}
where $\{\delta_1, \delta_2,...., \delta_N\}$ denotes for the sample locations.
We then show the training results for coordinate-wise smoothness vs. absolute gradient value in Fig. \ref{fig:combined}. 
In Fig. \ref{fig:combined2}, we plot the coordinate-wise gradient standard deviation vs. absolute gradient value. 
}
\begin{figure}[ht]
    \centering
    \begin{subfigure}{0.45\textwidth}
        \centering
        \includegraphics[width=\linewidth]{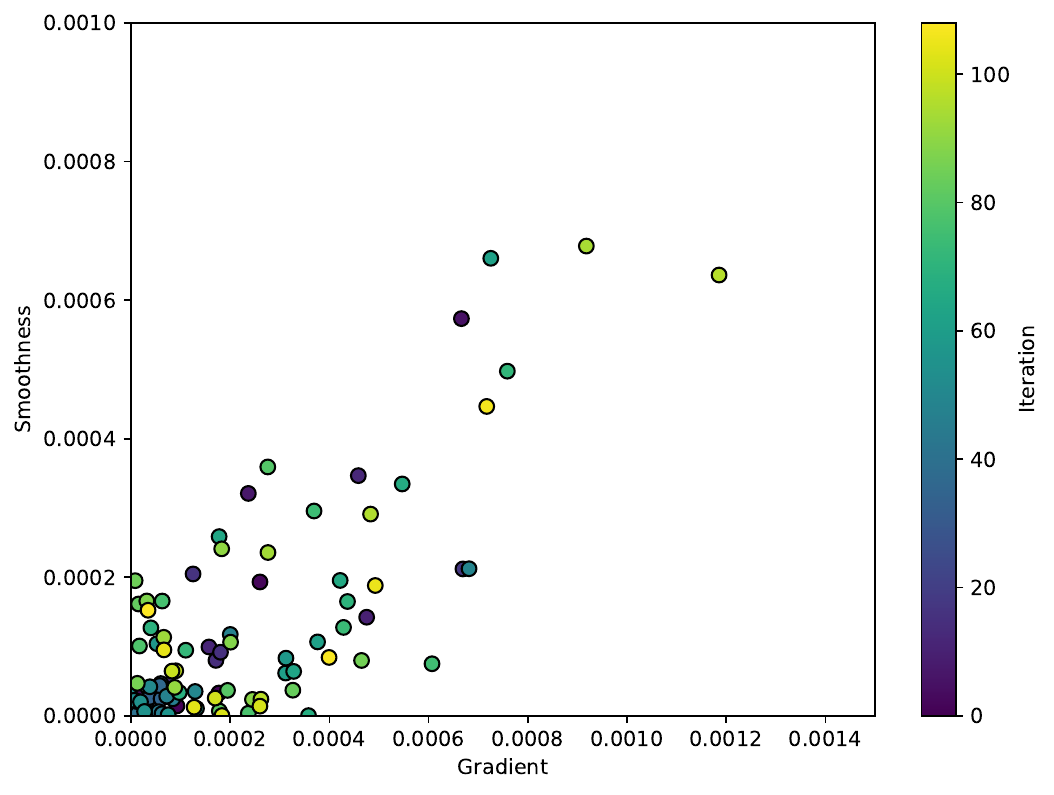} 
    \end{subfigure}
    \hfill
    \begin{subfigure}{0.45\textwidth}
        \centering
        \includegraphics[width=\linewidth]{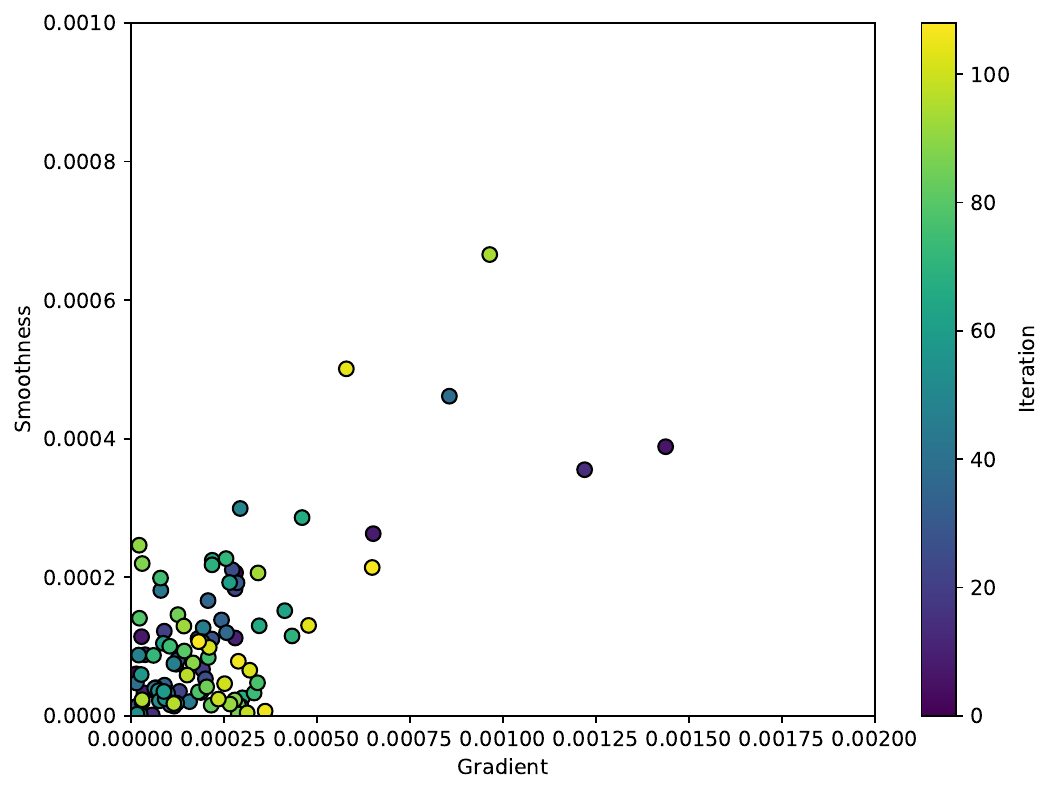} 
    \end{subfigure}    
    \begin{subfigure}{0.45\textwidth}
        \centering
        \includegraphics[width=\linewidth]{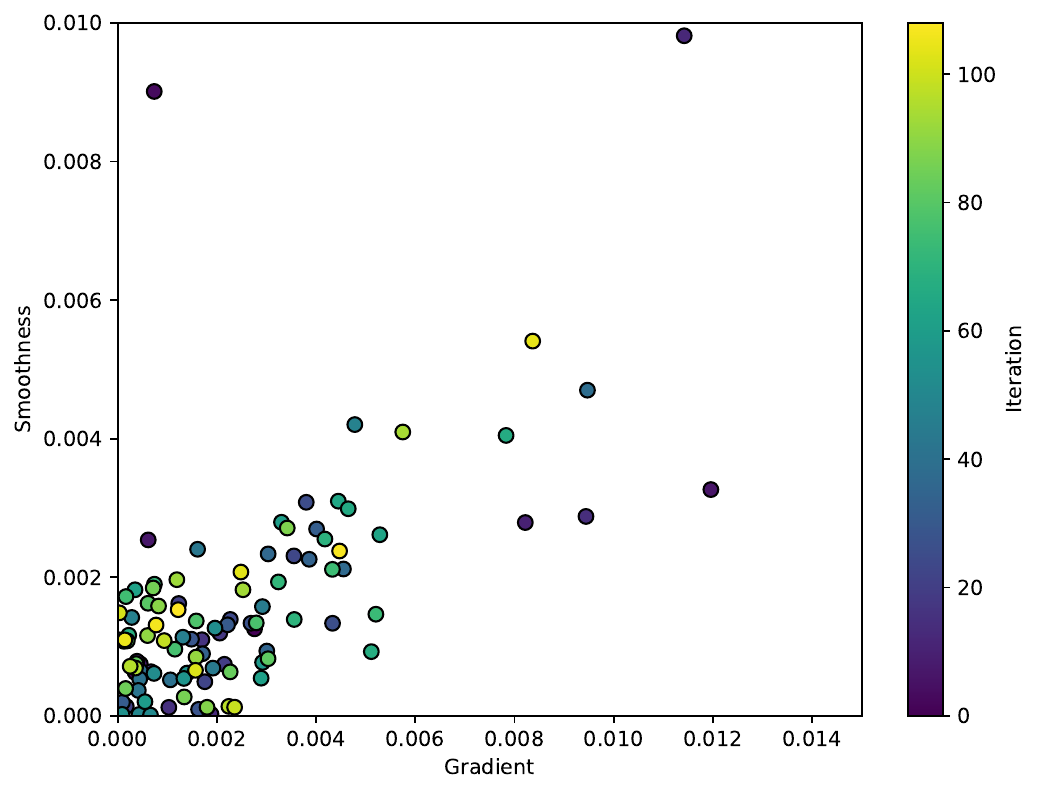} 
    \end{subfigure}
    \hfill
    \begin{subfigure}{0.45\textwidth}
        \centering
        \includegraphics[width=\linewidth]{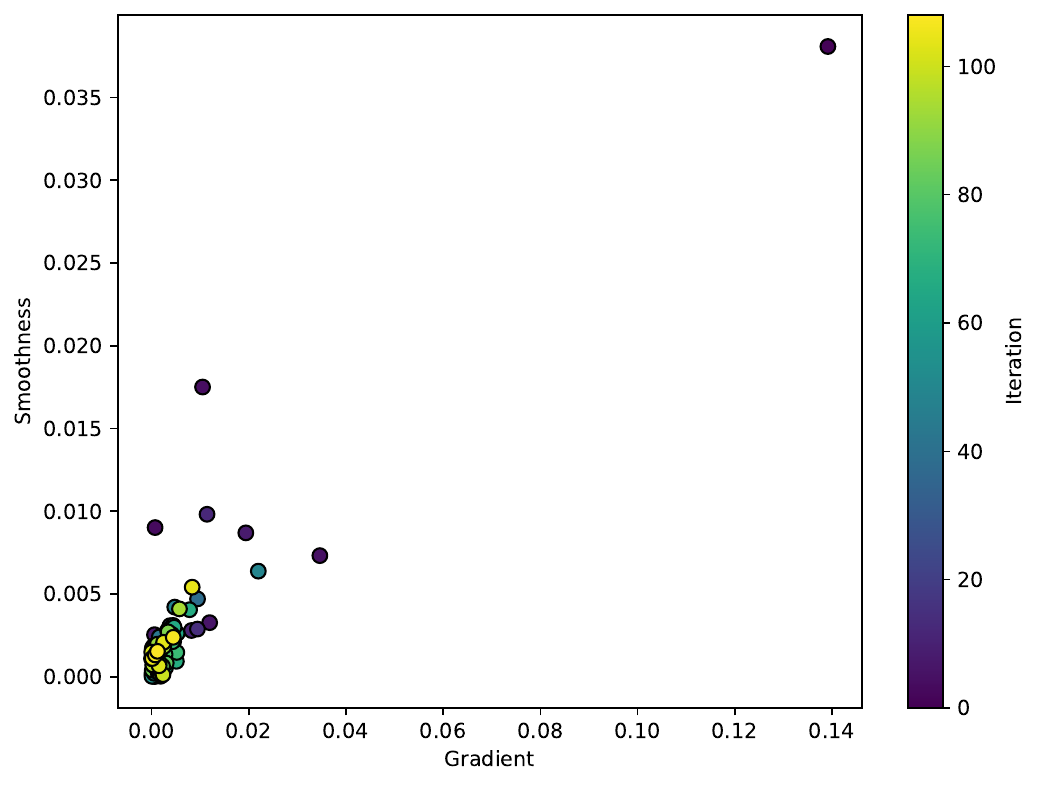} 
    \end{subfigure}
    \caption{Coordinate-wise smoothness vs. absolute gradient value on LSTM language model for the PTB datatset. Each figure presents one randomly selected coordinate.  }
    \label{fig:combined}
\end{figure}
\zhang{
\begin{figure}[ht]
    \centering
    \begin{subfigure}{0.45\textwidth}
        \centering
        \includegraphics[width=\linewidth]{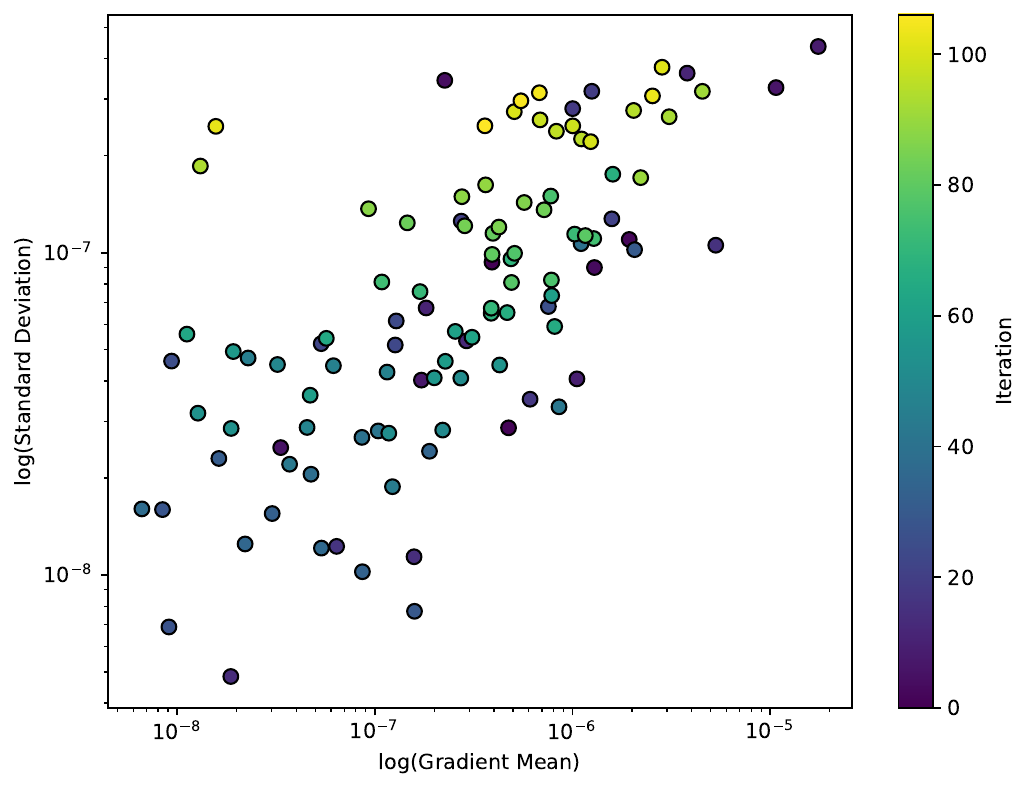} 
    \end{subfigure}
    \hfill
    \begin{subfigure}{0.45\textwidth}
        \centering
        \includegraphics[width=\linewidth]{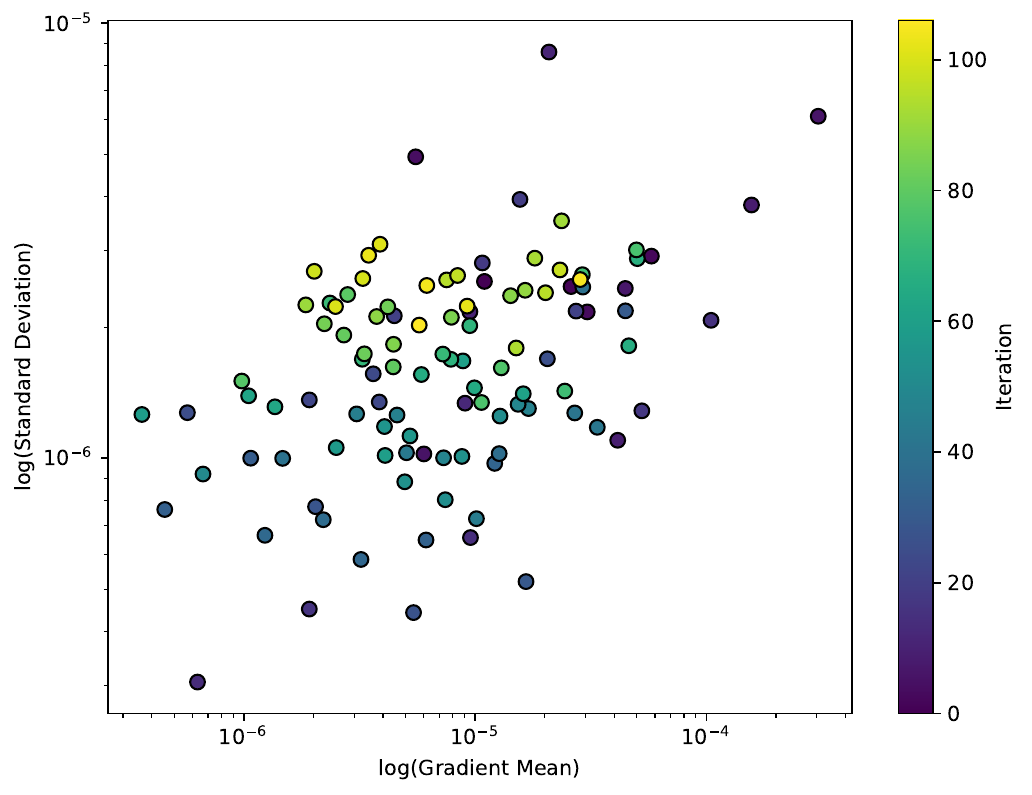} 
    \end{subfigure}    
    \begin{subfigure}{0.45\textwidth}
        \centering
        \includegraphics[width=\linewidth]{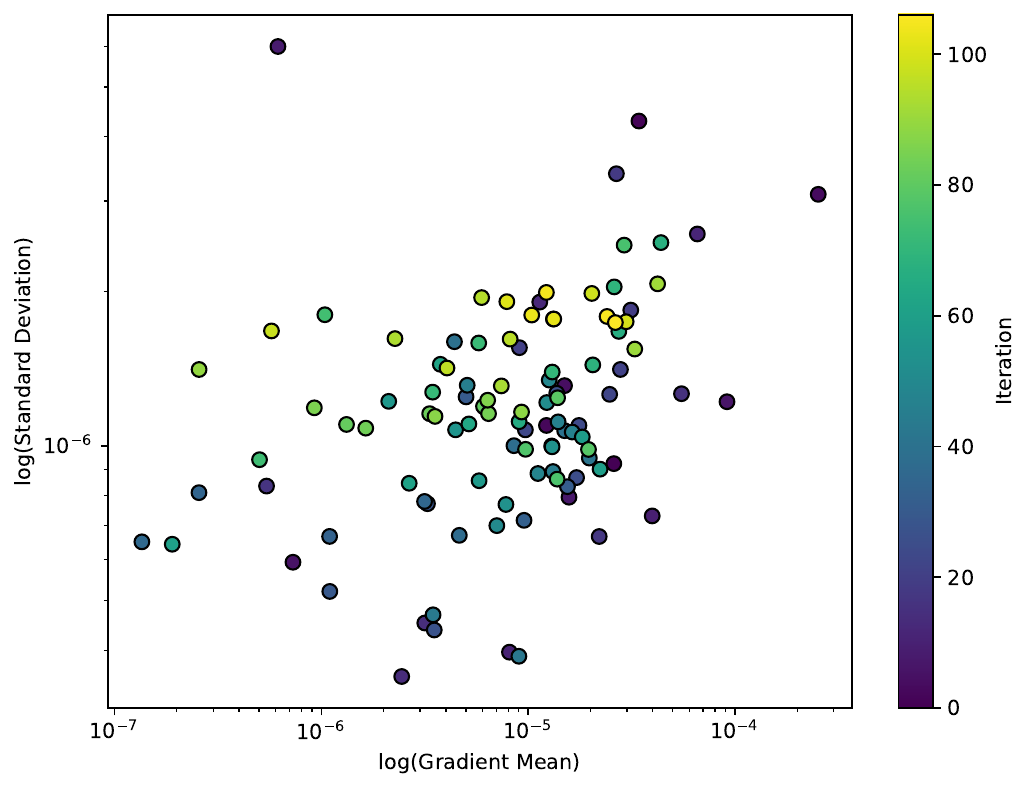} 
    \end{subfigure}
    \hfill
    \begin{subfigure}{0.45\textwidth}
        \centering
        \includegraphics[width=\linewidth]{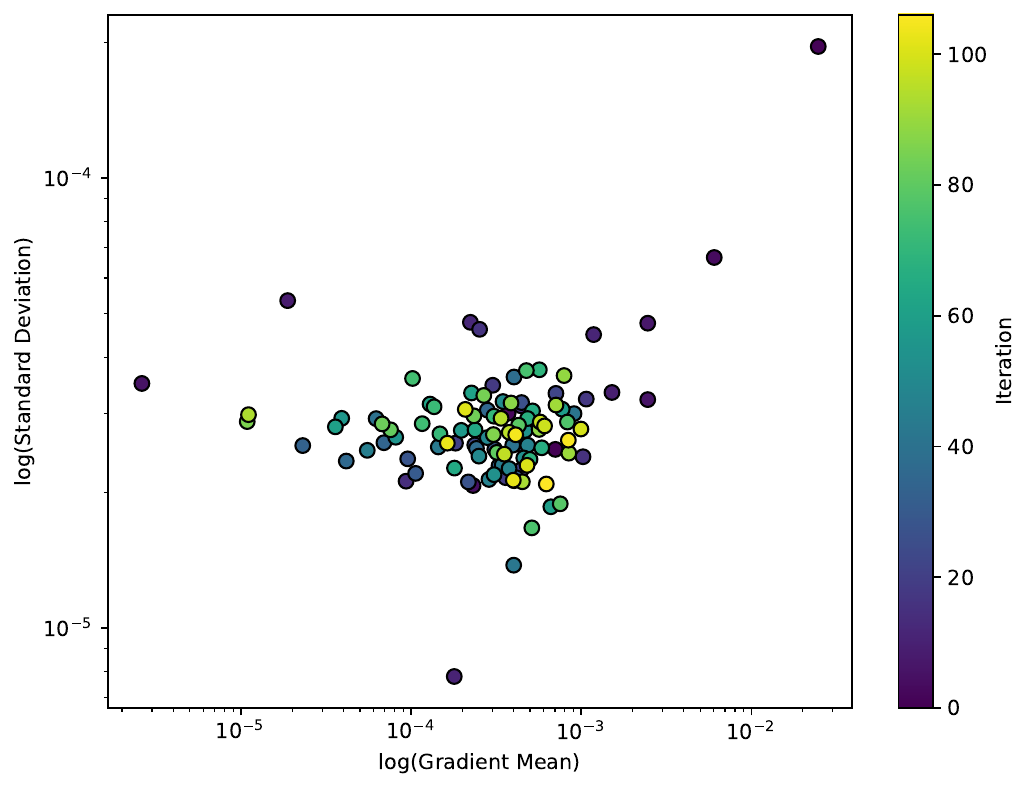} 
    \end{subfigure}
    \caption{Coordinate-wise gradient standard deviation vs. absolute gradient value on LSTM language model for the PTB datatset. Each figure presents one randomly selected coordinate.  }
    \label{fig:combined2}
\end{figure}

}

\end{document}